\def\eqref#1{equation~\ref{#1}}
\def\1{\bm{1}}
\DeclareMathAlphabet{\mathsfit}{\encodingdefault}{\sfdefault}{m}{sl}
\SetMathAlphabet{\mathsfit}{bold}{\encodingdefault}{\sfdefault}{bx}{n}
\def\sA{{\mathbb{A}}}
\DeclareMathOperator*{\argmin}{arg\,min}
\renewcommand{\cal}{\mathcal}
\newcommand\cA{{\mathcal A}}
\newcommand\cB{{\mathcal B}}
\newcommand{\cX}{{\cal X}}
\newcommand{\cY}{{\cal Y}}
\newcommand{\cD}{{\cal D}}
\newcommand\cH{{\mathcal H}}
\newcommand{\cN}{{\cal N}}
\newcommand{\cP}{{\cal P}}
\newcommand{\cF}{{\mathcal{ F}}}
\newcommand{\cZ}{{\mathcal{ Z}}}
\newcommand{\bE}{\mathbb{E}}
\newcommand{\bP}{\mathbb{P}}
\newcommand{\bR}{{\mathbb R}}
\renewcommand{\leq}{\leqslant}
\renewcommand{\geq}{\geqslant}
\newtheorem{theorem}{Theorem}[section]
\newtheorem{proposition}{Proposition}[section]
\newtheorem{lemma}{Lemma}[section]
\newtheorem{corollary}{Corollary}[section]
\newtheorem{remark}{Remark}[section]
\newtheorem{assumption}{Assumption}[section]
\newtheorem{definition}{Definition}[section]
\newtheorem{example}{Example}[section]
\def\sA{\mathscr{A}}
\def\htheta{\hat{\theta}}
\renewcommand{\backslash}{-}
\renewcommand{\textnormal}[1]{#1}
\newcommand{\hangfeng}[1]{\textcolor{brown}{[HF: #1]}}
\newcommand\hfch[1]{\textrm{\color{brown} #1}}
\icmltitlerunning{Generalization Bounds with Locally Elastic Stability}
\begin{document}

\twocolumn[
\icmltitle{Toward Better Generalization Bounds with Locally Elastic Stability}




\begin{icmlauthorlist}
\icmlauthor{Zhun Deng}{a}
\icmlauthor{Hangfeng He}{b}
\icmlauthor{Weijie J.~Su}{c}
\end{icmlauthorlist}

\icmlaffiliation{a}{Harvard University}
\icmlaffiliation{b}{Department of Computer and Information Science, University of Pennsylvania}
\icmlaffiliation{c}{Wharton Statistics Department, University of Pennsylvania}

\icmlcorrespondingauthor{Zhun Deng}{zhundeng@g.harvard.edu}

\icmlkeywords{Machine Learning, ICML}

\vskip 0.3in
]



\printAffiliationsAndNotice{}  

\begin{abstract}
Algorithmic stability is a key characteristic to ensure the generalization ability of a learning algorithm. Among different notions of stability, \emph{uniform stability} is arguably the most popular one, which yields exponential generalization bounds. However, uniform stability only considers the worst-case loss change (or so-called sensitivity) by removing a single data point, which is distribution-independent and therefore undesirable. There are many cases that the worst-case sensitivity of the loss is much larger than the average sensitivity taken over the single data point that is removed, especially in some advanced models such as random feature models or neural networks. Many previous works try to mitigate the distribution independent issue by proposing weaker notions of stability, however, they either only yield polynomial bounds or the bounds derived do not vanish as sample size goes to infinity. Given that, we propose \emph{locally elastic stability} as a weaker and distribution-dependent stability notion, which still yields exponential generalization bounds. We further demonstrate that locally elastic stability implies tighter generalization bounds than those derived based on uniform stability in many situations by revisiting the examples of bounded support vector machines, regularized least square regressions, and stochastic gradient descent.

 \end{abstract}

\section{Introduction}
A central question in machine learning is how the performance of an algorithm on the training set carries over to unseen data. Continued efforts to address this question have given rise to numerous generalization error bounds on the gap between the population risk and empirical risk, using a variety of approaches from statistical learning theory \citep{vapnik1979estimation,vapnik2013nature,bartlett2002rademacher,bousquet2002stability}. Among these developments, algorithmic stability stands out as a general approach that allows one to relate certain specific properties of an algorithm to its generalization ability. Ever since the work of \citet{devroye1979distribution}, where distribution-independent exponential generalization bounds for the concentration of the \text{leave-one-out} estimate are proposed, various results for different estimates are studied. \citet{lugosi1994posterior} study the smooth estimates of the error for the deleted estimate developed in terms of a posterior distribution and \citet{kearns1999algorithmic} propose \emph{error stability}, which provides sanity-check bounds for more general classes of learning rules regarding the deleted estimate. For general learning rules, \citet{bousquet2002stability} propose the notion of \emph{uniform stability}, which extends \citet{lugosi1994posterior}'s work and yields exponential generalization bounds. Loosely speaking, \citet{bousquet2002stability} show that an algorithm would generalize well to new data if this algorithm is uniformly stable in the sense that its loss function is not sensitive to the deletion of a single data point. To date, uniform stability is perhaps the most popular stability notion.

Despite many recent developments, most results on stability and generalization can be divided into two categories if not counting sanity-check bounds. The first category includes stability notions such as hypothesis stability, which only yield sub-optimal \textit{polynomial} generalization bounds. The second category includes stability notions based on uniform stability and its variants, which yield optimal \textit{exponential} generalization bounds. Nevertheless, the stability notions in the second category either stop short of providing distribution-dependent bounds or, worse, the bounds do \textit{not} vanish even when the training sample size tends to infinity~\cite{abou2019exponential}. Recognizing these facts, in this paper, we aim to relax the uniform stability notion and propose a weaker and distribution-dependent stability notion, which yields exponential generalization bounds that are consistent in the sense that the bounds vanish to zero as the training sample size tends to infinity.
\begin{figure*}[t]
\centering
		 \subfigure[Sensitivity of neural networks.]{			\centering			\includegraphics[scale=0.33]{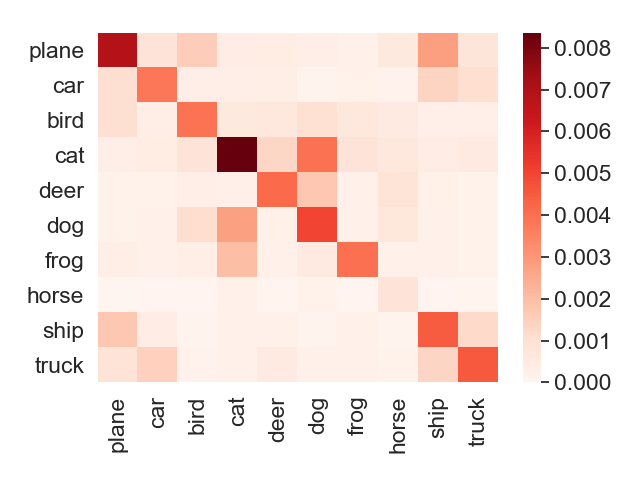}
		\label{fig:NNs-IF}}
		\subfigure[Sensitivity of a random feature model.]{
			\centering
			\includegraphics[scale=0.33]{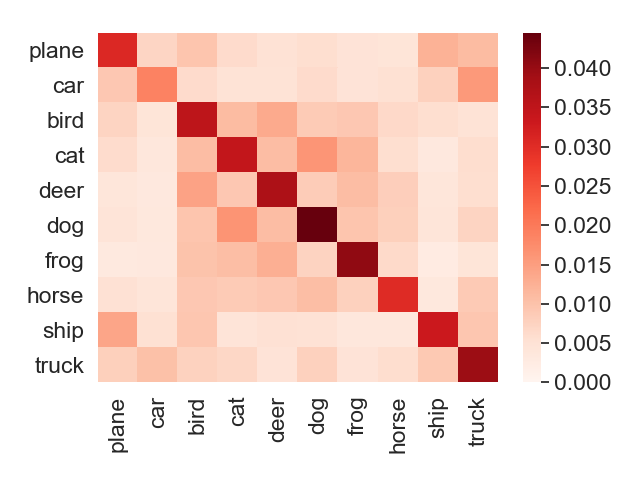}
			\label{fig:RF-IF}}
      \subfigure[Sensitivity of a linear model.]{
			\centering
		\includegraphics[scale=0.33]{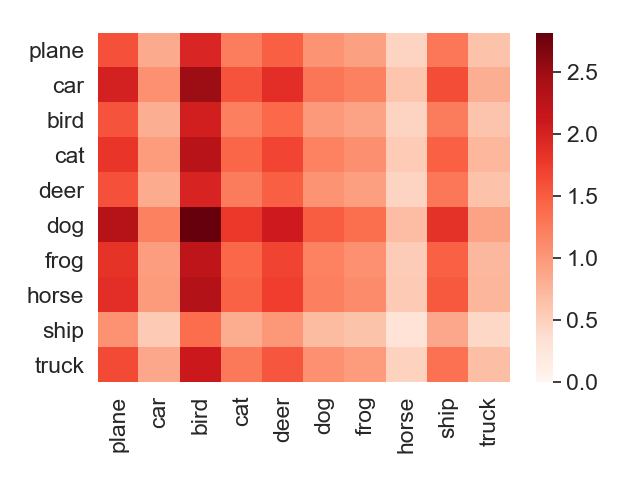}
		\label{fig:Linear-IF}}
		\caption{Class-level sensitivity approximated by influence functions for 
		neural networks (based on a pre-trained $18$-layer ResNet), a random feature model (based on a randomly initialized $18$-layer ResNet),
		and a linear model on CIFAR-10. The vertical axis denotes the classes in the test data and the horizontal  axis denotes the classes in the training data. The class-level sensitivity from class $a$ in the training data to class $b$ in the test data is defined as $C(c_a, c_b) = \frac{1}{|S_a| \times |\tilde{S}_b|}\sum_{z_i \in S_a} \sum_{z \in \tilde{S}_b}|l(\hat{\theta},z)-l(\hat{\theta}^{\backslash i},z)|$, where $S_a$ denotes the set of examples from class $a$ in the training data and $\tilde{S}_b$ denotes set of examples from class $b$ in the test data. 
		}
		\label{fig:IF-results}
\end{figure*}

\subsection{A Motivating Example}
To further motivate our study, note that there are many cases where the worst-case sensitivity of the loss is much larger than the average sensitivity, especially in random feature models or neural networks. As a concrete example, from Figure~\ref{fig:IF-results}, we can observe that the sensitivity of neural networks and random feature models depends highly on the label information. To be precise, consider training two models on the CIFAR-10 dataset \citep{krizhevsky2009learning} and another dataset obtained by removing one training example, say an image of a plane, from CIFAR-10, respectively. Figure~\ref{fig:IF-results} shows that the difference between the loss function values for the two models \textit{depends} on the label of the test image that the loss function is evaluated at: the difference between the loss function values, or sensitivity for short, is significant if the test image is another plane, and the sensitivity is small if the test image is from a different class, such as car or cat. Concretely, the average plane-to-plane difference is about seven times the average plane-to-cat difference. The dependence on whether the two images belong to the same class results in a pronounced diagonal structure in Figure~\ref{fig:NNs-IF}, which is consistent with the phenomenon of \textit{local elasticity} in deep learning training \citep{he2020local,chen2020label}. In particular, this structural property of the loss function differences clearly demonstrates that uniform stability fails to capture how sensitive the loss function is in the \textit{population} sense, which is considerably smaller than the worst-case sensitivity, for the neural networks and random feature models.
\subsection{Our Contribution}
As our first contribution, we introduce a new notion of algorithmic stability that is referred to as \textit{locally elastic stability} to take into account the message conveyed by Figure~\ref{fig:IF-results}. This new stability notion imposes a data-dependent bound on the sensitivity of the loss function, as opposed to a constant bound that uniform stability and many of its relaxations use.

The second contribution of this paper is to develop a generalization bound for any locally elastically stable algorithm. This new generalization bound is obtained by a fine-grained analysis of the empirical risk, where using McDiarmid's inequality as in \citet{bousquet2002stability} no longer works. Specifically, we expect the empirical sum of the sensitivities by deleting different samples to be close to the expected sensitivity taken over the deleted sample. However, conditioning on that event, the dependency among input examples invalidate McDiarmid's inequality. To overcome this difficulty, we develop novel techniques that allow us to obtain a sharper analysis of some important quantities. Our results show that the generalization error is, loosely speaking, upper bounded by the expectation of the sensitivity function associated with locally elastic stability over the population of training examples. Assuming uniform stability, however, classical generalization bounds are mainly determined by the largest possible sensitivity over all pairs of training examples.  We further demonstrate that our bounds are tighter than those derived based on uniform stability in many situations by revisiting the examples of bounded support vector machines (SVM), regularized least square regressions, and stochastic gradient descent (SGD). Although it requires further exploration on how to make the new bounds applicable to deep learning models in practice, the insights from this new stability notion shall shed light on the development of future approaches toward demystifying the generalization ability of modern neural networks.

\subsection{Related Work}
Ever since \citet{kearns1999algorithmic} and \citet{bousquet2002stability} proposed the notions of uniform stability and hypothesis stability, a copious line of works has been devoted to extending and elaborating on their frameworks. In \citet{mukherjee2006learning}, \citet{shalev2010learnability} and \citet{kutin2002almost}, the authors show there exist cases where stability is the key necessary and sufficient condition for learnability but uniform convergence is not.
On one hand, error stability is not strong enough to guarantee generalization \citep{kutin2012almost}. On the other hand, hypothesis stability guarantees generalization but only provides polynomial tail bounds. Fortunately, uniform stability guarantees generalization and further provides exponential tail bounds. In \citet{feldman2018generalization}, the authors develop the generalization bound for the cases where uniform stability parameter is of order $\Omega(1/\sqrt{m})$, 
where $m$ is the sample size. In subsequent work, \citet{feldman2019high} prove a nearly tight high probability bound for any uniformly stable algorithm. In \citet{bousquet2020sharper}, the authors provide sharper bounds than \citet{feldman2019high} and also provide general lower bounds which can be applied to certain generalized concentration inequalities. There are also works seeking to relax uniform stability such as \cite{abou2019exponential}, but their bound still has a small term that would not vanish even with an infinite sample size and a vanishing stability parameter.

In addition, researchers demonstrate that many popular optimization methods, such as SGD, satisfy algorithmic stability. In \citet{hardt2015train}, the authors show that SGD satisfies uniform stability. \citet{lei2020fine} further relax the smoothness and convexity assumptions, and others instead discuss the nonconvex case for SGD in more detail~\citep{kuzborskij2018data, madden2020high}.  \citet{kuzborskij2018data} recently propose another notion of data-dependent stability for SGD. Our work can be viewed as a relaxation of uniform stability and SGD will be shown to satisfy our new notion of algorithmic stability.

\section{Locally Elastic Stability}
\label{sec:locally-elast-stab}

We first collect some notations that are used throughout this paper, which mostly follows that of \citet{bousquet2002stability}. Denote the input by e. One instance of $\cZ$ is $\cX\times\cY$, where $\cX$ and $\cY$ are input space and label space respectively. For a function class $\cF$, a learning algorithm $\mathscr{A}: \cZ^m \rightarrow \cF$ takes the training set $S$ as input and outputs a function $\cA_S \in \cF$. For any $m$-sized training set $S$, let $S^{\backslash i}=\{z_1,\cdots,z_{i-1},z_{i+1},\cdots,z_m\}$ be derived by removing the $i$th element from $S$ and $S^i=\{z_1,\cdots,z_{i-1},z'_i,z_{i+1},\cdots,z_m\}$ be derived by replacing the $i$th element from $S$ with another example $z'_i$. For any input $z$, we consider a loss function $l(f,z)$. We are particularly interested in the loss $l(f, z)$ when the function $f = \cA_S$.


Now, we formally introduce the notion of locally elastic stability below. Let $\beta_m(\cdot, \cdot)$ be a sequence of functions indexed by $m \ge 2$ that each maps any pair of $z, z' \in \cZ$ to a positive value.
\begin{definition}[Locally Elastic Stability]\label{def:LE}
An algorithm $\sA$ has locally elastic stability $\beta_m(\cdot, \cdot)$ with respect to the loss function $l$ if, for all $m$, the inequality
\[
\left| l(\cA_S,z)-l(\cA_{S^{\backslash i}},z) \right| \le \beta_m(z_i,z)
\]
holds for all $S\in \cZ^m, 1 \le i \le m$, and $z\in \cZ$.
\end{definition}


In words, the change in the loss function due to the removal of any $z_i$ is bounded by a function depending on both $z_i$ and the data point $z$ where the loss is evaluated. In this respect, locally elastic stability is \textit{data-dependent}. In general, $\beta_m(\cdot, \cdot)$ is not necessarily symmetric with respect to its two arguments. To further appreciate this definition, we compare it with uniform stability, which is perhaps one of the most popular algorithmic stability notions.

\begin{definition}[Uniform Stability \citep{bousquet2002stability}]\label{def:uni}
\label{def:uniform-stability}
Let $\beta^{\textnormal{U}}_m$ be a sequence of scalars. An algorithm $\sA$ has uniform stability $\beta^{\textnormal{U}}_m$ with respect to the loss function $l$ if 
\begin{equation}\label{eq:u_stab}
\left| l(\cA_S,z)-l(\cA_{S^{\backslash i}},z) \right| \le \beta^{\textnormal{U}}_m
\end{equation}
holds for all $S\in \cZ^m, 1 \le i \le m$, and $z\in \cZ$.

\end{definition}



First of all, by definition one can set $\beta^{\textnormal{U}}_m = \sup_{z', z} \beta_m(z', z)$. Furthermore, a simple comparison between the two notions immediately reveals that locally elastic stability offers a finer-grained definition of the loss function sensitivity. The gain is significant particularly in the case where the worst possible value of $\left| l(\cA_S,z)-l(\cA_{S^{\backslash i}},z) \right|$ is much larger than its typical realizations. 

\subsection{Estimation Using Influence Functions}
\label{subsec:IF}
In the introduction part, we motivated the proposal of locally elastic stability by showing the class-level sensitivity for a random feature model and neural networks in Figure \ref{fig:IF-results}.
In this subsection, we elaborate more on the experimental results and the corresponding approximation method. The examples we considered demonstrate small $\beta_m(z_i,z)$ for most $z$'s in $\cZ$ for any training example $z_i$,. The fact that $\beta_m(z_i,z)$ is small for most of $z$'s is important to obtain a sharper generalization bound with locally elastic stability than the bound with uniform stability.

Specifically, consider a function $f$ that is parameterized by $\theta$ and write $l(\theta,z)$ instead of $l(f, z)$ for the loss. Writing $f = f_\theta$, the algorithm $\sA$ aims to output $f_{\hat{\theta}}$ where $\hat{\theta}=\argmin_{\theta\in\Theta}\sum_{j=1}^ml(\theta,z_j)/m$ (we temporarily ignore the issue of uniqueness of minimizers here). Then, $\cA_S$ defined previously is exactly $f_{\hat{\theta}}$. Denote $\hat{\theta}^{\backslash i}=\argmin_{\theta\in\Theta}\sum_{j\neq i}l(\theta,z_j)/m,$ we aim to quantitatively estimate $|l(\hat{\theta},z)-l(\hat{\theta}^{\backslash i},z)|$ for all $i$'s. However, quantifying the above quantity for all $i$'s is computationally prohibitive in practice for neural networks and also a pain even for random feature model. In order to alleviate the computational issue, we adopt influence functions from \citet{koh2017understanding} and consider the same simplified model as in \citet{koh2017understanding}: an $N$-layer neural network whose first $N-1$ layers are pre-trained. Given that model, when the loss function $l(\theta,z)$ is strictly convex in $\theta$ for all $z$, such as the continuously used cross entropy loss and squared loss with $l_2$ penalty on $\theta$,  we have the following approximation:
\begin{align}\label{eq:IF}
\begin{split}
\beta_m(z_i,z)&:=|l(\hat{\theta},z)-l(\hat{\theta}^{\backslash i},z)|\\
&\approx \frac{1}{m}\big|\nabla _\theta l(\hat{\theta},z)H^{-1}_{\hat{\theta}}\nabla_\theta l (\hat{\theta},z_i)\big|,
\end{split}
\end{align}
where $H_{\hat{\theta}}=\sum_{j=1}^m\nabla^2 l (\hat{\theta},z_j)/m$ is the Hessian. We remark that it is very common in transfer learning to pre-train the $N-1$ layers and it is different from the random feature model, where the first $N-1$ layers are chosen to be independent of data.  We further consider training the full $N$-layer neural  networks by analyzing the sensitivity of the loss step-wisely for SGD in the Appendix. In Figure \ref{fig:IF-results}, we demonstrate the class-level sensitivity approximated by influence functions for neural networks (based on a pre-trained $18$-layer ResNet \citep{he2016deep}) and a random feature model (based on a randomly initialized $18$-layer Resnet) on CIFAR-10 \citep{krizhevsky2009learning}.

The results indicate that for random feature models and neural networks with a training example $z_i$, if $z$ is from the same class of $z_i$, then $\beta_m(z_i,z)$ is large; if $z$ is from a different class $\beta_m(z_i,z)$ is small. Recognizing the long tail property of class frequencies for image datasets in practice, it would lead to small $\beta_m(z_i,z)$'s for most $z$'s for any training example $z_i$. 

We close this section by providing empirical evidence to justify our statement that ``$\beta_m(z_i,z)$ for most $z$'s is small for any training example $z_i$." Specifically, we compare $\sup_{z'\in S,z\in \cZ}\beta_m(z',z)$ and $\sup_{z'\in\cZ}\bE_{z}\beta_m(z',z)$ for both neural networks and the random feature model, and the results are shown in Table \ref{table:LES-vs-US}.

\begin{table}
\centering
\scalebox{0.5}{
\begin{tabular}{|c||c|c|c|}
\hline
Models & $\sup_{z'\in S,z\in \cZ}\beta_m(z',z)$ & $\sup_{z'\in\cZ}\bE_{z}\beta_m(z',z)$ & $ratio$ \\ \hline
Neural networks & $3.05$ & $0.02$ & $153$\\ \hline
Random feature model &$1.73$ & $0.04$ & $43$\\ \hline
\end{tabular}}
\caption{
Comparison between locally elastic stability and uniform stability for neural networks and the random feature model in Figure \ref{fig:IF-results}. 
}
\label{table:LES-vs-US}
\end{table}

It is worth noticing the dependence on whether the two images belong to the same class results in a pronounced diagonal structure in Figure \ref{fig:NNs-IF} and \ref{fig:RF-IF}, and in contrast, linear models do not exhibit such a strong dependence on the class of images, as evidenced by the absence of a diagonal structure in Figure~\ref{fig:Linear-IF}. We believe the above phenomenon is one of the reasons that neural networks generalize well and our new proposed stability provides a new direction towards understanding the generalization behavior of neural networks.

\subsection{Connection with Local Elasticity}

Locally elastic stability has a profound connection with a phenomenon identified by \citet{he2020local}, where the authors consider the question: how does the update of weights of neural networks using induced gradient at an image (say a tiger) impact the prediction at another image? In response to this question, \citet{he2020local} observe that the impact is significant if the two images have the same membership (e.g., the test image is another tiger) or share features (e.g., a cat), and the impact is small if the two images are not semantically related (e.g., a plane).\footnote{To be complete, this phenomenon does not appear in the initialized neural networks and become pronounced only after several epochs of training on the dataset.} In contrast, this phenomenon is generally not present in kernel methods, and \citet{chen2020label} argue that this absence is in part responsible for the ineffectiveness of neural tangent kernels compared to real-world neural networks in terms of generalization. Related observations have been made in \citet{chatterjee2020coherent} and \citet{fort2019stiffness}. This phenomenon, which \citet{he2020local} refer to as local elasticity, would imply the characteristic of neural networks that we observe in Figure~\ref{fig:IF-results}. Intuitively, from local elasticity we would expect that if we remove an image of a cat in the training set $S$, the loss after training on a test image of a plane would not be affected much compared with the loss obtained by training on the original training set (assuming the same randomness from sampling). Conversely, the final loss would be affected much if the test image is another tiger. Our Definition \ref{def:LE} formalizes the intuition of local elasticity by incorporating the membership dependence into the sensitivity of the loss function, hence is named as locally elastic stability.

The improvement brought by locally elastic stability is further enhanced by the \textit{diversity} of real-life data. First, the number of classes in tasks resembling practical applications is often very large. For example, the ImageNet dataset contains more than 1000 classes \citep{deng2009imagenet}. For most pairs $z', z$, their class memberships are different, leading to a relatively small value of $\beta_m(z', z)$ compared to the uniform upper bound adopted in uniform stability. Moreover, the long tail property of real-life images suggest that the class of cats, for example, consists of many cats with different appearances and non-vanishing frequencies \citep{zhu2014capturing} (further elaborations are included in the Appendix). Combining with the observations mentioned above, we would expect that for any fixed training example $z'$, $\beta_m(z',z)$ would be small for most  $z$ sampled from the distribution $\cD$. Therefore, the use of a uniform upper bound on the sensitivity is too pessimistic.

\section{Generalization Bounds}\label{sec:generalization}

In this section, we present our generalization bound for locally elastically stable algorithms and compare it to those implied by classical algorithmic stability notions.



\paragraph{Assumptions.} We assume the space $\cZ$ is bounded. 
In addition, from the approximation shown in (\ref{eq:IF}), for many problems one has $|l(\hat{\theta},z)-l(\hat{\theta}^{\backslash i},z)|=O(1/m)$. Moreover, \citet{bousquet2002stability} show that $\beta^{\textnormal{U}}_m$ in uniform stability satisfies $\beta^{\textnormal{U}}_m = O(1/m)$ for many problems including bounded SVM, $k$-local rules, and general regularization algorithms. This fact suggests that it is reasonable to expect that $\beta_m(z', z) = O_{z',z}(1)/m$ for locally elastic stability. More specifically, we have the following assumption.

\begin{assumption} \label{ass:1}
For the function $\beta_m(\cdot,\cdot)$, for any $z,z'\in\cZ$,
\[
\beta_m(z',z) = \frac{\beta(z',z)}{m}
\]
for some function $\beta(\cdot,\cdot)$ that is independent of $m$. In addition,  $\beta(\cdot, z)$ as a function of its first argument is $L$-Lipchitz continuous for all $z\in \cZ$ and the loss function and there exists $M_\beta>0$ such that $|\beta(\cdot,\cdot)|\le M_\beta$.
\end{assumption}

In essence,  $\beta_m(z',z) = \beta(z',z)/m$ is equivalent to assuming that $\sup_m m \beta_m(z',z)$ is finite for all $z', z$. The boundedness assumption of $\beta(\cdot,\cdot)$ holds if $\beta$ is a continuous function in conjunction with the boundedness of $\cZ$. In relating this assumption to uniform stability in Definition \ref{def:uni}, we can take $\beta_m^{\textnormal{U}} = M_{\beta}/m$.

Now, we are ready to state our main theorem. For convenience, write $\Delta(\cA_S)$ as a shorthand for the defect $\bE_z l(\cA_S, z) -  \sum_{j=1}^ml(\cA_{S},z_j)/m$, where the expectation $\bE_z$ is over the randomness embodied in $z \sim \cD$. In particular, $\bE_z l(\cA_S, z)$ depends on $\cA_S$.


%
%

\begin{theorem}\label{thm:main}
Let $\sA$ be an algorithm that has locally elastic stability $\beta_m(\cdot,\cdot)$ with respect to the loss function $l$, which satisfies $0\le l\le M_l$ for a constant $M_l$.  Under Assumption \ref{ass:1}, for any given $\eta$ and any $0 < \delta < 1$ and , for sufficiently large $m$, with probability at least $1-\delta$, we have
\begin{align*}
\Delta(\cA_S) &\le \frac{2\sup_{z'\in \cZ}\bE_{z}\beta(z',z)}{m}\\
+&2\left( 2\sup_{z'\in \cZ}\bE_{z}\beta(z',z)+\eta+M_l \right)\sqrt{\frac{2\log(2/\delta)}{m}}.
\end{align*}

\end{theorem}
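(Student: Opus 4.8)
The plan is to follow the two-step template of \citet{bousquet2002stability}, first controlling the expected defect $\bE_S[\Delta(\cA_S)]$ and then proving concentration of $\Delta(\cA_S)$ about this mean, but to replace the concluding McDiarmid step with a finer argument that retains the data-dependent quantity $\sup_{z'}\bE_z\beta(z',z)$ rather than the worst case $M_\beta$. First I would bound the mean by a renaming argument: since $(S,z)\mapsto(S^i,z_i)$ is measure preserving, $\bE_S[\Delta(\cA_S)] = \bE_{S,z}\big[l(\cA_{S^i},z_i)-l(\cA_S,z_i)\big]$, and inserting the leave-one-out predictor $\cA_{S^{\backslash i}}$ lets me apply locally elastic stability twice (removing $z_i$ from $S$ and removing $z$ from $S^i$). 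Under Assumption~\ref{ass:1} each application carries a factor $1/m$, and taking expectations replaces the two sensitivities by averages bounded by $\sup_{z'}\bE_z\beta(z',z)$, producing the leading term $\tfrac{2\sup_{z'}\bE_z\beta(z',z)}{m}$.

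Next I would study how $\Delta(\cA_S)$ reacts to replacing a single point $z_k$ by $z_k'$, writing $\Delta = R(\cA_S)-R_{\mathrm{emp}}(\cA_S)$. The population part is harmless: $|R(\cA_S)-R(\cA_{S^k})| \le \tfrac1m\bE_z[\beta(z_k,z)+\beta(z_k',z)] \le \tfrac{2}{m}\sup_{z'}\bE_z\beta(z',z)$. The empirical part contributes one diagonal term bounded by $M_l/m$ via $0\le l\le M_l$, plus the off-diagonal sum $\tfrac{1}{m^2}\sum_{j\ne k}[\beta(z_k,z_j)+\beta(z_k',z_j)]$, whose magnitude is dictated by the \emph{empirical} sensitivity averages $\tfrac1m\sum_j\beta(z',z_j)$.

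These empirical averages are the whole difficulty. In the worst case they are as large as $M_\beta$, so a naive bounded-difference constant would reintroduce exactly the worst-case sensitivity we are trying to eliminate. I would therefore pass to the good event on which $\tfrac1m\sum_j\beta(z',z_j)$ stays within $\eta$ of its mean $\bE_z\beta(z',z)$ \emph{uniformly over the removed point} $z'$; on this event the per-coordinate difference is of order $\tfrac1m(\sup_{z'}\bE_z\beta(z',z)+\eta+M_l)$, precisely the constant appearing in the theorem. This is where the remaining hypotheses of Assumption~\ref{ass:1} enter: the $L$-Lipschitz continuity of $\beta(\cdot,z)$ and the boundedness of $\cZ$ let me reduce the uniform-in-$z'$ deviation to a finite $\eps$-net, controlled by Hoeffding for the i.i.d.\ summands and a union bound, so that the bad event has probability $o(1)$ once $m$ is large.

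The main obstacle is that I cannot simply condition on this good event: under the conditional law the samples $z_1,\dots,z_m$ are no longer independent, which is exactly why a direct appeal to McDiarmid breaks down. I would instead stay on the unconditioned product measure and bound the Doob martingale increments of $\Delta(\cA_S)$ directly, truncating the rare bad event within each increment rather than conditioning it away---equivalently, invoking a McDiarmid-type inequality that only requires the bounded-difference property on a set of overwhelming probability. Combining this concentration estimate with the mean bound of the first step, choosing the net resolution, and absorbing the resulting $o(1/\sqrt m)$ residuals into ``sufficiently large $m$'' and the free parameter $\eta$, then delivers the stated bound with the factor $\sqrt{2\log(2/\delta)/m}$.
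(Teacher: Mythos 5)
Your proposal is correct and follows essentially the same route as the paper's own proof: the mean is controlled by the same renaming/leave-one-out argument (the paper invokes Lemma 7 of Bousquet--Elisseeff), the uniform-in-$z'$ concentration of the empirical sensitivity averages is obtained by the same Lipschitz + $\epsilon$-net + Hoeffding + union-bound device (the paper's Lemma A.4), and the concentration step is done exactly as you describe---working on the unconditioned product measure and splitting each Doob martingale increment on the good/bad event (the paper's decomposition $D_k=D_k^{(1)}+D_k^{(2)}$ with Jensen's inequality and a peeling argument), precisely because conditioning on the good event would destroy independence. One caution: the ``McDiarmid-type inequality requiring bounded differences only with overwhelming probability'' that you cite as an alternative black box is known to be delicate (naive versions of such statements are false), which is exactly why the paper executes the truncation-within-increments argument explicitly; since that direct argument is your primary plan, your proposal matches the paper's proof.
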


We remark that (1). the parameter $\eta$ in Theorem 3.1 is used to control the deviation $\sup_{z'\in \cZ}\Big|\sum_{j\neq k}\beta(z',z_j)/m-\bE_{z}\beta(z',z)\Big|.$ As shown in our Lemma A.4 in the Appendix, we only need $\eta>2M_\beta/m$. Thus, as stated in our theorem, \textit{for any given $\eta>0$}, as long as the sample size is large enough, i.e. $m>2M_\beta/\eta$, all the claims involving $\eta$ hold. In the subsequent discussions, for instance, in Section 4.1, we can set $\eta=\sup_{z'\in \cZ}\bE_{z}\beta(z',z)$ and Theorem 3.1 still holds.  (2). Theorem~\ref{thm:main} holds for $m$ that is larger than a bound depending on $\delta, \eta, d, L, M_{\beta}$, and $M_{l}$, One coarse and sufficient condition provided in the Appendix is that $m$ is large enough such that $\log (C' m)/m \leq \eta^2/(64M^2_\beta)$, $2M^2\log(2/\delta)/(\tilde{M}^2m)\leq \eta^2/(128M^2_\beta)$, $2M/\tilde{M}\sqrt{2\log(2/\delta)/m}\leq\eta^2/(128M^2_\beta),$ and $m>2M_\beta/\eta$ for constants $C'$ (depnding on $d$), $\tilde{M}, M_\beta, \eta$, which can be achieved once we notice that $\lim_{m\rightarrow\infty}\log (C' m)/m\rightarrow 0.$

In this theorem, the bound on the defect $\Delta(\cA_S)$ tends to 0 as $m \rightarrow \infty$. The factor $\sqrt{\log(2/\delta)}$ results from the fact that this locally elastic stability-based bound is an exponential bound. Notably, the bound depends on locally elastic stability through $\sup_{z'\in \cZ}\bE_{z}\beta(z',z)$, which is closely related to error stability~\citep{kearns1999algorithmic}. See more discussion in Section~\ref{sec:comparison}.

\begin{remark}
In passing, we make a brief remark on the novelty of the proof. An important step in our proof is to take advantage of the fact that $|\sum_{j\neq k}\beta(z',z_j)/m - \bE_{z}\beta(z',z)|$ is small with high probability. Conditioning on this event, however, $z_j$'s are no longer an i.i.d.~sample from $\cD$. The dependence among input examples would unfortunately invalidate McDiarmid's inequality, which is a key technique in proving generalization bounds for uniform stability. To overcome this difficulty, we develop new techniques to obtain a more careful analysis of some estimates. More details can be found in our Appendix.
\end{remark}

\section{Comparisons with Other Notions of Algorithmic Stability}
\label{sec:comparison}
Having established the generalization bound for locally elastic stability, we compare our results with some classical notions \cite{bousquet2002stability}. As will be shown in this subsection, error stability is not sufficient to guarantee generalization, hypothesis stability only yields polynomial bounds, and uniform stability only considers the  largest loss change on $z$ in $\cZ$ by removing  $z_i$ from $S$. In contrast, locally elastic stability not only provides exponential bounds as uniform stability but also takes into account the varying sensitivity of the loss. This fine-grained perspective can be used to improve the generalization bounds derived from uniform stability, when the average loss change by removing $z_i$ from $S$ over different $z$'s in $\cZ$ is much smaller than the worst-case loss change.

\subsection{Uniform Stability} Following \citet{bousquet2002stability}, for an algorithm $\sA$ having uniform stability $\beta^U_{m}$ (see Definition \ref{def:uniform-stability}) with respect to the loss function $l$, if $0\le l(\cdot,\cdot)\le M_l$, for any $\delta\in(0,1)$ and sample size $m$,  with probability at least $1-\delta$,
\begin{align*}
\Delta(\cA_S) \le 2\beta^{\textnormal{U}}_{m}+(4m\beta^{\textnormal{U}}_m+M_l)\sqrt{\frac{\log(1/\delta)}{2m}}.
\end{align*}
Notice that if an algorithm $\sA$ satisfies locally elastic stability with $\beta_m(\cdot,\cdot)$, then it it has uniform stability with parameter $\beta^U_{m}:=\sup_{z'\in S,z\in \cZ}\beta(z',z)/m$. We can identify $\sup_{z'\in S,z\in \cZ}\beta(z',z)$ with $M_\beta$ in Assumption \ref{ass:1}.




To get a better handle on the tightness of our new generalization bound, we revisit some classic examples in \citet{bousquet2002stability} and demonstrate the superiority of using our bounds over using uniform stability bounds in certain cases. In order to have a clear presentation, let us briefly recap the assumptions and concepts used in \citet{bousquet2002stability}. 

\begin{assumption}
Any loss function $l$ considered in this paragraph is associated with a cost function $c_l$, such that for a hypothesis $f$ with respect to an example $z=(x,y)$, the loss function is defined as
$$l(f,z)=c_l(f(x),y).$$
\end{assumption}

\begin{definition}
A loss function $l$ defined on $\cY^\cX\times \cY$ is $\sigma$-admissible with respect to $\cY^\cX$ if the associated cost function $c_l$ is convex with respect to its first argument and the following condition holds: for any $y_1,y_2\in\cY$ and any $y'\in \cY$
$$|c_l(y_1,y')-c_l(y_2,y')|\le \sigma \|y_1-y_2\|_\cY,$$
where $\|\cdot\|_\cY$ is the corresponding norm on $\cY$.
\end{definition}

\paragraph{Reproducing kernel Hilbert space.} A reproducing kernel Hilbert space (RKHS) $\cH$ is a Hilbert space of functions, in which point evaluation is a continuous linear functional and satisfies for any $h\in \cH$, any $x\in\cX$
$$h(x)=\langle h,K(x,\cdot) \rangle$$
where $K$ is the corresponding kernel of $\cH$. In particular, by Cauchy-Schwarz inequality, for any $h\in\cH$, any $x\in\cX$
$$|h(x)|\le \|h\|_K\sqrt{K(x,x)},$$
where $\|\cdot\|_K$ is the norm induced by kernel $K$ for the reproducing kernel Hilbert space $\cH$. We denote $\sqrt{K(x,x)}$ as $\kappa(x)$. Notice that for the reproducing kernel Hilbert space, $K$ must be a positive semi-definite kernel and $\kappa(x)\ge 0$.

In order to derive locally elastic stability bounds, we introduce the following lemma, which is a variant of Theorem $22$ in \citet{bousquet2002stability}.

\begin{lemma}\label{lm:kh}
Let $\cH$ be a reproducing kernel Hilbert space with kernel $K$, and for any $x\in\cX$, $K(x,x)\le \kappa^2<\infty$. The loss function $l$ is $\sigma$-admissible with repect to $\cH$ and the learning algorithm is defined by
$$\cA_S=\argmin_{h\in\cH} \frac{1}{m}\sum_{j=1}^m l(h,z_j)+\lambda \|h\|^2_K,$$
\end{lemma}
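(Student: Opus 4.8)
The plan is to mirror the proof of Theorem~22 in \citet{bousquet2002stability}, but to retain the pointwise dependence on the reproducing kernel rather than collapsing it to its worst-case value $\kappa^2$. Write $z_i=(x_i,y_i)$ for the removed example and $z=(x,y)$ for the evaluation point, and set $f=\cA_S$, $g=\cA_{S^{\backslash i}}$. I would introduce the regularized objectives $R_S(h)=\frac1m\sum_{j=1}^m l(h,z_j)+\lambda\|h\|_K^2$ and $R_{S^{\backslash i}}(h)=\frac1m\sum_{j\neq i} l(h,z_j)+\lambda\|h\|_K^2$. Since $c_l$ is convex in its first argument and $h\mapsto h(x)=\langle h,K(x,\cdot)\rangle$ is linear, each empirical term is convex in $h$, so both objectives are strongly convex in the $\|\cdot\|_K$ norm (with modulus coming from the ridge term $\lambda\|h\|_K^2$) and admit unique minimizers $f$ and $g$.

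First I would exploit strong convexity at the two minimizers. Because $f$ minimizes $R_S$ and $g$ minimizes $R_{S^{\backslash i}}$, strong convexity gives $R_S(g)-R_S(f)\ge \lambda\|f-g\|_K^2$ and $R_{S^{\backslash i}}(f)-R_{S^{\backslash i}}(g)\ge \lambda\|f-g\|_K^2$. Adding these and using the key identity $R_S(h)-R_{S^{\backslash i}}(h)=\frac1m l(h,z_i)$ (the two objectives differ only in the single $i$-th loss term) collapses the left-hand side to $\frac1m\big(l(g,z_i)-l(f,z_i)\big)$, yielding
\[
2\lambda\|f-g\|_K^2 \;\le\; \frac1m\big(l(g,z_i)-l(f,z_i)\big).
\]

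Next I would convert this single-point loss gap into an RKHS-norm bound. By $\sigma$-admissibility and the reproducing property, $|l(g,z_i)-l(f,z_i)|=|c_l(g(x_i),y_i)-c_l(f(x_i),y_i)|\le \sigma|g(x_i)-f(x_i)|=\sigma|\langle g-f,K(x_i,\cdot)\rangle|\le \sigma\kappa(x_i)\|f-g\|_K$. Substituting into the previous display and cancelling one factor of $\|f-g\|_K$ gives the crucial norm estimate $\|f-g\|_K\le \sigma\kappa(x_i)/(2\lambda m)$. Applying $\sigma$-admissibility and the reproducing property once more, this time at the evaluation point $z$, gives $|l(\cA_S,z)-l(\cA_{S^{\backslash i}},z)|\le \sigma\kappa(x)\|f-g\|_K\le \sigma^2\kappa(x_i)\kappa(x)/(2\lambda m)$, so $\sA$ has locally elastic stability $\beta_m(z_i,z)=\sigma^2\kappa(x_i)\kappa(x)/(2\lambda m)$.

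The step I expect to need the most care is the strong-convexity/minimizer argument that produces $2\lambda\|f-g\|_K^2\le \frac1m\big(l(g,z_i)-l(f,z_i)\big)$: one must use the paper's convention that the leave-one-out objective is normalized by $m$ rather than $m-1$, so that the two objectives differ by exactly $\frac1m l(\cdot,z_i)$, and one must obtain the lower bounds from convexity of $c_l$ alone, without assuming differentiability, by invoking the first-order variational inequality for the minimizers rather than gradients. The remaining content relative to \citet{bousquet2002stability} is purely bookkeeping: keeping $\kappa(x_i)$ and $\kappa(x)$ as \emph{separate} factors rather than bounding their product by the uniform $\kappa^2$, which is precisely what upgrades the uniform-stability statement to a data-dependent, locally elastic one and recovers Theorem~22 upon taking suprema over both arguments.
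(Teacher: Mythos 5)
Your proof is correct and follows essentially the same route as the paper: the inequality $2\lambda\|f-g\|_K^2\le\frac{1}{m}\bigl(l(g,z_i)-l(f,z_i)\bigr)$ that you derive via strong convexity is exactly Lemma~20 of \citet{bousquet2002stability}, which the paper invokes as a black box (in the form $2\|\Delta f\|_K^2\le\frac{\sigma}{\lambda m}|\Delta f(x_i)|$), and the remaining steps---the reproducing-property bound $\|\Delta f\|_K\le\sigma\kappa(x_i)/(2\lambda m)$ followed by $\sigma$-admissibility at the evaluation point---coincide with the paper's argument. The only difference is that you make the cited lemma self-contained, correctly handling the $1/m$ normalization of the leave-one-out objective and avoiding any differentiability assumption.
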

where $\lambda$ is a positive constant. Then, $\cA_S$ has uniform stability $\beta^U_m$ and locally elastic stability $\beta_m(z_i,z)$ such that
$$\beta^U_m\le \frac{\sigma^2 \kappa^2}{2\lambda m}~~\text{and}~~\beta_m(z_i,z)\le  \frac{\sigma^2 \kappa(x_i)\kappa(x)}{2\lambda m}.$$

Now, we are ready to investigate how the locally elastic bounds improve over the uniform stability bounds in the bounded SVM regression and regularized least square regression studied in \citet{bousquet2002stability}. We remark here, following the same settings in \citet{bousquet2002stability}, though the algorithms in the examples below are minimizing a regularized version of the loss,  the generalization gap studied above  is still $\Delta(\cA_S)=\bE_z l(\cA_S, z) -  \sum_{j=1}^ml(\cA_{S},z_j)/m$.

\begin{example}[Stability of bounded SVM regression]\label{ex:svm}
Assume $K$ is a bounded kernel, such that $K(x,x)\le \kappa^2$ for all $x\in\cX$, and $\cY=[0,B]$ for a real positive number $B$. Consider the loss function for $\tau>0$,
$$l(f,z)=|f(x)-y|_\tau=\left\{\begin{matrix}
 0,~~~\text{if}~|f(x)-y|\le \tau, \\ 
|f(x)-y|-\tau, ~\text{otherwise}. 
\end{matrix}\right.$$

The learning algorithm is defined by
$$\cA_S=\argmin_{h\in\cH} \frac{1}{m}\sum_{j=1}^m l(h,z_j)+\lambda \|h\|^2_K.$$

Noting $0\le l(f,z)\le B$ and $\sigma=1$ in our case \footnote{There are several small typos in the original Example $1$ and $3$ in \citet{bousquet2002stability} with respect to the range of $l(f,z)$ which we correct in our examples. } and using Lemma \ref{lm:kh}, we obtain the following bound via uniform stability:
\begin{align*}
\Delta(\cA_S) \le \frac{\kappa^2}{\lambda m}+\left(\frac{2\kappa^2}{\lambda }+B\right)\sqrt{\frac{\log(1/\delta)}{2m}}.\quad(\cB_1)
\end{align*}
In addition, we obtain the following bound via locally elastic stability by choosing $\eta=\kappa\bE_x\kappa(x)/\lambda$
\begin{align*}
&\Delta(\cA_S) \le \frac{\kappa\bE_x\kappa(x)}{\lambda m}\\
&+\left(\frac{3\kappa\bE_x\kappa(x)}{\lambda }+2B\right)\sqrt{\frac{2\log(2/\delta)}{m}}.\quad(\cB_2)
\end{align*}

For simplicity, we consider $K$ to be the bilinear kernel (similar analysis can be extended to other kernels such as polynomial kernels) $K(x,x')=\langle x,x' \rangle$ and all $x\in\cX$'s norm are bounded by $B'$. Then, $\kappa= B'^2$ and $\bE_x\kappa(x)=\bE_x\|x\|^2$. Apparently, the first term on the RHS in $(\cB_2)$ is smaller than the first term on the RHS in $(\cB_1)$. So we focus on comparing the second terms for both inequalities. For $\delta<0.5$, we have $\log(2/\delta)\le 2\log(1/\delta)$. Applying the above inequality to $(\cB_2)$, we can simplify the expressions, and   if we further have
\begin{equation}\label{eq:whatever}
\left(\frac{2\kappa^2}{\lambda }+B\right)\ge 2\sqrt{2}\left(\frac{3\kappa\bE_x\kappa(x)}{\lambda }+2B\right)
\end{equation}

the bound obtained in $(\cB_2)$ is tighter than the one in $(\cB_1)$. If the scale of $\kappa\bE_x\kappa(x)/\lambda$ and $B$ and are relatively small comparing with $\kappa^2/\lambda$, (\ref{eq:whatever}) apparently holds. Notice the first requirement regarding $\kappa\bE_x\kappa(x)/\lambda$ being relatively small comparing with $\kappa^2/\lambda$ is distribution-dependent and can be easily achieved if the distribution of $\|x\|$ is concentrated around zero. If we further have $B^{'2}$ is large enough comparing with $B\lambda$, the bound in $(\cB_2)$ is tighter than the one in $(\cB_1)$.

In particular, if $x$ is a distribution such that 
$$\cP\left(\|x\|\le \frac{B'}{6}\right)\ge \frac{23}{24},$$
and $B'^2\ge 8\sqrt{2}B\lambda$, $\delta<0.5$, the bound obtained in $(\cB_2)$ is tighter than the one in $(\cB_1)$. Moreover, our locally elastic bound is \textbf{significantly tighter} than the one obtain via uniform stability, if $B'^2\gg B\lambda$.
\end{example}

\begin{example}[Stability of regularized least square regression] \label{ex:reg}
Consider $\cY=[0,B]$ and denote $\cH$ as the reproducing kernel Hilbert space induced by kernel $K$. The regularized least square regression algorithm is defined by 
$$\cA_S=\argmin_{h\in\cH} \frac{1}{m}\sum_{j=1}^m l(h,z_j)+\lambda \|h\|^2_K,$$
where $l(f,z)=(f(x)-y)^2$. Then, with Lemma \ref{lm:kh}, we obtain the following bound via uniform stability
\begin{align*}
\Delta(\cA_S) \le \frac{4\kappa^2B^2}{\lambda m}+\left(\frac{8\kappa^2B^2}{\lambda }+B^2\right)\sqrt{\frac{\log(1/\delta)}{2m}}.~~(\cB_3)
\end{align*}
Meanwhile, we can obtain the following bound via locally elastic stability
\begin{align*}
&\Delta(\cA_S) \le \frac{4\kappa\bE_x\kappa(x)B^2}{\lambda m}\\
&+\left(\frac{12\kappa\bE_x\kappa(x)B^2}{\lambda }+2B^2\right)\sqrt{\frac{2\log(2/\delta)}{m}}.\quad(\cB_4)
\end{align*}

Similarly, for simplicity, let us consider $K$ to be the bilinear kernel $K(x,x')=\langle x,x' \rangle$ and all $x\in\cX$'s norm are bounded by $B'$. $\kappa= B'^2$ and $\bE_x\kappa(x)=\bE_x\|x\|^2$. With the same spirit as in Example \ref{ex:svm}, if $x$ is a distribution such that 
$$\cP\left(\|x\|\le \frac{B'}{4}\right)\ge \frac{3}{4}$$
and suppose $B'^4\ge \lambda$, $\delta<0.5$, the bound obtained in $(\cB_4)$ is tighter than the one in $(\cB_3)$. Similar as Example \ref{ex:svm}, our locally elastic bound is significantly tighter than the one obtain via uniform stability, if $B'^4\gg \lambda$.

\end{example}

\subsection{Hypothesis Stability.} 
%
For a training set $S$ with $m$ examples, an algorithm $\sA$ has hypothesis stability $\beta^H_m$ with respect to the loss function $l$ if
$$\bE_{S,z}\left|l(\cA_S,z)-l(\cA_{S^{\backslash i}},z) \right|\le \beta^H_{ m}$$ 
holds for all $S\in \cZ^m$ and $1\le i\le m$, where $\beta^{\textnormal{H}}_m$ is a sequence of scalars . If $0\le l(\cdot,\cdot)\le M_l$, for any $\delta\in(0,1)$ and sample size $m$, \citet{bousquet2002stability} show that with probability at least $1-\delta$
\[
\Delta(\cA_S) \le \sqrt{\frac{M_l^2+12M_lm\beta^H_m}{2m\delta}}.
\]
For $\beta^H_{m}=O(1/m)$, hypothesis stability only provides a tail bound of order $O(1/\sqrt{m\delta})$ (polynomial tail bound) while locally elastic stability provides tail bounds of order $O(\sqrt{\log(1/\delta)/m})$ (exponential tail bound). In addition, for an algorithm $\sA$ satisfying locally elastic stability $\beta_m(\cdot,\cdot)$, it by definition satisfies hypothesis stability with parameter  $\bE_{z',z}\beta_m(z',z)$.

\subsection{Error Stability.} For a training set $S$ with $m$ examples, an algorithm $\sA$ has error stability $\beta^E_m$ with respect to the loss function $l$ if
$$\left|\bE_{z}[l(\cA_S,z)]-\bE_{z}[l(\cA_{S^{\backslash i}},z)] \right|\le \beta^E_{m},$$ 
for all $S\in \cZ^m$ and $1\le i\le m$. Error stability is closely related to locally elastic stability in the sense that $\beta^E_m$ can take the value of $\sup_{z'\in\cZ}\bE_{z}\beta_m(z',z)$. However, as pointed out by \citet{kutin2012almost}, this notion is too weak to guarantee generalization in the sense that there exists an algorithm $\sA$ where the error stability parameter goes to $0$ as the sample size $m$ tends to infinity but the generalization gap does not go to $0$.

\begin{figure*}[t]
		\centering
		 \subfigure[Neural networks (epoch $0$).]{
			\centering
			\includegraphics[scale=0.33]{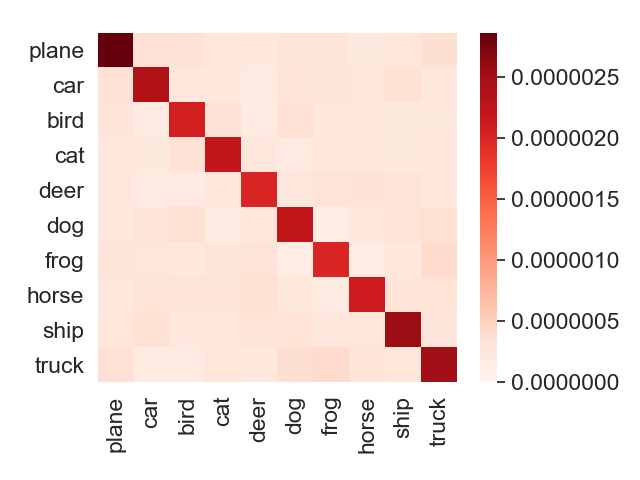}
			\label{fig:NNs-SGD-0}}
        \subfigure[Neural networks (epoch $10$).]{
			\centering
			\includegraphics[scale=0.33]{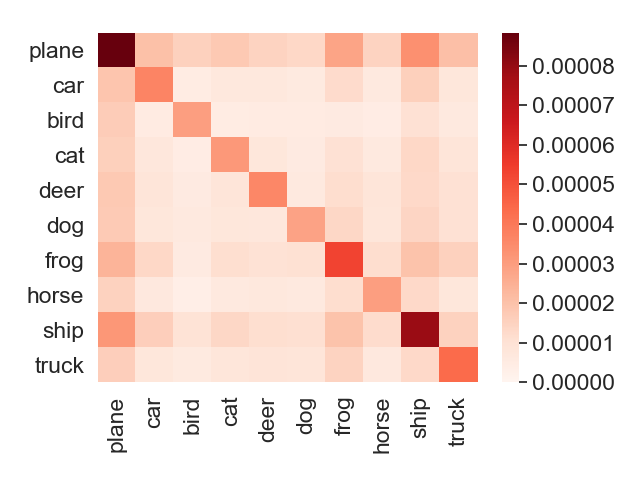}
			\label{fig:NNs-SGD-10}}
		\subfigure[Neural networks (epoch $50$).]{
			\centering
			\includegraphics[scale=0.33]{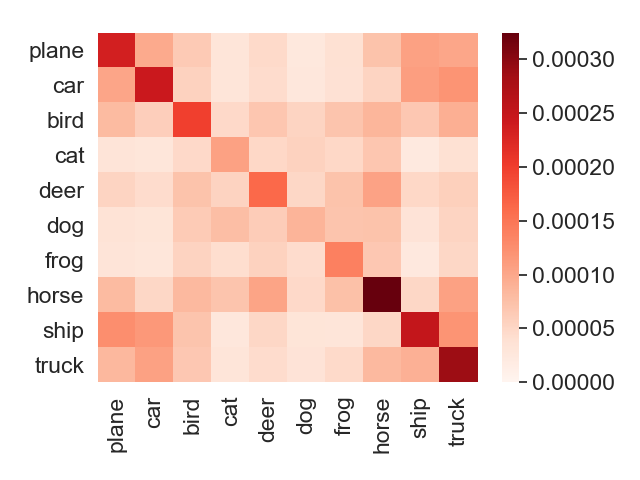}
			\label{fig:NNs-SGD-50}} \\
		\subfigure[Random feature model (epoch $0$).]{
			\centering
			\includegraphics[scale=0.33]{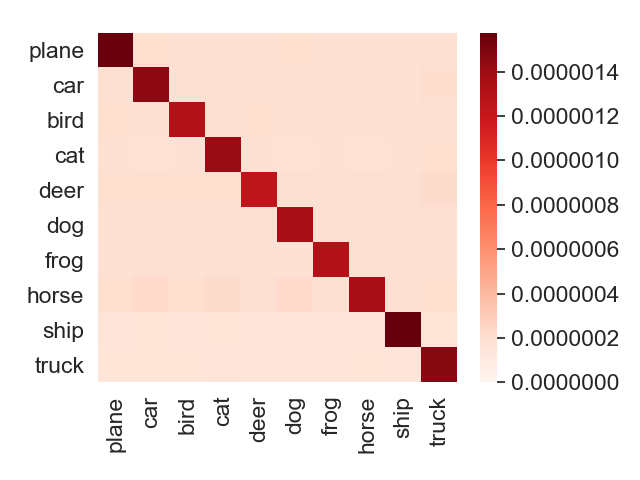}
			\label{fig:RF-SGD-0}}
        \subfigure[Random feature model (epoch $50$).]{
			\centering
			\includegraphics[scale=0.33]{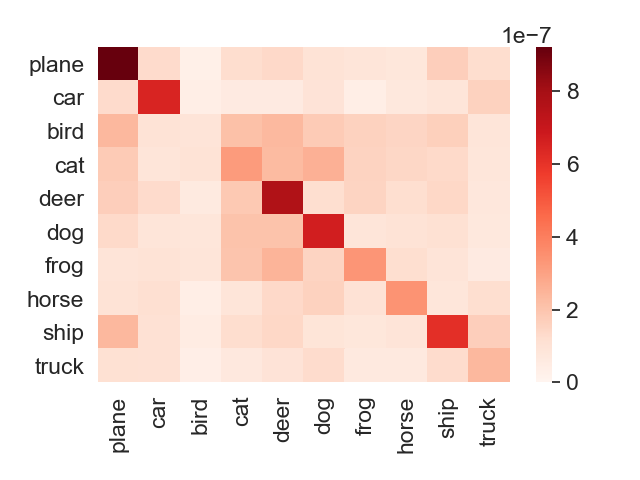}
			\label{fig:RF-SGD-50}}
		\subfigure[Random feature model (epoch $250$).]{
			\centering
			\includegraphics[scale=0.33]{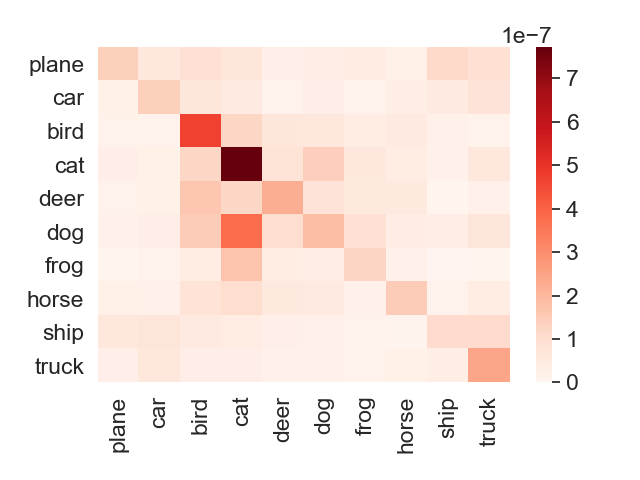}
			\label{fig:RF-SGD-250}} 
		\caption{Exact stepwise characterization of class-level sensitivity for neural networks and random feature models trained with different numbers of epochs by SGD on CIFAR-10. The class-level sensitivity for a stepwise update of SGD is $C'(c_a, c_b) = \frac{1}{|S_a|\cdot |\tilde{S}_b|}\sum_{z_i \in S_a} \sum_{z \in \tilde{S}_b}|l(\htheta_t-\eta\nabla_\theta l(\htheta_t,z_i),z)-l(\htheta_t,z)|$, where $S_a$ denotes the set of examples with class $a$ in the training data and $\tilde{S}_b$ denotes the set of examples with class $b$ in the test data. 
		}
		\label{fig:SGD-results}
\end{figure*}

\section{Locally Elastic Stability and Stochastic Gradient Descent}
In \citet{hardt2016train}, the authors demonstrate that SGD satisfies uniform stability under the standard Lipschitz and smoothness assumptions. As another concrete application of locally elastic stability, we revisit this problem and show that SGD also satisfies locally elastic stability under similar assumptions.

SGD algorithm consists of multiples steps of stochastic gradient updates $\htheta_{t+1}=\htheta_t-\eta_t\nabla_{\theta}l(\htheta_t,z_{i_t}),$ where we allow the learning rate to change over time and $\eta_t$ is the learning rate at time $t$, $i_t$ is picked uniformly at random from $\{1,\cdots,m\}$. Throughout this subsection, we develop our results for a $T$-step SGD. For a randomized algorithm $\sA$ like SGD, we can extend the definition of locally elastic stability just as \citet{hardt2016train} do for uniform stability (Definition \ref{def:uniform-stability}). As shown in Figure \ref{fig:SGD-results}, we further demonstrate the step-wise characterization of class-level sensitivity for neural networks (based on a pre-trained ResNet-18) and random feature models (based on a randomly initialized ResNet-18) trained for different numbers of epochs by SGD on CIFAR-10. 
\begin{definition}
A randomized algorithm $\sA$ is $\beta_m(\cdot,\cdot)$-locally elastic stable if for all datasets $S\in\cZ^n$, we have 
\begin{equation*}
|\bE_{\sA}[l(\cA_S,z)]-\bE_{\sA}[l(\cA_{S^{\backslash i}},z)]|\le\beta_m(z_i,z),
\end{equation*}
where the expectation is over the randomness embedded in the algorithm $\sA$. 
\end{definition}
For SGD, the algorithm $\sA$ outputs functions $\cA_S$ and $\cA_{S^{\backslash i}}$ which are parameterized by $\hat{\theta}_T$ and $\hat{\theta}^{\backslash i}_T$ and we further study whether there is a function $\beta_m(\cdot,\cdot)$ such that
$|\bE[l(\hat{\theta}_T,z)]-\bE[l(\hat{\theta}^{\backslash i}_T,z)]|\le\beta_m(z_i,z),$
where the expectation is taken with respect to randomness coming from uniformly choosing the index at each iteration. Under similar settings as in \citet{hardt2016train}, we develop estimates of the locally elastic stability parameters separately for convex, strongly convex, and non-convex cases. Due to the space constraint, we only show our results here for convex and non-convex cases and defer the treatment of  the strongly convex case to Appendix.

\begin{proposition}[Convex Optimization]
Assume that the loss function $l(\cdot, z)$ is $\alpha$-smooth and convex for all $z\in\cZ$. In addition, $l(\cdot, z)$ is $L(z)$-Lipschitz and $L(z)<\infty$ for all $z\in\cZ$: $|l(\theta,z)-l(\theta',z)|\le L(z)\|\theta-\theta'\|$ for all $\theta, \theta'.$ We further assume $L=\sup_{z\in\cZ}L(z)<\infty$. Suppose that we run SGD with step sizes $\eta_t\le 2/\alpha$ for $T$ steps. Then, 
\begin{equation*}
|\bE[l(\hat{\theta}_T,z)]-\bE[l(\hat{\theta}^{\backslash i}_T,z)]|\le\frac{(L+L(z_i))L(z)}{m}\sum_{t=1}^T\eta_t.
\end{equation*}
 \end{proposition}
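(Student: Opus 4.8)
The plan is to follow the coupling-based stability analysis of \citet{hardt2016train}, but to carry the \emph{pointwise} Lipschitz constants $L(z)$ and $L(z_i)$ through the argument instead of collapsing everything to the uniform constant $L$. Write $\delta_t=\|\htheta_t-\htheta^{\backslash i}_t\|$ for the distance between the two parameter trajectories produced by running SGD on $S$ and on $S^{\backslash i}$. I would work on a single coupled probability space (constructed below) whose marginals match those of the two independent SGD runs, so that the expectations appearing in the statement are preserved. Since $l(\cdot,z)$ is $L(z)$-Lipschitz, the first step is the reduction
\[
|\bE[l(\htheta_T,z)]-\bE[l(\htheta^{\backslash i}_T,z)]|\le \bE|l(\htheta_T,z)-l(\htheta^{\backslash i}_T,z)|\le L(z)\,\bE[\delta_T],
\]
which turns the whole problem into one of bounding $\bE[\delta_T]$.

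The second ingredient is the non-expansiveness of a single gradient step: for a convex, $\alpha$-smooth function and step size $\eta_t\le 2/\alpha$, the update map $\theta\mapsto\theta-\eta_t\nabla l(\theta,z_{i_t})$ is $1$-Lipschitz (a consequence of co-coercivity of the gradient of a smooth convex function). Consequently, whenever the two runs update on the \emph{same} example, $\delta_t$ cannot increase.

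The heart of the argument is the coupling of the two index sequences, which must respect that the $S$-run samples uniformly from $\{1,\dots,m\}$ while the $S^{\backslash i}$-run samples uniformly from the $m-1$ surviving indices. I would couple them as follows: with probability $(m-1)/m$ draw a common index $i_t\ne i$ and let both runs update on $z_{i_t}$; with probability $1/m$ let the $S$-run update on $z_i$ while the $S^{\backslash i}$-run draws a fresh uniform index from $\{1,\dots,m\}\setminus\{i\}$. A direct check shows this reproduces the correct marginals, namely $1/m$ per index for the $S$-run and $1/(m-1)$ per surviving index for the $S^{\backslash i}$-run. In the first case $\delta_{t+1}\le\delta_t$ by non-expansiveness; in the second case the triangle inequality together with $\|\nabla l(\cdot,z_i)\|\le L(z_i)$ and $\|\nabla l(\cdot,z_k)\|\le L$ gives $\delta_{t+1}\le\delta_t+\eta_t(L+L(z_i))$. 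Taking expectations yields
\[
\bE[\delta_{t+1}]\le \bE[\delta_t]+\frac{\eta_t\,(L+L(z_i))}{m},
\]
and, since $\delta_0=0$, summing over $t$ gives $\bE[\delta_T]\le \frac{L+L(z_i)}{m}\sum_{t=1}^T\eta_t$. Combined with the reduction of the first paragraph, this is exactly the claimed bound.

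I expect the main obstacle to be the coupling step. Because $S$ and $S^{\backslash i}$ have different sizes, one cannot simply feed both runs the identical index sequence, and one must verify that the proposed coupling simultaneously maximizes the probability of a shared update and reproduces the two distinct uniform sampling laws. This is also precisely where the data-dependent improvement enters: on the single "bad" event that the removed example is selected, the step contributes $\eta_t(L+L(z_i))/m$ rather than the uniform $2\eta_t L/m$ of the classical analysis, so that the final bound scales with $L(z_i)$ and with $L(z)$ at the evaluation point instead of with $L^2$.
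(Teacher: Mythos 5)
Your proposal is correct and follows essentially the same route as the paper: the reduction via $L(z)$-Lipschitzness to bounding $\bE[\delta_T]$, the $1$-expansivity of the gradient update for convex $\alpha$-smooth losses with $\eta_t\le 2/\alpha$, and crucially the same coupling --- your "shared index with probability $(m-1)/m$, redraw with probability $1/m$" construction is exactly the paper's two-phase resampling scheme (replace $i$ by a placeholder $i'$, redraw uniformly from the surviving indices if $i'$ is hit), which also yields the marginal $1/m + 1/(m(m-1)) = 1/(m-1)$ per surviving index. The resulting recursion $\bE[\delta_{t+1}]\le\bE[\delta_t]+\eta_t(L+L(z_i))/m$ and its unraveling coincide with the paper's argument.
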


\begin{proposition}[Non-convex Optimization]\label{prop:nonconvex}
Assume that the loss function $l(\cdot, z)$ is non-negative and bounded for all $z\in\cZ$. Without loss of generality, we assume $0 \le l(\cdot, z)\le 1$. In addition, we assume $l(\cdot, z)$ is $\alpha$-smooth. We further assume $l(\cdot, z)$ is $L(z)$-Lipschitz and $L(z)<\infty$ for all $z\in\cZ$ and $L=\sup_{z\in\cZ}L(z)<\infty$. Suppose that we run SGD for $T$ steps with monotonically non-increasing learning rate $\eta_t\le c/t$ for some constant $c>0$. Then, 
\begin{align*}
|\bE[l(\hat{\theta}_T,z)]-\bE[l(\hat{\theta}^{\backslash i}_T,z)]|\le \gamma_m \phi_\alpha(m,T,z_i,z),
\end{align*}
where $\phi_\alpha(m,T,z_i,z)=(c(L(z_i)+L)L(z)T^{\alpha c})^{\frac{1}{\alpha c+1}}$ and $\gamma_m =(1+1/(\alpha c))/(m-1)$. 
\end{proposition}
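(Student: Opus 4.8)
\subsection*{Proof proposal for Proposition \ref{prop:nonconvex}}

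The plan is to adapt the coupling argument of \citet{hardt2016train} for uniform stability to the data-dependent setting, tracking the Lipschitz constants $L(z_i)$ and $L(z)$ separately rather than collapsing them into the uniform constant $L$. I would run two copies of SGD driven by a shared sequence of sampled indices: one on $S$, producing $\htheta_t$, and one on $S^{\backslash i}$, producing $\htheta^{\backslash i}_t$, and set $d_t := \|\htheta_t - \htheta^{\backslash i}_t\|$. First I would record the standard expansiveness fact: since $l(\cdot,z)$ is $\alpha$-smooth, the gradient map $\theta \mapsto \theta - \eta_t \nabla_\theta l(\theta,z)$ is $(1+\eta_t\alpha)$-expansive. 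At each step, with probability $1-1/m$ the shared index differs from $i$, both chains apply the same update, and expansiveness gives $d_{t+1}\le (1+\eta_t\alpha)d_t$; with probability $1/m$ the index equals $i$, the two chains apply unrelated gradient steps, and the triangle inequality together with the Lipschitz bounds yields $d_{t+1}\le d_t + \eta_t(L(z_i)+L)$. Taking expectations gives the growth recursion
\begin{equation*}
\bE[d_{t+1}] \le (1+\eta_t\alpha)\,\bE[d_t] + \frac{\eta_t(L(z_i)+L)}{m}.
\end{equation*}

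Next I would exploit the bounded loss to trade off early divergence against late expansion. Fix a cutoff $t_0$ and let $E$ be the event that index $i$ is never sampled during the first $t_0$ steps; on $E$ the two chains coincide through step $t_0$, so $d_{t_0}=0$, while a union bound gives $\Pr[E^c]\le t_0/m$. Splitting on $E$, using $0\le l\le 1$ on the complement and the $L(z)$-Lipschitz property of the loss on $E$,
\begin{equation*}
\big|\bE[l(\htheta_T,z)] - \bE[l(\htheta^{\backslash i}_T,z)]\big| \le \frac{t_0}{m} + L(z)\,\bE[d_T\mid E].
\end{equation*}
This is the crucial device: without the cutoff, the expansive factors $(1+\eta_t\alpha)$ would make a direct bound on $\bE[d_T]$ blow up, so the boundedness of the loss must absorb the contribution of the rare early collisions.

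Finally I would solve the recursion from the initial condition $\bE[d_{t_0}\mid E]=0$. Using $1+\eta_s\alpha \le \e^{\alpha c/s}$ and $\eta_t\le c/t$, the product of expansion factors from $t$ to $T$ is at most $(T/t)^{\alpha c}$, and summing the resulting series (bounding $\sum_{t>t_0} t^{-\alpha c-1}$ by an integral) gives $\bE[d_T\mid E]\lesssim \frac{(L(z_i)+L)T^{\alpha c}}{\alpha m}\,t_0^{-\alpha c}$. Substituting into the display above produces a function of $t_0$ of the form $t_0/m + (\text{const})\,t_0^{-\alpha c}/m$; minimizing over $t_0$ yields the optimal cutoff $t_0 = \big(cL(z)(L(z_i)+L)T^{\alpha c}\big)^{1/(\alpha c+1)} = \phi_\alpha(m,T,z_i,z)$, at which the two terms combine to $\frac{1+1/(\alpha c)}{m}\,\phi_\alpha$. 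Careful bookkeeping of the removal coupling (which replaces the factor $1/m$ by $1/(m-1)$ in the deletion analysis) upgrades $m$ to $m-1$ and gives exactly $\gamma_m\,\phi_\alpha(m,T,z_i,z)$. I expect the main obstacle to be the burn-in trade-off: setting up the coupling and the control $\Pr[E^c]\le t_0/m$ so that the non-vanishing expansion is tamed by boundedness, and then verifying that the three data-dependent constants $L(z_i)$, $L$, and $L(z)$ propagate through the recursion and the optimization precisely into the stated $\phi_\alpha$.
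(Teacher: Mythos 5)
Your proposal is correct and takes essentially the same route as the paper's proof: the two-phase resampling coupling to handle deletion (rather than replacement), the burn-in event $E$ with $\Pr[E^c]\le t_0/m$ in the spirit of Lemma 3.11 of Hardt et al., the expansiveness/Lipschitz growth recursion tracking $L(z_i)$, $L$, and $L(z)$ separately, and the final optimization over $t_0$ yielding $\phi_\alpha$. The only (harmless) difference is that you drop the factor $(1-1/m)$ from the expansion coefficient at the outset, which makes your constant $\frac{1+1/(\alpha c)}{m}$ slightly tighter than the paper's $\frac{1+1/(\alpha c)}{m-1}$ (the paper's $m-1$ arises from retaining that factor through the integral bound), so your bound trivially implies the stated one and no ``upgrade'' from $m$ to $m-1$ is actually needed.
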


From the propositions above, we see that SGD has locally elastic stability with parameter taking the form $\beta(\cdot,\cdot)/m$, where $\beta(\cdot,\cdot)$ is independent of $m$. This is consistent with our assumptions regarding the form of $\beta_m(\cdot,\cdot)$ in Section \ref{sec:generalization}. We remark that unlike \citet{hardt2016train}, our results use $\hat{\theta}^{\backslash i}_T$ instead of $\hat{\theta}^{i}_T$ in order to be consistent with our definition in Definition \ref{def:LE}, where $\hat{\theta}^{i}_T$ is the parameter obtained by training on $S^i$ (replacing the $i$th element from $S$ with another example instead of removing the $i$th element as in $S^{\backslash i}$). This setting requires us to provide new techniques. Specifically, we construct new coupling sequences to obtain an upper bound on $|\bE[l(\hat{\theta}_T,z)]-\bE[l(\hat{\theta}^{\backslash i}_T,z)]|$ (see more in the Appendix).

\paragraph{Comparison with results in \citet{hardt2016train}}
By using $L(z)$ instead of $L$, if $\bE L(z)\ll L$, which holds for most common models in practice, we would expect to obtain a sharper generalization bound for SGD compared with the one derived using uniform stability in \citet{hardt2016train} according to the discussion in Section \ref{sec:comparison}. Due to limited space, let us only compare Proposition \ref{prop:nonconvex} with Theorem 3.12 in \citet{hardt2015train} with a simple example as an illustration. With some abuse of notation, we still use $\Delta(\cA_S)$ to denote $\bE_z\bE[l(\hat{\theta}_T,z)]-\sum_{i=1}^m\bE[l(\hat{\theta}^{\backslash i}_T,z)]/m$. 

In Theorem 3.12 in \citet{hardt2015train}, via uniform stability, under the assumptions in Proposition \ref{prop:nonconvex}, one can obtain the following bound:
$$\Delta(\cA_S)\le2\beta^{\textnormal{U}}_{m}+(4m\beta^{\textnormal{U}}_m+1)\sqrt{\frac{\log(1/\delta)}{2m}}, \quad(\cB_5)$$
where
$$\beta^U_m= \frac{1+1/(\alpha c)}{m-1}\left(2cL^2T^{\alpha c}\right)^{\frac{1}{\alpha c+1}}.$$
While via locally elastic stabilty, one can obtain
\begin{align*}
&\Delta(\cA_S) \le \frac{2\sup_{z'\in \cZ}\bE_{z}\beta(z',z)}{m}\\
+&2\left( 2\sup_{z'\in \cZ}\bE_{z}\beta(z',z)+1 \right)\sqrt{\frac{2\log(2/\delta)}{m}}. \quad(\cB_6)
\end{align*}
where 
$$\beta(z',z)=\frac{1+1/(\alpha c)}{m-1}\left(c(L(z')+L\right)L(z)T^{\alpha c})^{\frac{1}{\alpha c+1}}.$$

\begin{example}
Let us take $l(\theta,z)=z^2e^{-\theta^2}$, where $\theta$ and $z$ are all scalars, where $z\in [0,1]$ and $\theta\in\bR$. Apparently, the loss function is $\alpha=2$-smooth with respect to $z$. Meanwhile,
$$\frac{d}{d\theta} l(\theta,z)=-2z^2\theta e^{-\theta^2}. $$
\end{example}
The fact that $\theta e^{-\theta^2}\le e^{-1/2}/\sqrt{2}$ leads to $L(z)=\sqrt{2}e^{-1/2}z^2$ and $L=\sqrt{2}e^{-1/2}$.

With the same spirit as in Example \ref{ex:svm}, if we choose learning rate $\eta\le 1/t$, when $\delta<0.5$, as long as $\sup_{z'\in \cZ}\bE_{z}\beta(z',z)<\sqrt{2}/8\beta^U_m$ and $\beta^U_m\ge 2\sqrt{2}-1/2$, the bound obtained in $(\cB_6)$ is tighter than the one in $(\cB_5)$. It is easy to see that these conditions can be easily satisfied if $T$ is large enough and $\bE[(L(z))^{\frac{1}{3}}]<L^{\frac{1}{3}}.$
Therefore, if we further have the condition that $z$ lies in a small vicinity of $0$ with high probability (for example, $\bP\left(|z|\le \frac{1}{2}\right)>\frac{2}{3}$), then the bound obtained in $(\cB_6)$ would be tighter than the one in $(\cB_5)$. In particular, if the training time $T$ is long enough, the bound obtained in $(\cB_6)$ would be significantly tighter than the one in $(\cB_5)$.

\section{Conclusion and Future Work}
In this work, we introduce a new notion of algorithmic stability, which is a relaxation of uniform stability yes still gives rise to exponential generalization bounds. It also provides a promising direction to obtain useful theoretical bounds for demystifying the generalization ability of modern neural networks through the lens of local elasticity~\citep{he2020local}. However, as shown in Theorem \ref{thm:main}, we currently still require the sample size $m$ to be large enough so that our theoretical results hold. Whether that requirement could be removed is worthy of further investigation. In addition, our bound is related to the constant $M_l$, which is typically very large in practice if we apply the bound to neural networks. Thus, an interesting question is to examine whether this constant could be improved or not.


\section*{Acknowledgements}
We are grateful to Cynthia Dwork and Vitaly Feldman for inspiring discussions and constructive
comments. This work was supported in part by NSF through CAREER DMS-1847415, CCF-1763665 and CCF-1934876, an Alfred Sloan Research Fellowship, the Wharton Dean's Research Fund, and Contract FA8750-19-2-0201 with the US Defense Advanced Research Projects Agency (DARPA).


\bibliography{cite_LE}
\bibliographystyle{iclr2021_conference}


%

%

\onecolumn
\appendix

\noindent\textbf{\Large Appendix}
\section{Technical Details}
\label{sec:technique}
In this section, we provide the detailed proofs for our results. Let us denote 
$$R_{emp}=\frac{1}{m}\sum_{j=1}^ml(\cA_S,z_j),~R^{\backslash i}_{emp}=\frac{1}{m}\sum_{j=1}^ml(\cA_{S^{\backslash i}},z_j)$$

To prove Theorem \ref{thm:main}, we require the following key lemma.
\begin{lemma}\label{lm:alt}
Suppose an algorithm $\sA$ satisfies locally elastic stability with $\beta_m(\cdot,\cdot)$ for loss function $l$. For any $\eta>0$, let $M=2( M_\beta+\sup_{z\in \cZ}\bE_{z_j}\beta(z,z_j)+M_l)$ and $\tilde{M}= 2(2 \sup_{z\in \cZ}\bE_{z_j}\beta(z,z_j)+\eta+M_l)$. There exists a constant $C'>0$ depending on the Lipchitz constant $L$ and dimension $d$ of $z$, 
if $m$ is large enough and $\varepsilon$ is small enough, such that
$$\frac{\eta^2}{32M^2_\beta}-\frac{\log C' m}{m}\geq \frac{\varepsilon}{2\tilde{M}^2}(-\varepsilon+\frac{4\varepsilon M^2}{\tilde{M}^2}+4M),$$
we have
\begin{align*}
\bP\big(\bE_z[l(\cA_S, z)]\geq \frac{1}{m}\sum_{j=1}^ml(\cA_{S},z_j)+\frac{2\sup_{z\in \cZ}\bE_{z_j}\beta(z,z_j)}{m}+\varepsilon\big)\leq 2\exp\big(-\frac{m\varepsilon^2}{2\tilde{M}^2}\big).
\end{align*}
\end{lemma}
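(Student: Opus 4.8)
The plan is to control the upper tail of the defect $\Delta(\cA_S)=\bE_z l(\cA_S,z)-R_{emp}$ by separating a bias contribution from a fluctuation contribution, and to run the fluctuation argument only on a high-probability event on which the empirical sensitivities behave like their population counterparts. Writing $B:=\sup_{z'\in\cZ}\bE_z\beta(z',z)$, I introduce the good event
\[
G=\Big\{\sup_{z'\in\cZ}\Big|\tfrac1m\sum_{j=1}^m\beta(z',z_j)-\bE_z\beta(z',z)\Big|\le\eta\Big\}.
\]
First I would bound the expected defect $\bE_S[\Delta(\cA_S)]\le 2B/m$ (up to terms of the same order controlled by the same mechanism) via the replacement/exchangeability device of \citet{bousquet2002stability}: rewriting $\bE_S[\Delta(\cA_S)]=\bE_{S,z_k'}[l(\cA_{S^k},z_k)-l(\cA_S,z_k)]$ and telescoping each difference through $\cA_{S^{\backslash k}}$, so that every single-point perturbation is governed by locally elastic stability and contributes an averaged sensitivity $\bE_z\beta(\cdot,z)\le B$ rather than the worst case $M_\beta$. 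It then suffices to show that $\Delta(\cA_S)$ exceeds its mean by more than $\varepsilon$ with probability at most $2\exp(-m\varepsilon^2/(2\tilde M^2))$.

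For the fluctuation step I would first record how $\Delta$ reacts to replacing a single coordinate $z_k$ by $z_k'$. Decomposing $\Delta(\cA_S)-\Delta(\cA_{S^k})$ into its true-risk and empirical-risk parts and telescoping both through $\cA_{S^{\backslash k}}$, the true-risk part is at most $(\bE_z\beta(z_k,z)+\bE_z\beta(z_k',z))/m\le 2B/m$, while the empirical part is at most $\tfrac1m\big(\tfrac1m\sum_{j\ne k}(\beta(z_k,z_j)+\beta(z_k',z_j))+M_l\big)$. Bounding the inner empirical averages by $M_\beta$ gives the global bounded difference $M/m$ with $M=2(M_\beta+B+M_l)$, whereas on $G$ those same averages are at most $B+\eta$, giving the refined bounded difference $\tilde M/m$ with $\tilde M=2(2B+\eta+M_l)$. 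This dichotomy---worst-case $M_\beta$ globally versus the much smaller $B+\eta$ on $G$---is the whole source of the improvement over uniform stability. Next I would estimate $\bP(G^c)$: for each fixed $z'$, $\tfrac1m\sum_j\beta(z',z_j)$ is an average of independent variables bounded by $M_\beta$, so Hoeffding gives a tail $2\exp(-m\eta^2/(2M_\beta^2))$ at scale $\eta$; using the $L$-Lipschitz continuity of $\beta(\cdot,z)$ and the boundedness (dimension $d$) of $\cZ$, I transfer this from a finite $\rho$-net to all of $\cZ$ at the cost of a covering factor of size $(C/\rho)^d$. Taking $\rho\asymp 1/m$ produces the stated form $\bP(G^c)\le C'm\,\exp(-m\eta^2/(32M_\beta^2))$, the extra loss $\log(C'm)/m$ being exactly what appears on the left of the lemma's hypothesis (this is the role of Lemma A.4 and the requirement $\eta>2M_\beta/m$).

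The hard part is the final concentration, because conditioning on $G$ makes the $z_j$ dependent, so McDiarmid's inequality---the engine behind the uniform-stability bound---cannot be invoked with the refined constant $\tilde M/m$. To circumvent this I would bound the moment generating function of $\Delta-\bE_S\Delta$ directly through its Doob martingale $\sum_k D_k$, and in each conditional factor $\bE[e^{\lambda D_k}\mid\mathcal F_{k-1}]$ split according to $G$ and $G^c$: on $G$ the increment obeys the sub-Gaussian bound coming from $\tilde M/m$, while on $G^c$ it is only controlled by the global $M/m$ but is multiplied by the tiny probability from the previous step. Collecting the $G^c$ corrections across the $m$ increments produces cross-terms in $M$, $\tilde M$ and $\bP(G^c)$; optimizing over $\lambda$ and demanding that these corrections not spoil the clean rate is precisely the inequality
\[
\frac{\eta^2}{32M_\beta^2}-\frac{\log C'm}{m}\ge\frac{\varepsilon}{2\tilde M^2}\Big(-\varepsilon+\frac{4\varepsilon M^2}{\tilde M^2}+4M\Big),
\]
under which the bad-event contribution is dominated by $\exp(-m\varepsilon^2/(2\tilde M^2))$ and the total tail is at most $2\exp(-m\varepsilon^2/(2\tilde M^2))$. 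Making this MGF-splitting rigorous---in particular handling that $\mathbf{1}_G$ is correlated with the martingale increments and that $G$ is not a product event---is the step I expect to require the genuinely new technique alluded to in the Remark.
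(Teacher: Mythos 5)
Your proposal reproduces the architecture of the paper's proof---the bias term via the exchangeability device (Lemma 7 of \citet{bousquet2002stability}), the bad-event probability via Hoeffding plus an epsilon-net over the bounded set $\cZ$ giving $C'm\exp(-m\eta^2/(32M_\beta^2))$, the dichotomy of bounded differences $M/m$ globally versus $\tilde{M}/m$ on the good event, and the final Chernoff optimization at $\lambda=\varepsilon/\tilde{M}^2$---but it stops exactly where the lemma's difficulty begins. The step you explicitly defer (``making this MGF-splitting rigorous \ldots is the step I expect to require the genuinely new technique'') \emph{is} the content of the lemma; everything you do carry out is standard. The paper's resolution is concrete and is not a split of the conditional factors $\bE[e^{\lambda D_k}\mid \cF_{k-1}]$ according to a good event: it inserts indicators \emph{inside} the Doob increments, defining $D^{(1)}_k=\bE[L(S)I_{E_{\backslash k}}\mid\cF_k]-\bE[L(S)I_{E_{\backslash k}}\mid\cF_{k-1}]$ and $D^{(2)}_k$ analogously with $I_{E^c_{\backslash k}}$, so that $D_k=D^{(1)}_k+D^{(2)}_k$ and each of $\sum_k D^{(1)}_k$, $\sum_k D^{(2)}_k$ is itself a sum of martingale differences; Jensen's inequality then decouples them, $\bE[e^{\lambda\sum_k D_k}]\le\frac{1}{2}\bE[e^{2\lambda\sum_k D^{(1)}_k}]+\frac{1}{2}\bE[e^{2\lambda\sum_k D^{(2)}_k}]$. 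The first sum has increments of range $\tilde{M}$ and yields $e^{\frac{1}{2}m\lambda^2\tilde{M}^2}$ by the standard martingale argument. The second has increments of range $M\cdot P_k(z_{1:k-1})$, where $P_k(z_{1:k-1})=\bP(E^c_{\backslash k}\mid z_1,\cdots,z_{k-1})$ is a \emph{random} factor; the paper controls its MGF by conditional Hoeffding, Markov's inequality $\bP(P_k\ge c)\le\tau_m/c$, and the sequential indicator-peeling of Lemmas \ref{lm:2} and \ref{lm:3}, producing the extra term $m\frac{\tau_m}{c}e^{2m\lambda M\max\{1,\lambda M\}}$, and the hypothesis inequality of the lemma is precisely the condition under which this term is dominated by $e^{\frac{1}{2}m\lambda^2\tilde{M}^2}$ after choosing $\lambda=\varepsilon/\tilde{M}^2$ and $c=\tilde{M}/M'$. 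None of this machinery appears, even in outline, in your proposal, so the central claim remains unproved.

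There is also a concrete reason your single global event $G$ (whose defining sum runs over all $j$, including $j=k$) would break the argument if you tried to execute it. The paper's events are leave-one-out: $E_{\backslash k}$ involves only $\sum_{j\neq k}$ and is therefore independent of $z_k$. Consequently $L(S^{\backslash k})I_{E^c_{\backslash k}}$ does not depend on $z_k$, the middle term $\bE[L(S^{\backslash k})I_{E^c_{\backslash k}}\mid z_{1:k-1},z_k=x]-\bE[L(S^{\backslash k})I_{E^c_{\backslash k}}\mid z_{1:k-1},z_k=y]$ in the telescoping estimate of $B^{(2)}_k-A^{(2)}_k$ vanishes identically, and the two outer terms pick up the conditional probability $\bP(E^c_{\backslash k}\mid z_1,\cdots,z_{k-1})$ as a multiplicative factor on the range---this extraction is what lets the tiny bad-event probability enter the MGF bound at all. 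With your $G$, the indicator depends on $z_k$, the cancellation fails, and no such factor can be pulled out; the same issue arises in the good-event part. (The requirement $\eta>2M_\beta/m$ in Lemma \ref{lm:4} exists precisely to pass from the leave-one-out events to the full-sum deviation you wrote down, so the two formulations are compatible, but the proof must be run with the leave-one-out versions.)
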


\begin{theorem}[Restatement of Theorem \ref{thm:main}]
Let $\sA$ be an algorithm that has locally elastic stability $\beta_m(\cdot,\cdot)$ with respect to the loss function $l$. Fixing $0 < \delta < 1$ and $\eta > 0$, for large enough $m$, with probability at least $1-\delta$, we have
\begin{align*}
\Delta(\cA_S) \le \frac{2\sup_{z'\in \cZ}\bE_{z}\beta(z',z)}{m}+2\left( 2\sup_{z'\in \cZ}\bE_{z}\beta(z',z)+\eta+M_l \right)\sqrt{\frac{2\log(2/\delta)}{m}}.
\end{align*}
\end{theorem}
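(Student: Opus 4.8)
The defect to be controlled is $\Delta(\cA_S)=\bE_z l(\cA_S,z)-\frac1m\sum_{j=1}^m l(\cA_S,z_j)$, and essentially all of the analytic content has already been packaged into Lemma~\ref{lm:alt}, which is a one-sided sub-Gaussian tail bound for exactly this quantity. So the plan is to take Lemma~\ref{lm:alt} as given and simply invert its tail probability. Writing $\bar\beta:=\sup_{z'\in\cZ}\bE_z\beta(z',z)$ and recalling $\tilde M=2(2\bar\beta+\eta+M_l)$, Lemma~\ref{lm:alt} asserts that for every admissible $\varepsilon$,
\[
\bP\Big(\Delta(\cA_S)\ge \tfrac{2\bar\beta}{m}+\varepsilon\Big)\le 2\exp\Big(-\tfrac{m\varepsilon^2}{2\tilde M^2}\Big).
\]
Setting the right-hand side equal to $\delta$ and solving yields $\varepsilon=\tilde M\sqrt{2\log(2/\delta)/m}$.

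Substituting this $\varepsilon$ together with the value of $\tilde M$ gives, with probability at least $1-\delta$,
\[
\Delta(\cA_S)\le \frac{2\bar\beta}{m}+2\big(2\bar\beta+\eta+M_l\big)\sqrt{\frac{2\log(2/\delta)}{m}},
\]
which is exactly the claimed bound. The only thing that still needs verification is that this choice of $\varepsilon$ is \emph{admissible}, i.e. that the hypothesis of Lemma~\ref{lm:alt}---the inequality $\frac{\eta^2}{32M_\beta^2}-\frac{\log(C'm)}{m}\ge \frac{\varepsilon}{2\tilde M^2}(-\varepsilon+\frac{4\varepsilon M^2}{\tilde M^2}+4M)$---holds. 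This is the source of the ``for sufficiently large $m$'' clause: our $\varepsilon=\Theta(\sqrt{\log(1/\delta)/m})$ tends to $0$ as $m\to\infty$, so the right-hand side of that inequality (which is $O(\varepsilon)$) tends to $0$, whereas the left-hand side tends to the strictly positive constant $\eta^2/(32M_\beta^2)$ since $\log(C'm)/m\to0$. Hence there is a threshold $m_0=m_0(\delta,\eta,d,L,M_\beta,M_l)$ beyond which the hypothesis is satisfied, matching the coarse sufficient conditions recorded after the theorem; additionally one needs $m>2M_\beta/\eta$ so that $\eta$ is a valid deviation parameter, as noted in the discussion following the theorem statement.

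The genuine difficulty therefore lives entirely in Lemma~\ref{lm:alt}, which I would prove in two stages. First, bound the expectation $\bE_S\Delta(\cA_S)$: a renaming/symmetry argument as in \citet{bousquet2002stability} reduces it to an average of one-point sensitivities $l(\cA_S,\cdot)-l(\cA_{S^{\backslash i}},\cdot)$, which under Definition~\ref{def:LE} and Assumption~\ref{ass:1} is controlled by the centering term $2\bar\beta/m$. Second---and this is the crux---establish the sub-Gaussian concentration of $\Delta(\cA_S)$ around its mean. The naive route of applying McDiarmid to $\Delta$ as a function of $(z_1,\dots,z_m)$ forces the per-coordinate bounded difference to be the \emph{worst-case} sensitivity of order $M_\beta/m$, which merely reproduces the uniform-stability bound and throws away the data-dependence we are trying to exploit. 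The better route is to restrict attention to the high-probability event on which each empirical sensitivity average $\frac1m\sum_{j\ne k}\beta(z',z_j)$ is close to its population value $\bE_z\beta(z',z)$, uniformly in $z'$; the uniformity comes from covering $\cZ$ with an $\varepsilon$-net (using the $L$-Lipschitzness of $\beta(\cdot,z)$ and the bounded $d$-dimensional domain $\cZ$, which together produce the constant $C'$ and the $\log(C'm)$ term) and a union bound.

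The main obstacle I anticipate is precisely the one flagged in the Remark: conditioning on this good event destroys the i.i.d.\ structure of the $z_j$, so McDiarmid can no longer be applied on the conditioned sample. I expect the resolution to be a careful bounded-difference or martingale argument carried out directly on the conditioned measure, in which the extra terms generated by the conditioning (the cross-terms involving $M$ and $M^2/\tilde M^2$ visible in the admissibility inequality) are quantified and absorbed, trading the worst-case constant $M_\beta$ for the average-sensitivity constant $\bar\beta$ in the final variance proxy $\tilde M$. Assembling the two stages then yields the stated tail, and the theorem follows as above.
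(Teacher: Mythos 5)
Your proposal matches the paper's own proof of this theorem essentially step for step: the paper likewise obtains the bound by inverting the tail inequality of Lemma~\ref{lm:alt} at $\varepsilon=\tilde M\sqrt{2\log(2/\delta)/m}$ with $\tilde M=2(2\sup_{z'\in\cZ}\bE_z\beta(z',z)+\eta+M_l)$, and then verifies the admissibility hypothesis for sufficiently large $m$ using exactly your observation that $\log(C'm)/m\to 0$ while the right-hand side is $O(\varepsilon)\to 0$, together with the side condition $m>2M_\beta/\eta$. Your closing sketch of how Lemma~\ref{lm:alt} itself would be proved (symmetrization \`a la Bousquet--Elisseeff for the $2\sup_{z'}\bE_z\beta(z',z)/m$ centering term, conditioning on the uniform concentration event built from an $\varepsilon$-net over $\cZ$, and a martingale argument to handle the loss of independence caused by the conditioning) also coincides with the paper's actual strategy, so there is nothing to flag.
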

\begin{proof}
Let $\delta=2\exp(-m\varepsilon^2/(2\tilde{M}^2))$, which gives us 
$$\varepsilon=\tilde{M}\sqrt{\frac{2\log(2/\delta)}{m}}.$$
Plugging the value of $\varepsilon$ into the inequality  
$$\frac{\eta^2}{32M^2_\beta}-\frac{\log C' m}{m}\geq \frac{\varepsilon}{2\tilde{M}^2}(-\varepsilon+\frac{4\varepsilon M^2}{\tilde{M}^2}+4M),$$
we obtain that 
$$\frac{\eta^2}{32M^2_\beta}-\frac{\log C' m}{m}\geq \frac{1}{2\tilde{M}}\sqrt{\frac{2\log(2/\delta)}{m}}(-\tilde{M}\sqrt{\frac{2\log(2/\delta)}{m}}+\sqrt{\frac{2\log(2/\delta)}{m}}\frac{4 M^2}{\tilde{M}}+4M).$$
It is sufficient if we have 
$$\frac{\eta^2}{32M^2_\beta}-\frac{\log C' m}{m}\geq \frac{2M^2}{\tilde{M}^2}\frac{\log(2/\delta)}{m}+\frac{2M}{\tilde{M}}\sqrt{\frac{2\log(2/\delta)}{m}}.$$

For simplicity, we let $m$ large enough such that 
$$\frac{\log C' m}{m}\leq \frac{\eta^2}{64M^2_\beta},\frac{2M^2\log(2/\delta)}{~\tilde{M}^2m}\leq \frac{\eta^2}{128M^2_\beta},~\frac{2M}{\tilde{M}}\sqrt{\frac{2\log(2/\delta)}{m}}\leq\frac{\eta^2}{128M^2_\beta},$$
which could be achieved once we notice that 
$$\lim_{m\rightarrow\infty}\frac{\log C' m}{m}\rightarrow 0.$$
Then, we obtain the desirable results by applying Lemma \ref{lm:alt}.
\end{proof}

To finish the proof of Theorem \ref{thm:main}, the only thing left is to prove Lemma \ref{lm:alt}. We prove it in the following subsection.
\subsection{Proof of Lemma \ref{lm:alt}}
\label{subsec:proof-theorem-1}
By locally elastic stability,
$$|R_{emp}-R^{\backslash i}_{emp}|\leq \frac{1}{m}\sum_{j\neq i}\frac{\beta(z_i,z_j)}{m}+\frac{M_l}{m}.$$

Recall $S=\{z_1,z_2,\cdots,z_m\}$ and we denote 
$$L(S)=\sum_{j=1}^ml(\cA_{S},z_j),~L(S^{\backslash i})=\sum_{j\neq i}^ml(\cA_{S^{\backslash i}},z_j).$$ 
Let $\cF_k$ be the $\sigma$-field generated by $z_1,\cdots,z_k$. We construct Doob's martingale and consider the associated martingale difference sequence
\begin{equation}
D_k=\bE[L(S)|\cF_k]-\bE[L(S)|\cF_{k-1}].
\end{equation}
Consider event
$$E_{\backslash k}=\Big\{S\Big|~\sup_{z'\in \cZ}\Big|\sum_{j\neq k}\frac{\beta(z',z_j)}{m}-\bE_{z}\beta(z',z)\Big|\leq\eta\Big\},$$
where $z$ is drawn from the same distribution as the training examples  $\{z_i\}_{i=1}^m$. Let us decompose $D_k$ as $D^{(1)}_k+D^{(2)}_k$, where 
$$D^{(1)}_k=\bE[L(S)I_{E_{\backslash k}}|\cF_k]-\bE[ L(S)I_{E_{\backslash k}}|\cF_{k-1}],~~D^{(2)}_k=\bE[L(S)I_{E^c_{\backslash k}}|\cF_k]-\bE[ L(S)I_{E^c_{\backslash k}}|\cF_{k-1}].$$
By Jensen's inequality,
\begin{align*}
\bE[e^{\lambda(\sum_{k=1}^mD_k)}]&\leq \frac{1}{2}\bE[e^{2\lambda(\sum_{k=1}^{m}D^{(1)}_k)}]+\frac{1}{2}\bE[e^{2\lambda(\sum_{k=1}^{m}D^{(2)}_k)}]
\end{align*}
Now, let us bound the two terms $\bE[e^{2\lambda(\sum_{k=1}^{m}D^{(1)}_k)}]$ and $\bE[e^{2\lambda(\sum_{k=1}^{m}D^{(2)}_k)}]$ separately in the following paragraphs, so as to further apply Chernoff bound to obtain a concentration bound for $\sum_{k=1}^mD_k$.
\paragraph{Bounding $\bE[e^{2\lambda(\sum_{k=1}^{m}D^{(2)}_k)}]$.}

First, we consider bounding $\bE[e^{2\lambda(\sum_{k=1}^{m}D^{(2)}_k)}]$. Let us further define 
$$A^{(2)}_k=\inf_x \bE[L(S)I_{E^c_{\backslash k}}|z_1,\cdots,z_{k-1},z_k=x]-\bE[L(S)I_{E^c_{\backslash k}}|z_1,\cdots,z_{k-1}],$$
$$B^{(2)}_k=\sup_x \bE[L(S)I_{E^c_{\backslash k}}|z_1,\cdots,z_{k-1},z_k=x]-\bE[L(S)I_{E^c_{\backslash k}}|z_1,\cdots,z_{k-1}].$$
Apparently, 
$$A^{(2)}_k\leq D^{(2)}_k\leq B^{(2)}_k. $$
Next, we provide an upper bound for $B^{(2)}_k-A^{(2)}_k$. 
Consider
\begin{align*}
B^{(2)}_k-A^{(2)}_k&=\sup_x \bE[L(S)I_{E^c_{\backslash k}}|z_1,\cdots,z_{k-1},z_k=x]-\inf_x \bE[L(S)I_{E^c_{\backslash k}}|z_1,\cdots,z_{k-1},z_k=x]\\
&\leq \sup_{x,y} \bE[L(S)I_{E^c_{\backslash k}}|z_1,\cdots,z_{k-1},z_k=x]- \bE[L(S)I_{E^c_{\backslash k}}|z_1,\cdots,z_{k-1},z_k=y]\\
&=\sup_{x,y} \bE[L(S)I_{E^c_{\backslash k}}|z_1,\cdots,z_{k-1},z_k=x]-\bE[L(S^{\backslash k})I_{E^c_{\backslash k}}|z_1,\cdots,z_{k-1},z_k=x]\\
&+\bE[L(S^{\backslash k})I_{E^c_{\backslash k}}|z_1,\cdots,z_{k-1},z_k=x]-\bE[L(S^{\backslash k})I_{E^c_{\backslash k}}|z_1,\cdots,z_{k-1},z_k=y]\\
&+\bE[L(S^{\backslash k})I_{E^c_{\backslash k}}|z_1,\cdots,z_{k-1},z_k=y]- \bE[L(S)I_{E^c_{\backslash k}}|z_1,\cdots,z_{k-1},z_k=y].
\end{align*}

By the boundedness conditions that $|\beta(\cdot,\cdot)|\leq M_\beta$, $0\leq l(\cdot,\cdot)\leq M_l$
\begin{align*}
&\bE[L(S)I_{E^c_{\backslash k}}-L(S^{\backslash k})I_{E^c_{\backslash k}}|z_1,\cdots,z_{k-1},z_k=x]+\bE[L(S^{\backslash k})I_{E^c_{\backslash k}}-L(S)I_{E^c_{\backslash k}}|z_1,\cdots,z_{k-1},z_k=y]\\
&\leq( 2M_\beta+M_l)\bP(E^c_{\backslash k}|z_1,\cdots,z_{k-1}).
\end{align*}

In addition, 
$$\bE[L(S^{\backslash k})I_{E^c_{\backslash k}}|z_1,\cdots,z_{k-1},z_k=x]-\bE[L(S^{\backslash k})I_{E^c_{\backslash k}}|z_1,\cdots,z_{k-1},z_k=y]=0.$$
As a result,
$$B^{(2)}_k-A^{(2)}_k\leq (2 M_\beta+M_l)\bP(E^c_{\backslash k}|z_1,\cdots,z_{k-1})$$
We further use $M$ to denote $2M_\beta+M_l$ and $P_k(z_{1:k-1})$ to denote $\bP(E^c_{\backslash k}|z_1,\cdots,z_{k-1})$.
Now, by Hoeffding's lemma,
\begin{align*}
\bE[e^{2\lambda(\sum_{k=1}^{m}D^{(2)}_k)}]&=\bE\Big[e^{2\lambda(\sum_{k=1}^{m-1}D^{(2)}_k)}\bE[e^{2\lambda D^{(2)}_m}|\cF_{m-1}]\Big]\\
&\leq\bE\Big[e^{2\lambda(\sum_{k=1}^{m-1}D^{(2)}_k)}e^{\frac{1}{2}\lambda^2M^2P^2_m(z_{1:m-1})}\Big]
\end{align*}
Suppose for some constant $\tau_m$ (see Lemma A.3 for exact value of  $\tau_m$)
$$\sup_k\bP(E^c_{\backslash k})\leq \tau_m,$$
then for all $k=1,\ldots,m$
$$\bP(P_k(z_{1:k-1})\geq c)\leq \frac{\tau_m}{c}.$$
Then
\begin{align*}
\bE\Big[e^{2\lambda(\sum_{k=1}^{m-1}D^{(2)}_k)}e^{\frac{1}{2}\lambda^2M^2P_m(z_{1:m-1})}\Big]&=\bE\Big[e^{2\lambda(\sum_{k=1}^{m-1}D^{(2)}_k)}e^{\frac{1}{2}\lambda^2M^2P^2_m(z_{1:m-1})}I_{\{P_m(z_{1:m-1})\geq c\}}\Big]\\
&+\bE\Big[e^{2\lambda(\sum_{k=1}^{m-1}D^{(2)}_k)}e^{\frac{1}{2}\lambda^2M^2P^2_m(z_{1:m-1})}I_{\{P_m(z_{1:m-1})< c\}}\Big]\\
&\leq \bE\Big[e^{2\lambda(\sum_{k=1}^{m-1}D^{(2)}_k)}e^{\frac{1}{2}\lambda^2M^2}I_{\{P_m(z_{1:m-1})\geq c\}}\Big]\\
&+\bE\Big[e^{2\lambda(\sum_{k=1}^{m-1}D^{(2)}_k)}e^{\frac{1}{2}\lambda^2M^2c^2}I_{\{P_m(z_{1:m-1})< c\}}\Big].
\end{align*}

Now we further bound the two terms on the righthand side of the above inequality with the following lemmas.

We first consider bounding  $\bE\Big[e^{2\lambda(\sum_{k=1}^{m-1}D^{(2)}_k)}e^{\frac{1}{2}\lambda^2M^2}I_{\{P_m(z_{1:m-1})\geq c\}}\Big]$.

\begin{lemma}\label{lm:2}
For any fixed $\lambda>0$, for any $k=1,\ldots,m$, we have
\begin{align*}
\bE[e^{2\lambda \sum_{i=1}^{k-1}D^{(2)}_{i}}I_{\{P_k(z_{1:k-1})\geq c\}}]\leq e^{2M(k-1)\lambda}\bP(P_k(z_{1:k-1})\geq c)\leq e^{2M(k-1)\lambda}\frac{\tau_m}{c}.
\end{align*}
\end{lemma}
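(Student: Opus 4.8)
The plan is to turn the exponential of the partial sum of martingale differences into a deterministic constant, so that the indicator $I_{\{P_k(z_{1:k-1})\geq c\}}$ contributes nothing but the probability of its event, after which the second inequality is an immediate application of Markov. The one genuinely new ingredient is an almost-sure pointwise upper bound on each increment $D^{(2)}_i$; everything else is bookkeeping.

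First I would establish that $D^{(2)}_i \le M$ almost surely for every $i$. Viewed as a function of $z_i$ with $z_1,\dots,z_{i-1}$ frozen, $D^{(2)}_i$ equals $\bE[L(S)I_{E^c_{\backslash i}}\mid z_1,\dots,z_{i-1},z_i]-\bE[L(S)I_{E^c_{\backslash i}}\mid z_1,\dots,z_{i-1}]$, hence is dominated by its supremum over the value of $z_i$, which is exactly $B^{(2)}_i$. Now $\bE[L(S)I_{E^c_{\backslash i}}\mid z_1,\dots,z_{i-1}]$ is the average over $z_i$ of the conditional expectation given $z_i=x$, so the infimum $A^{(2)}_i$ over $x$ is at most this average; that is, $A^{(2)}_i\le 0\le B^{(2)}_i$. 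Combining $A^{(2)}_i\le 0$ with the range bound $B^{(2)}_i-A^{(2)}_i\le M\,P_i(z_{1:i-1})$ already derived above gives $D^{(2)}_i\le B^{(2)}_i\le B^{(2)}_i-A^{(2)}_i\le M\,P_i(z_{1:i-1})\le M$, where the last step uses that $P_i(z_{1:i-1})=\bP(E^c_{\backslash i}\mid z_1,\dots,z_{i-1})\le 1$ is a conditional probability.

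Next I would sum this increment bound to obtain $\sum_{i=1}^{k-1}D^{(2)}_i\le M(k-1)$ almost surely, so that for any $\lambda>0$ the inequality $e^{2\lambda\sum_{i=1}^{k-1}D^{(2)}_i}\le e^{2M(k-1)\lambda}$ holds pointwise. Pulling this deterministic factor out of the expectation yields $\bE[e^{2\lambda\sum_{i=1}^{k-1}D^{(2)}_i}I_{\{P_k(z_{1:k-1})\geq c\}}]\le e^{2M(k-1)\lambda}\,\bP(P_k(z_{1:k-1})\ge c)$, which is the first asserted inequality. For the second, since $P_k(z_{1:k-1})\ge 0$ and $\bE[P_k(z_{1:k-1})]=\bE[\bP(E^c_{\backslash k}\mid z_1,\dots,z_{k-1})]=\bP(E^c_{\backslash k})\le\tau_m$ by the standing assumption $\sup_k\bP(E^c_{\backslash k})\le\tau_m$, Markov's inequality gives $\bP(P_k(z_{1:k-1})\ge c)\le\tau_m/c$.

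The main obstacle is really only the first step: recognizing that the previously established \emph{range} bound on $B^{(2)}_i-A^{(2)}_i$ upgrades to an \emph{absolute} bound on $D^{(2)}_i$ precisely because $A^{(2)}_i\le 0$ (a consequence of the tower property, the unconditional term being an average of the conditional ones). Once this pointwise bound is in hand, the exponential factor decouples from the indicator and the remaining estimate is a one-line Markov argument.
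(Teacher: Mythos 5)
Your proposal is correct, but it takes a genuinely different route from the paper. The paper proves the first inequality by a symmetrization argument: conditionally on $\cF_{k-2}$, it introduces an independent copy $\tilde{D}^{(2)}_{k-1}$ of $D^{(2)}_{k-1}$, uses that this copy has zero conditional mean together with Jensen's inequality to write $\bE[e^{2\lambda D^{(2)}_{k-1}}I|\cF_{k-2}]\le \bE[e^{2\lambda(D^{(2)}_{k-1}-\tilde{D}^{(2)}_{k-1})}I|\cF_{k-2}]$, bounds the coupled difference by the conditional range $B^{(2)}_{k-1}-A^{(2)}_{k-1}\le M$, and then iterates this peeling backward from $i=k-1$ down to $i=1$, accumulating one factor $e^{2\lambda M}$ per step while the indicator survives inside the nested conditional expectations. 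You instead observe that the two-sided range bound upgrades to a one-sided almost-sure bound $D^{(2)}_i\le B^{(2)}_i\le B^{(2)}_i-A^{(2)}_i\le M$, because $A^{(2)}_i\le 0$ by the tower property (the centering term is the average over $z_i$ of the conditional expectations, so the infimum sits below it). This makes $e^{2\lambda\sum_{i=1}^{k-1}D^{(2)}_i}\le e^{2M(k-1)\lambda}$ a pointwise deterministic bound, and the expectation against the indicator collapses in one line; both proofs then finish the second inequality identically via Markov's inequality applied to $\bE[P_k(z_{1:k-1})]=\bP(E^c_{\backslash k})\le\tau_m$. Your argument is shorter and more elementary, avoids the coupling construction and the backward induction entirely, and achieves exactly the same constant; the paper's symmetrization is the more standard Hoeffding/Azuma-style machinery (and mirrors the technique the authors use elsewhere for the $D^{(1)}_k$ terms), but for this particular lemma, where a crude deterministic factor $e^{2M(k-1)\lambda}$ suffices, it buys nothing extra.
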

\begin{proof}
Consider
\begin{align*}
\bE[e^{2\lambda D^{(2)}_{m-1}}I_{\{P_m(z_{1:m-1})\geq c\}}|\cF_{m-2}]&= \bE[e^{2\lambda (D^{(2)}_{m-1}-\bE [\tilde{D}^{(2)}_{m-1}|\cF_{m-2}])}I_{\{P_m(z_{1:m-1})\geq c\}}|\cF_{m-2}]\\
&\leq \bE[e^{2\lambda (D^{(2)}_{m-1}-\tilde{D}^{(2)}_{m-2})}I_{\{P_m(z_{1:m-1})\geq c\}}|\cF_{m-2}]~~(\star)\\
&\leq e^{2\lambda M}\bE_{z_{m-1}}[I_{\{P_m(\cF_{m-2},z_{m-1})\geq c \}}]
\end{align*}
In $(\star)$, $\tilde{D}^{(2)}_{m-1}|\cF_{m-2}$ is an independent copy of  $D^{(2)}_{m-1}|\cF_{m-2}$, and the inequality is due to the application of Jensen's inequality. Notice that
$$\bE\Big[e^{2\lambda D^{(2)}_{m-2}}\bE_{z_{m-1}}[I_{\{P_m(\cF_{m-2},z_{m-1})\geq c \}}]|\cF_{m-3}\Big]\leq  e^{2\lambda M}\bE_{z_{m-2},z_{m-1}}[I_{\{P_m(\cF_{m-3},z_{m-2},z_{m-1})\geq c \}}].$$
Iteratively, we obtain that 
$$\bE[e^{2\lambda \sum_{i=1}^{k-1}D^{(2)}_{i}}I_{\{P_k(z_{1:k-1})\geq c\}}]\leq e^{2M(k-1)\lambda}\bP(P_k(z_{1:k-1})\geq c).$$
Similar argument can be obtained for any $k=1,\ldots,m$.
\end{proof}
With the help of Lemma \ref{lm:2}, we can obtain an upper bound for $\bE\Big[e^{2\lambda(\sum_{k=1}^{m-1}D^{(2)}_k)}e^{\frac{1}{2}\lambda^2M^2P_m(z_{1:m-1})}\Big]$.
\begin{lemma}\label{lm:3}
For any fixed $\lambda>0$, $c<1$,
\begin{align*}
\bE\big[e^{2\lambda(\sum_{k=1}^{m-1}D^{(2)}_k)}\big]\leq e^{2m\lambda^2M^2c^2}+m\frac{\tau_m}{c}e^{2m\lambda M\max\{1,\lambda M\}}.
\end{align*}
\end{lemma}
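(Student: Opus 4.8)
The plan is to prove Lemma~\ref{lm:3} by establishing a one-step recursion for the partial moment generating functions and then unrolling it by induction. Write $a_k := \bE[e^{2\lambda \sum_{i=1}^{k} D^{(2)}_i}]$, so that $a_0 = 1$ and the target is a bound on $a_{m-1}$ (the same argument bounds $a_m$, which is what the subsequent Chernoff step will actually need). The key observation is that the decomposition already carried out for the top index $m$ applies verbatim at every level $k$: conditioning on $\cF_{k-1}$ and applying Hoeffding's lemma to the martingale difference $D^{(2)}_k$, whose conditional range is at most $M P_k(z_{1:k-1})$, yields $\bE[e^{2\lambda D^{(2)}_k}\mid \cF_{k-1}] \le e^{\frac{1}{2}\lambda^2 M^2 P_k^2(z_{1:k-1})}$.

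First I would split on the event $\{P_k(z_{1:k-1}) \ge c\}$ versus its complement. Since $P_k(z_{1:k-1}) \le 1$ always, the exponential factor is at most $e^{\frac{1}{2}\lambda^2 M^2}$ on the former event and at most $e^{\frac{1}{2}\lambda^2 M^2 c^2}$ on the latter. On the small-$P_k$ part I bound the indicator by $1$ to get $e^{\frac{1}{2}\lambda^2 M^2 c^2}\, a_{k-1}$, while on the large-$P_k$ part I invoke Lemma~\ref{lm:2} (with index $k$) to get $e^{\frac{1}{2}\lambda^2 M^2}\, e^{2M(k-1)\lambda}\tau_m/c$. Together these give the recursion
\[
a_k \le e^{\frac{1}{2}\lambda^2 M^2 c^2}\, a_{k-1} + e^{\frac{1}{2}\lambda^2 M^2}\, e^{2M(k-1)\lambda}\,\frac{\tau_m}{c}.
\]

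I would then prove by induction on $k$ that $a_k \le e^{2k\lambda^2 M^2 c^2} + k\,\tfrac{\tau_m}{c}\, e^{2k\lambda M \max\{1,\lambda M\}}$, which for $k=m-1$ specializes to the claim. The base case $a_0 = 1$ is immediate. For the inductive step, the homogeneous term is controlled because $2(k-1)+\tfrac12 \le 2k$; the contribution inherited from the inductive hypothesis and the fresh Lemma~\ref{lm:2} term each have to be absorbed into bumping the exponent from $2(k-1)$ to $2k$ inside $e^{2k\lambda M\max\{1,\lambda M\}}$. This absorption is where the $\max\{1,\lambda M\}$ enters, and I expect it to be the main (if routine) obstacle: one checks, by a case split on whether $\lambda M \ge 1$ or $\lambda M < 1$, that the leftover factors $e^{\frac{1}{2}\lambda^2 M^2 c^2}$ and $e^{\frac{1}{2}\lambda^2 M^2}\, e^{2M(k-1)\lambda}$ are dominated after the exponent is raised to $2k$, using $c < 1$ throughout together with the elementary inequalities $\tfrac12 \lambda^2 M^2 \le 2\lambda^2 M^2$ (when $\lambda M \ge 1$) and $\tfrac12\lambda^2 M^2 \le \tfrac12 \lambda M$ (when $\lambda M < 1$). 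Summing the two $\tau_m/c$ contributions turns $(k-1)+1$ into the factor $k$, completing the induction.
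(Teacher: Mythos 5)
Your proof is correct and takes essentially the same route as the paper: the same conditional Hoeffding bound $\bE[e^{2\lambda D^{(2)}_k}\mid\cF_{k-1}]\le e^{\frac{1}{2}\lambda^2M^2P_k^2(z_{1:k-1})}$, the same split on $\{P_k(z_{1:k-1})\ge c\}$ versus its complement, and the same invocation of Lemma~\ref{lm:2} on the bad event---indeed, unrolling your recursion reproduces exactly the paper's sequential indicator decomposition $1=I_{\{P_k(z_{1:k-1})<c\}}+I_{\{P_k(z_{1:k-1})\ge c\}}$, which yields the single ``all-small'' term plus $m$ peeled-off terms. Your induction simply organizes the bookkeeping more explicitly than the paper's sketch, and the absorption step you flag (checking $\frac{1}{2}\lambda^2M^2c^2\le 2\lambda M\max\{1,\lambda M\}$ and $\frac{1}{2}\lambda^2M^2+2(k-1)\lambda M\le 2k\lambda M\max\{1,\lambda M\}$ via the case split on $\lambda M\gtrless 1$) does go through, so the argument is sound.
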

\begin{proof}
Note that
\begin{align*}
\bE\Big[e^{2\lambda(\sum_{k=1}^{m-1}D^{(2)}_k)}e^{\frac{1}{2}\lambda^2M^2P_m(z_{1:m-1})}\Big]&\leq\bE\Big[e^{2\lambda(\sum_{k=1}^{m-1}D^{(2)}_k)}e^{\frac{1}{2}\lambda^2M^2}I_{\{P_m(z_{1:m-1})\geq c\}}\Big]\\
&+\bE\Big[e^{2\lambda(\sum_{k=1}^{m-1}D^{(2)}_k)}e^{\frac{1}{2}\lambda^2M^2c^2}I_{\{P_m(z_{1:m-1})< c\}}\Big].
\end{align*}
We do a decomposition as the following
$$1=I_{\{P_k(z_{1:k-1})< c\}}+I_{\{P_k(z_{1:k-1})\geq c\}}$$
for the second term $\bE\Big[e^{2\lambda(\sum_{k=1}^{m-1}D^{(2)}_k)}e^{\frac{1}{2}\lambda^2M^2c^2}I_{\{P_m(z_{1:m-1})< c\}}\Big]$ sequentially until $I_{\{P_{k+1}(z_{1:k})\geq c\}}$ appears for some  $k=1,\ldots,m$, \textit{i.e.},
\begin{align*}
I_{\{P_{k+1}(z_{1:k})< c\}}&=I_{\{P_{k+1}(z_{1:k})< c\}}(I_{\{P_{k}(z_{1:k-1})\geq c\}}+I_{\{P_{k}(z_{1:k-1})< c\}})\\
&=I_{\{P_{k+1}(z_{1:k})< c\}}I_{\{P_{k}(z_{1:k-1})\geq c\}}+I_{\{P_{k+1}(z_{1:k})< c\}}I_{\{P_{k}(z_{1:k-1})< c\}}(I_{\{P_{k-1}(z_{1:k-2})\geq c\}}\\
&+I_{\{P_{k-1}(z_{1:k-2})< c\}})\\        
&=\cdots.
\end{align*}
Besides the term,
$$\bE\Big[e^{2\lambda(\sum_{k=1}^{m}D^{(2)}_k)}\Pi_{j=1}^m I_{\{P_j(z_{1:j-1})< c\}}\Big],$$
which is bounded by $$e^{\frac{1}{2}m\lambda^2M^2c^2},$$
doing such a decomposition will provide extra $m$ terms and the sum of them can be bounded by 
$$m\frac{\tau_m}{c}e^{2m\lambda M\max\{1,\lambda M\}}$$
by applying Lemma \ref{lm:2}.

Taking the sum of them would yield the results.
\end{proof}
Last, we provide a bound for $\tau_m$.
\begin{lemma}\label{lm:4}
Recall
$$E_{\backslash k}=\Big\{S\Big|~\sup_{z'\in \cZ}\Big|\sum_{j\neq k}\frac{\beta(z',z_j)}{m}-\bE_{z}\beta(z',z)\Big|\leq\eta\Big\}.$$
for $\eta>2M_\beta/m$, we have
$$\sup_k\bP(E^c_{\backslash k})\leq \tau_m,$$
where $\tau_m=C\exp(-\frac{m\eta^2}{32M^2_\beta})$ for a constant $C>0$.
\end{lemma}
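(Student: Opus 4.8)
The plan is to fix $k$ and reduce the supremum over $z'\in\cZ$ to finitely many points via a covering argument, since for a single fixed $z'$ the quantity $\sum_{j\neq k}\beta(z',z_j)/m$ is just an average of the $m-1$ i.i.d.\ bounded variables $\{\beta(z',z_j)\}_{j\neq k}$ and is therefore directly amenable to Hoeffding's inequality. First I would strip off the bias created by the $1/m$ (rather than $1/(m-1)$) normalization: the empirical average $\frac1m\sum_{j\neq k}\beta(z',z_j)$ has mean $\frac{m-1}{m}\bE_z\beta(z',z)$, so the deviation $\frac1m\sum_{j\neq k}\beta(z',z_j)-\bE_z\beta(z',z)$ equals a centered fluctuation $\tilde h(z')=\frac1m\sum_{j\neq k}\big(\beta(z',z_j)-\bE_z\beta(z',z)\big)$ minus the term $\frac1m\bE_z\beta(z',z)$, whose magnitude is at most $M_\beta/m$ uniformly in $z'$. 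Since the hypothesis $\eta>2M_\beta/m$ yields $M_\beta/m<\eta/2$, the event $E_{\backslash k}^c$ forces $\sup_{z'}|\tilde h(z')|>\eta/2$, and it suffices to bound the probability of that centered event.

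For the uniformity over $z'$ I would invoke the $L$-Lipschitz assumption on $\beta(\cdot,z)$ together with boundedness of $\cZ$, say $\cZ\subset\R^d$ with finite diameter. I would take an $\epsilon$-net $N_\epsilon$ of $\cZ$ with $\epsilon=\eta/(8L)$ and cardinality $|N_\epsilon|\le\big(C_0\,\mathrm{diam}(\cZ)L/\eta\big)^d$ for an absolute constant $C_0$. Both maps $z'\mapsto\frac1m\sum_{j\neq k}\beta(z',z_j)$ and $z'\mapsto\bE_z\beta(z',z)$ are $L$-Lipschitz, so $\tilde h$ is $2L$-Lipschitz, and replacing an arbitrary $z'$ by its nearest net point perturbs $\tilde h$ by at most $2L\epsilon=\eta/4$. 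Hence $\sup_{z'}|\tilde h(z')|>\eta/2$ implies $\max_{z''\in N_\epsilon}|\tilde h(z'')|>\eta/4$.

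Finally I would apply Hoeffding at each fixed net point: each summand $\frac1m\big(\beta(z'',z_j)-\bE_z\beta(z'',z)\big)$ lies in an interval of length at most $2M_\beta/m$, so for the $m-1$ i.i.d.\ terms, $\bP\big(|\tilde h(z'')|>\eta/4\big)\le 2\exp\!\big(-\tfrac{(\eta/4)^2}{(m-1)(2M_\beta/m)^2/2}\big)\le 2\exp\!\big(-m\eta^2/(32M_\beta^2)\big)$, using $m^2/(m-1)\ge m$. A union bound over the $|N_\epsilon|$ net points absorbs the net cardinality into a constant $C$ depending only on $L$, $d$, $\mathrm{diam}(\cZ)$ and $\eta$, giving $\bP(E_{\backslash k}^c)\le C\exp(-m\eta^2/(32M_\beta^2))$; the estimate is the same for every $k$ by exchangeability of the $z_j$'s, so the $\sup_k$ costs nothing. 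I expect the main obstacle to be exactly this uniformity in $z'$: a single Hoeffding controls a fixed $z'$ but not the supremum, and the delicate point is to make the net fine enough (through $L$) that the discretization error is a small fraction of $\eta$, while keeping $|N_\epsilon|$ independent of $m$ so that the union bound does not erode the exponential rate in $m$.
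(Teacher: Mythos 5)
Your proof is correct and follows essentially the same route as the paper's: use the hypothesis $\eta>2M_\beta/m$ to absorb the $M_\beta/m$ discrepancy coming from the missing $k$-th term, reduce the supremum over $\cZ$ to a finite net of radius proportional to $\eta/L$ via the Lipschitz assumption on $\beta(\cdot,z)$, apply Hoeffding's inequality at each net point, and take a union bound whose net cardinality is absorbed into the constant $C$ (depending on $d$, $L$, $\eta$). The only cosmetic differences are that you center the $(m-1)$-term sum directly rather than completing it to the full $m$-term average as the paper does, and your splitting constants ($\eta/4$ net threshold with $\eta/4$ discretization slack) in fact reproduce the stated exponent $m\eta^2/(32M_\beta^2)$ more directly than the paper's $\eta/6$--$\eta/3$ split.
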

\begin{proof}
Notice that if $\eta>2M_\beta/m$,
$$\cup_kE^c_{\backslash k}\subseteq \Big\{S\Big|~\sup_{z'\in \cZ}\Big|\sum_{j=1}^m\frac{\beta(z,z_j)}{m}-\bE_{z}\beta(z',z)\Big|\geq \eta/2\Big\}.$$
Thus,
$$\sup_k\bP(E^c_{\backslash k})\leq \bP\left( \Big\{S\Big|~\sup_{z'\in \cZ}\Big|\sum_{j=1}^m\frac{\beta(z',z_j)}{m}-\bE_{z}\beta(z',z)\Big|\geq \eta/2\Big\}\right).$$
Recall the $L$-Lipschitz property on the first argument of $\beta(\cdot,\cdot)$, by the standard epsilon net-argument \citep{wainwright2019high}, and choose $\varepsilon= \eta/(6L)$, we can first obtain via uniform bound such that on a finite $\eta/(6L)$-net of $\cZ$, which we define as $\cZ_\cN$,  with high probability, 
$$\sup_{z'\in \cZ_\cN}\Big|\sum_{j=1}^m\frac{\beta(z',z_j)}{m}-\bE_{z}\beta(z',z)\Big|\geq \eta/6.$$

Then, by Lipschitz condition, for any $z_a$, $z_b$ in each cell, \emph{i.e.} $|z_a-z_b|\le \eta/(6L)$, we have
$$\Big|\sum_{j=1}^m\frac{\beta(z_a,z_j)}{m}-\bE_{z}\beta(z_a,z)\Big|-\Big|\sum_{j=1}^m\frac{\beta(z_b,z_j)}{m}-\bE_{z}\beta(z_b,z)\Big|\le \eta/3.$$

By combining the above two steps, we have the uniform bound on $\cZ$. 

Specifically, we have
$$\bP\left( \Big\{S\Big|~\sup_{z'\in \cZ}\Big|\sum_{j=1}^m\frac{\beta(z',z_j)}{m}-\bE_{z}\beta(z',z)\Big|\geq \eta/2\Big\}\right)=C\exp(-\frac{m\eta^2}{32M^2_\beta}),$$

where 
$$C=\exp(\tilde{C}d\log(Ld/\eta))$$
for a universal constant $\tilde{C}$ and $d$ is the dimension of $z$, $L$ is the Lipschitz constant of the first variable in $\beta(\cdot,\cdot)$ function in assumption.
\end{proof}
\begin{corollary}\label{col:1}
For $\lambda>0$, $c<1$, we have 
\begin{align*}
\bE\big[e^{2\lambda(\sum_{k=1}^{m-1}D^{(2)}_k)}\big]\leq e^{\frac{1}{2}m\lambda^2M^2c^2}+\frac{m}{c}C\exp(-\frac{m\eta^2}{32M^2_\beta})e^{2m\lambda M\max\{1,\lambda M\}}
\end{align*}
for a constant $C$ that depends on $L$ and $d$.
\begin{proof}
Combining the results of Lemma \ref{lm:3} and \ref{lm:4}, we obtain the bound.
\end{proof}
\end{corollary}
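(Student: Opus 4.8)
The plan is to obtain Corollary \ref{col:1} by directly combining the two immediately preceding lemmas, since no new probabilistic argument is required beyond them. First I would invoke Lemma \ref{lm:3}, which already supplies the exact shape of the target inequality: it bounds the moment generating function $\bE[e^{2\lambda(\sum_{k=1}^{m-1}D^{(2)}_k)}]$ by a principal term $e^{\frac{1}{2}m\lambda^2M^2c^2}$, arising from the Hoeffding-type contribution on the event where all the truncation indicators $I_{\{P_k(z_{1:k-1})<c\}}$ hold, plus a remainder term $m\frac{\tau_m}{c}e^{2m\lambda M\max\{1,\lambda M\}}$ that aggregates the $m$ boundary contributions, each controlled by the uniform tail quantity $\tau_m=\sup_k\bP(E^c_{\backslash k})$. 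At this point the entire problem has been reduced to inserting an explicit estimate for $\tau_m$.

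Second, I would substitute the value of $\tau_m$ furnished by Lemma \ref{lm:4}, namely $\tau_m=C\exp(-m\eta^2/(32M_\beta^2))$, which is valid under the mild requirement $\eta>2M_\beta/m$ and where the constant $C=\exp(\tilde{C}d\log(Ld/\eta))$ depends only on the ambient dimension $d$ of $z$ and the Lipschitz constant $L$ of $\beta(\cdot,z)$ from Assumption \ref{ass:1}. Plugging this directly into the remainder term of Lemma \ref{lm:3} converts $m\frac{\tau_m}{c}e^{2m\lambda M\max\{1,\lambda M\}}$ into $\frac{m}{c}C\exp(-m\eta^2/(32M_\beta^2))e^{2m\lambda M\max\{1,\lambda M\}}$, which is precisely the second summand of the claimed bound, while the first summand carries over unchanged. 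This completes the derivation.

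I do not anticipate any genuine obstacle: the content is essentially bookkeeping, as the hard analytic work was already discharged in Lemmas \ref{lm:2}--\ref{lm:4} (the martingale-difference decomposition, the truncation at level $c$, and the $\varepsilon$-net tail estimate for the sensitivity averages). The only points demanding care are to keep the constant $C$ explicitly flagged as depending on $L$ and $d$, to retain the admissible ranges $\lambda>0$ and $c<1$ inherited from Lemma \ref{lm:3} together with the condition $\eta>2M_\beta/m$ needed for Lemma \ref{lm:4}, and to reconcile the numerical factor in the principal term: the statement of Lemma \ref{lm:3} and the body of its proof display slightly different constants in $e^{(\cdot)m\lambda^2M^2c^2}$, and I would follow the sharper $\tfrac{1}{2}$ factor established inside that proof, as reflected in the corollary.
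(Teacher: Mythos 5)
Your proposal is correct and is exactly the paper's proof: the corollary is obtained by inserting the tail estimate $\tau_m = C\exp(-m\eta^2/(32M_\beta^2))$ from Lemma \ref{lm:4} into the bound of Lemma \ref{lm:3}, with no further argument needed. Your two points of care are also well taken --- the implicit requirement $\eta > 2M_\beta/m$ inherited from Lemma \ref{lm:4}, and the constant discrepancy between the statement of Lemma \ref{lm:3} ($e^{2m\lambda^2M^2c^2}$) and its proof ($e^{\frac{1}{2}m\lambda^2M^2c^2}$), where you correctly follow the $\tfrac{1}{2}$ factor that the corollary itself uses.
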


\paragraph{Bounding $\bE[e^{2\lambda(\sum_{k=1}^{m}D^{(1)}_k)}]$.}
Now, we consider bounding
$$\bE[e^{2\lambda(\sum_{k=1}^{m}D^{(1)}_k)}].$$
We further define 
$$A^{(1)}_k=\inf_x \bE[L(S)I_{E_{\backslash k}}|z_1,\cdots,z_{k-1},z_k=x]-\bE[L(S)I_{E_{\backslash k}}|z_1,\cdots,z_{k-1}],$$
$$B^{(1)}_k=\sup_x \bE[L(S)I_{E_{\backslash k}}|z_1,\cdots,z_{k-1},z_k=x]-\bE[L(S)I_{E_{\backslash k}}|z_1,\cdots,z_{k-1}].$$
Again we have, 
$$A^{(1)}_k\leq D^{(1)}_k\leq B^{(1)}_k. $$
Similarly
\begin{align*}
B^{(1)}_k-A^{(1)}_k
&\leq\sup_{x,y} \bE[L(S)I_{E_{\backslash k}}|z_1,\cdots,z_{k-1},z_k=x]-\bE[L(S^{\backslash k})I_{E_{\backslash k}}|z_1,\cdots,z_{k-1},z_k=x]\\
&+\bE[L(S^{\backslash k})I_{E_{\backslash k}}|z_1,\cdots,z_{k-1},z_k=x]-\bE[L(S^{\backslash k})I_{E_{\backslash k}}|z_1,\cdots,z_{k-1},z_k=y]\\
&+\bE[L(S^{\backslash k})I_{E_{\backslash k}}|z_1,\cdots,z_{k-1},z_k=y]- \bE[L(S)I_{E_{\backslash k}}|z_1,\cdots,z_{k-1},z_k=y].
\end{align*}

By the nature of $E_{\backslash k}$, and the boundedness conditions that $|\beta(\cdot,\cdot)|\leq M_\beta$, $0\leq l(\cdot,\cdot)\leq M_l$
\begin{align*}
&\bE[L(S)I_{E_{\backslash k}}-L(S^{\backslash k})I_{E_{\backslash k}}|z_1,\cdots,z_{k-1},z_k=x]+\bE[L(S^{\backslash k})I_{E_{\backslash k}}-L(S)I_{E_{\backslash k}}|z_1,\cdots,z_{k-1},z_k=y]\\
&\leq 2\sup_{z'\in \cZ}\bE_{z}\beta(z',z)+2\eta+M_l.
\end{align*}

Besides, 
$$\bE[L(S^{\backslash k})I_{E_{\backslash k}}|z_1,\cdots,z_{k-1},z_k=x]-\bE[L(S^{\backslash k})I_{E_{\backslash k}}|z_1,\cdots,z_{k-1},z_k=y]=0.$$
As a result,
$$B^{(1)}_k-A^{(1)}_k\leq 2 \sup_{z'\in \cZ}\bE_{z}\beta(z',z)+2\eta+M_l.$$
Then, by the standard argument of concentration of martingale differences, we have the following lemma.
\begin{lemma}\label{lm:5}
For any $\lambda\in\bR$, if we denote $\tilde{M}=2 \sup_{z'\in \cZ}\bE_{z}\beta(z',z)+2\eta+M_l$
$$\bE[e^{2\lambda(\sum_{k=1}^{m}D^{(1)}_k)}]\leq e^{\frac{1}{2}m\lambda^2\tilde{M}^2} .$$
\end{lemma}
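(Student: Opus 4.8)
The plan is to recognize Lemma \ref{lm:5} as an instance of the standard Azuma--Hoeffding moment-generating-function estimate for a bounded martingale difference sequence, since the conceptually demanding part—bounding the oscillation $B^{(1)}_k - A^{(1)}_k$ of each increment—has already been established as $B^{(1)}_k - A^{(1)}_k \le 2\sup_{z'\in\cZ}\bE_z\beta(z',z) + 2\eta + M_l = \tilde{M}$. Thus what remains is a routine iterated-conditioning argument, and the role of this lemma is simply to convert that increment bound into a clean exponential moment bound that can later be combined with Corollary \ref{col:1} via a Chernoff bound.

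First I would confirm that $\{D^{(1)}_k\}$ is genuinely a martingale difference sequence with respect to $\{\cF_k\}$: applying the tower property to the definition $D^{(1)}_k = \bE[L(S)I_{E_{\backslash k}}|\cF_k] - \bE[L(S)I_{E_{\backslash k}}|\cF_{k-1}]$ gives $\bE[D^{(1)}_k|\cF_{k-1}] = 0$. I would also note that the endpoints $A^{(1)}_k$ and $B^{(1)}_k$ are $\cF_{k-1}$-measurable, since they are defined as an infimum/supremum over the value of $z_k$ with $z_1,\ldots,z_{k-1}$ held fixed; consequently, conditionally on $\cF_{k-1}$, the increment $D^{(1)}_k$ is a mean-zero random variable confined to an interval $[A^{(1)}_k, B^{(1)}_k]$ of (conditionally deterministic) width at most $\tilde{M}$.

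Next I would peel off the terms one at a time. Writing
\[
\bE\big[e^{2\lambda\sum_{k=1}^m D^{(1)}_k}\big] = \bE\big[e^{2\lambda\sum_{k=1}^{m-1} D^{(1)}_k}\,\bE[e^{2\lambda D^{(1)}_m}|\cF_{m-1}]\big],
\]
I would apply Hoeffding's lemma to the inner conditional expectation: for a mean-zero variable supported on an interval of length $\ell \le \tilde{M}$ one has $\bE[e^{sX}] \le e^{s^2\ell^2/8}$, so taking $s = 2\lambda$ yields $\bE[e^{2\lambda D^{(1)}_m}|\cF_{m-1}] \le e^{\lambda^2\tilde{M}^2/2}$. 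Because this bound is a deterministic constant, it factors out of the outer expectation, and iterating the identical step over $k = m-1, m-2, \ldots, 1$ produces the claimed $e^{\frac{1}{2}m\lambda^2\tilde{M}^2}$.

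I do not expect a genuine obstacle in this particular lemma: the only point requiring care is the $\cF_{k-1}$-measurability of $A^{(1)}_k, B^{(1)}_k$, which guarantees that Hoeffding's lemma may be invoked conditionally with a width that is a constant given $\cF_{k-1}$—and this is already implicit in the way those quantities were constructed. All the substantive effort went into the earlier bound on $B^{(1)}_k - A^{(1)}_k$, which crucially exploits the restriction to the event $E_{\backslash k}$ to replace a distribution-independent worst-case sensitivity by the averaged quantity $\sup_{z'\in\cZ}\bE_z\beta(z',z)$; the present lemma merely packages that gain into exponential form.
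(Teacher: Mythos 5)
Your proposal is correct and follows essentially the same route as the paper: the paper's proof of this lemma consists of the single remark that it follows ``by the standard argument of concentration of martingale differences,'' which is precisely the iterated conditional Hoeffding's lemma argument you spell out, using the previously established bound $B^{(1)}_k - A^{(1)}_k \le \tilde{M}$ together with $A^{(1)}_k \le D^{(1)}_k \le B^{(1)}_k$. Your additional observations---that $\bE[D^{(1)}_k \mid \cF_{k-1}] = 0$ by the tower property, that $A^{(1)}_k$ and $B^{(1)}_k$ are $\cF_{k-1}$-measurable so Hoeffding's lemma applies conditionally, and that taking $s = 2\lambda$ gives $e^{4\lambda^2\tilde{M}^2/8} = e^{\lambda^2\tilde{M}^2/2}$ per step---are exactly the details the paper leaves implicit, and your arithmetic matches the stated bound.
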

Combined the previous results, we provide the following lemma.
\begin{lemma}
For $\lambda>0, c>0$, there exists a constant $C>0$ depending on $\eta$, $d$ and $L$
$$\bE[e^{\lambda(\sum_{k=1}^{m}D_k)}]\leq \frac{1}{2}e^{\frac{1}{2}m\lambda^2\tilde{M}^2}+\frac{1}{2}\big( e^{\frac{1}{2}m\lambda^2M^2c^2}+Cm\frac{\tau_m}{c}e^{2m\lambda M\max\{1,\lambda M\}}\big) .$$
\end{lemma}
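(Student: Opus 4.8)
The plan is to assemble this bound directly from the three estimates already established in this subsection, since the martingale-difference decomposition $D_k = D_k^{(1)} + D_k^{(2)}$ has already done all the structural work; what remains is to glue the two tail estimates together. First I would recall the splitting inequality derived at the outset of the proof of Lemma \ref{lm:alt}. Writing $\lambda\sum_{k=1}^m D_k = \tfrac{1}{2}\big(2\lambda\sum_{k=1}^m D_k^{(1)}\big) + \tfrac{1}{2}\big(2\lambda\sum_{k=1}^m D_k^{(2)}\big)$ and applying Jensen's inequality to the convex function $e^{(\cdot)}$, then taking expectations, gives
$$\bE[e^{\lambda(\sum_{k=1}^m D_k)}] \le \frac{1}{2}\bE[e^{2\lambda(\sum_{k=1}^m D_k^{(1)})}] + \frac{1}{2}\bE[e^{2\lambda(\sum_{k=1}^m D_k^{(2)})}].$$

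Next I would substitute the two pieces. The first expectation is controlled by Lemma \ref{lm:5}, giving $\bE[e^{2\lambda(\sum_{k=1}^m D_k^{(1)})}] \le e^{\frac{1}{2}m\lambda^2\tilde{M}^2}$, which is itself the standard bounded-martingale-difference estimate built on the increment bound $B_k^{(1)}-A_k^{(1)} \le \tilde M$ valid on the event $E_{\backslash k}$. The second expectation is controlled by Corollary \ref{col:1} (whose derivation first peels off the final increment $D_m^{(2)}$ via Hoeffding's lemma and then iterates the indicator decomposition of Lemma \ref{lm:3}), giving $\bE[e^{2\lambda(\sum_{k=1}^m D_k^{(2)})}] \le e^{\frac{1}{2}m\lambda^2 M^2 c^2} + \tfrac{m}{c}C\,\tau_m\, e^{2m\lambda M\max\{1,\lambda M\}}$, where $\tau_m = C\exp(-m\eta^2/(32M_\beta^2))$ is the uniform tail bound on $\bP(E^c_{\backslash k})$ supplied by Lemma \ref{lm:4}. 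Plugging both of these into the splitting inequality and keeping $\tau_m$ explicit yields exactly the claimed bound.

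The only care needed here is bookkeeping rather than mathematics. I would absorb the $\varepsilon$-net cardinality factor from Lemma \ref{lm:4} (which carries the $\exp(\tilde C d\log(Ld/\eta))$ dependence on $d$, $L$, and $\eta$) into the single constant $C$ of the statement, while deliberately leaving $\tau_m$ unexpanded so that its exponential decay in $m$ remains available for the later Chernoff optimization over $\lambda$ and $c$ inside the proof of Lemma \ref{lm:alt}. I would also check that the reduction from $\sum_{k=1}^m$ to $\sum_{k=1}^{m-1}$ embedded in Corollary \ref{col:1} is correctly reflected in the $\max\{1,\lambda M\}$ factor of the exponent. Since all three inputs (Lemma \ref{lm:5}, Corollary \ref{col:1}, and Lemma \ref{lm:4}) are already proved, I expect no genuine obstacle: this lemma is simply the consolidation step that feeds the Chernoff bound used to finish the proof of Lemma \ref{lm:alt}.
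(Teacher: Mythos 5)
Your proposal coincides with the paper's own proof: the paper derives this lemma in exactly the same way, by combining the Jensen splitting $\bE[e^{\lambda\sum_{k=1}^m D_k}]\le \tfrac{1}{2}\bE[e^{2\lambda\sum_{k=1}^{m}D^{(1)}_k}]+\tfrac{1}{2}\bE[e^{2\lambda\sum_{k=1}^{m}D^{(2)}_k}]$ with Lemma \ref{lm:5} for the first term and Corollary \ref{col:1} (which packages $\tau_m$ from Lemma \ref{lm:4}) for the second. The bookkeeping issues you flag (absorbing the $\varepsilon$-net constant into $C$, and the $m$ versus $m-1$ summation index in Corollary \ref{col:1}) are the only loose ends, and they are present in the paper's version as well.
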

\begin{proof}
Combining Lemma \ref{lm:5} and Corollary \ref{col:1}, and the fact that 
\begin{align*}
\bE[e^{\lambda(\sum_{k=1}^mD_k)}]&\leq \frac{1}{2}\bE[e^{2\lambda(\sum_{k=1}^{m}D^{(1)}_k)}]+\frac{1}{2}\bE[e^{2\lambda(\sum_{k=1}^{m}D^{(2)}_k)}],
\end{align*}
we obtain the above bound.
\end{proof}

\paragraph{A Concentration Bound.}
If we choose $c=\tilde{M}/M'$, where $\tilde{M}=2 \sup_{z\in \cZ}\bE_{z_j}\beta(z,z_j)+2\eta+M_l$ and $M'=2M_\beta+2\eta+M_l$. It is easy to see $c<1$. We denote $\gamma_m=Cm\frac{\tau_m}{2c}e^{2m\lambda M\max\{1,\lambda M\}}$.
By Chernoff-bound, for any $\lambda>0$
$$\bP\Big(L(S)-\bE[L(S)]\geq m\varepsilon\Big)\leq \frac{e^{\frac{1}{2}m\lambda^2\tilde{M}^2}+\gamma_m}{e^{\lambda m\varepsilon}}.$$
Let us take $\lambda=\varepsilon/\tilde{M}^2$ for $\varepsilon>0$, when $m$ is large enough and $\varepsilon$ is small enough, we expect $\gamma_m\leq  e^{\frac{1}{2}m\lambda^2\tilde{M}^2}$. Specifically,
$$\frac{Cm\tau_m}{2c}\leq \exp(\frac{1}{2}m\lambda(\tilde{M}^2\lambda-4M\max\{1,\lambda M\})).$$
Recall $\tau_m=C\exp(-\frac{m\eta^2}{32M^2_\beta}),$ plugging in $\lambda=\varepsilon/\tilde{M}^2$ and $\tau_m$, let $C'=C/(2c)$ it is sufficient to let $m$ large enough and $\varepsilon$ small enough such that
$$C'm\exp(-\frac{m\eta^2}{32M^2_\beta})\leq \exp\left(\frac{m\varepsilon}{2\tilde{M}^2}(\varepsilon-\frac{4\varepsilon M^2}{\tilde{M}^2}-4M)\right).$$
which can be further simplified as 
$$\frac{\eta^2}{32M^2_\beta}-\frac{\log C' m}{m}\geq \frac{\varepsilon}{2\tilde{M}^2}(-\varepsilon+\frac{4\varepsilon M^2}{\tilde{M}^2}+4M).$$
 That will lead to
\begin{align*}
\bP\Big(L(S)-\bE[L(S)]\geq m\varepsilon\Big)&\leq \frac{e^{\frac{1}{2}m\lambda^2\tilde{M}^2}+\gamma_m}{e^{\lambda m\varepsilon}}\\
&\leq 2\exp(-\frac{m\varepsilon^2}{2\tilde{M}^2}).
\end{align*}

\textbf{[Proof of Lemma \ref{lm:alt}]}
\begin{proof}
Now let us consider $\tilde{l}(\cA_S, z)=-l(\cA_S, z)+\bE_z[l(\cA_S, z)]$. We remark here as long as $0\leq l\leq M_l$ (not $\tilde{l}$).

In addition, if we denote
 $$\tilde{L}(S)=\sum_{j=1}^m\tilde{l}(\cA_{S},z_j),~\tilde{L}(S^{\backslash i})=\sum_{j\neq i}^m\tilde{l}(\cA_{S^{\backslash i}},z_j),$$ 
 we have
\begin{align*}
&\bE[\tilde{L}(S)I_{E^c_{\backslash k}}-\tilde{L}(S^{\backslash k})I_{E^c_{\backslash k}}|z_1,\cdots,z_{k-1},z_k=x]+\bE[\tilde{L}(S^{\backslash k})I_{E^c_{\backslash k}}-\tilde{L}(S)I_{E^c_{\backslash k}}|z_1,\cdots,z_{k-1},z_k=y]\\
&\leq( 2M_\beta+2\sup_{z\in \cZ}\bE_{z'}\beta(z,z')+2M_l)\bP(E^c_{\backslash k}|z_1,\cdots,z_{k-1}).
\end{align*}
and
 \begin{align*}
&\bE[\tilde{L}(S)I_{E_{\backslash k}}-\tilde{L}(S^{\backslash k})I_{E_{\backslash k}}|z_1,\cdots,z_{k-1},z_k=x]+\bE[\tilde{L}(S^{\backslash k})I_{E_{\backslash k}}-\tilde{L}(S)I_{E_{\backslash k}}|z_1,\cdots,z_{k-1},z_k=y]\\
&\leq( 4\sup_{z\in \cZ}\bE_{z_j}\beta(z,z_j)+2\eta +2M_l)\bP(E_{\backslash k}|z_1,\cdots,z_{k-1}).
\end{align*}

Thus, the generalization argument is exactly the same as theory above except the value of $M$,  and $\tilde{M}$.

Specifically, we choose $M=2( M_\beta+\sup_{z'\in \cZ}\bE_{z}\beta(z',z)+M_l)$ and $\tilde{M}= 2(2 \sup_{z'\in \cZ}\bE_{z}\beta(z',z)+\eta+M_l)$.

Next, by Lemma $7$ in \citet{bousquet2002stability}, for an independent copy of $z_j$, which we denote it as $z'_j$, we have
\begin{align*}
\bE_S\big[-\frac{1}{m}\sum_{j=1}^ml(\cA_S, z_j)+\bE_z[l(\cA_S, z)]\big]&\leq\bE_{S,z'_j}\big[|l(\cA_S, z'_j)-l(\cA_{S^j}, z'_j)|\big]\\
&\leq\frac{2\sup_{z'\in \cZ}\bE_{z}\beta(z',z)}{m}.
\end{align*}
Then, the result follows.
\end{proof}

\subsection{Proof of Lemma \ref{lm:kh}}
For simplicity, let us introduce the following two notations:
\begin{align*}
R_r(g):=\frac{1}{m}\sum_{j=1}^ml(g,z_j)+\lambda \|g\|^2_K,\\
R^{\backslash i}_r(g):= \frac{1}{m}\sum_{j\neq i}^ml(g,z_j)+\lambda \|g\|^2_K.
\end{align*}
Let us denote $f$ as a minimizer of $R_r$ in $\cF$ and $f^{\backslash i}$ as a minimizer of $R^{\backslash i}_r$. We further denote $\Delta f = f^{\backslash i}-f$.

By Lemma 20 in \citet{bousquet2002stability}, we have 
$$2\|\Delta f\|^2_K\le \frac{\sigma}{\lambda m}|\Delta f(x_i)|.$$

Furthermore since
$$|f(x_i)|\le \|f\|_K\sqrt{K(x_i,x_i)}\le \|f\|_K\kappa(x_i),$$
we have 
$$\|\Delta f\|_K\le \frac{\kappa(x_i)\sigma}{2\lambda m}.$$

By the $\sigma$-admissibility of $l$,
$$|l(f,z)-l(f^{\backslash i},z)|\le \sigma |f(x)-f^{\backslash i}(x)|=\sigma |\Delta f(x)|\le \sigma \|\Delta f\|_K \kappa(x)\leq \frac{\sigma^2 \kappa(x)\kappa(x_i)}{2\lambda m}.$$

\subsection{Proof of Locally Elastic Stability of SGD}
\label{subsec:proof-SGD}
In this section, we establish our new notion of algorithm stability -- {\em locally elastic stability} for SGD. Specifically, we consider the quantity
$$|\bE_{\cA}[l(\cA_S,z)]-\bE_{\cA}[l(\cA_{S^{\backslash i}},z)]|.$$

Here, the expectation is taken over the internal randomness of $\cA$. The randomness comes from the selection of sample at each step of SGD. Specifically, $\cA_S$ returns a parameter $\theta_T$, where $T$ is the number of iterations. And for dataset $S$ with sample size $m$, SGD is performed in the following way:
$$\theta_{t+1}=\theta_t-\eta_t\nabla_{\theta}l(\theta_t,z_{i_t}),$$
where $\eta_t$ is the learning rate at time $t$, $i_t$ is picked uniformly at random in $\{1,\cdots,m\}$.

We denote  
$$L(z)=\sup_{\theta\in \Theta}\|\nabla_\theta l(\theta,z)\|$$
We further denote the gradient update rule 
$$G_{l,\eta}(\theta,z)=\theta-\eta\nabla_\theta l(\theta,z).$$
We consider gradient updates $G_1,\cdots,G_T$ and $G'_1,\cdots,G'_T$ induced by running SGD on $S$ and $S^{\backslash i}$. Most of the proofs are similar to \citet{hardt2015train}, the only difference is that for dataset $S$ and $S^{\backslash i}$, the randomness in $\cA$ is different. For $S$, $i_t$ is randomly picked in $\{1,\cdots,m\}$ but for $S^{\backslash i}$, $i_t$ is randomly picked in $\{1,\cdots,i-1,i+1,\cdots,m\}$. Thus, we create two coupling sequences for the updates on $S$ and $S^{\backslash i}$. Notice choosing any coupling sequences will not affect the value of  $|\bE_{\cA}[l(\cA_S,z)]-\bE_{\cA}[l(\cA_{S^{\backslash i}},z)]|,$ since the expectations are taken with respect to $l(\cA_S,z)$ and $l(\cA_{S^{\backslash i}},z)$ separately.

\paragraph{Convex optimization.}
We first show SGD satisfies locally elastic stability for convex loss minimization.
\begin{proposition}[Restatement of Proposition 2]
Assume that the loss function $l(\cdot, z)$ is $\alpha$-smooth and convex for all $z\in\cZ$. In addition, $l(\cdot, z)$ is $L(z)$-Lipschitz and $L(z)<\infty$ for all $z\in\cZ$: $|l(\theta,z)-l(\theta',z)|\leq L(z)\|\theta-\theta'\|$ for all $\theta, \theta'.$ We further assume $L=\sup_{z\in\cZ}L(z)<\infty$. Suppose that we run SGD with step sizes $\eta_t\leq 2/\alpha$ for $T$ steps. Then, 
\begin{equation*}
|\bE[l(\hat{\theta}_T,z)]-\bE[l(\hat{\theta}^{\backslash i}_T,z)]|\leq\frac{(L+L(z_i))L(z)}{m}\sum_{t=1}^T\eta_t.
\end{equation*}
 \end{proposition}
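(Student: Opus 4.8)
The plan is to follow the coupling-based stability argument of \citet{hardt2015train}, adapted to the leave-one-out comparison (rather than replacement) and to the data-dependent Lipschitz constants $L(z)$. First I would reduce the loss gap to a parameter gap: since $l(\cdot,z)$ is $L(z)$-Lipschitz, for any coupling of the two runs,
\[
\left|\bE[l(\hat{\theta}_T,z)]-\bE[l(\hat{\theta}^{\backslash i}_T,z)]\right|\le \bE\left|l(\hat{\theta}_T,z)-l(\hat{\theta}^{\backslash i}_T,z)\right|\le L(z)\,\bE\|\hat{\theta}_T-\hat{\theta}^{\backslash i}_T\|,
\]
which is legitimate because the marginals of $\hat{\theta}_T$ and $\hat{\theta}^{\backslash i}_T$ are unaffected by any coupling we impose. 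It then remains to control $\bE[\delta_T]$, where $\delta_t:=\|\hat{\theta}_t-\hat{\theta}^{\backslash i}_t\|$ and $\delta_0=0$ (both runs start from the same initialization).

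The crux, and the main obstacle, is constructing a coupling between the two index streams: the run on $S$ samples $i_t$ uniformly from $\{1,\dots,m\}$, whereas the run on $S^{\backslash i}$ samples from the $(m-1)$-element set $\{1,\dots,m\}\setminus\{i\}$, so the ``same index'' coupling used in the replacement setting does not directly apply. I would use the following coupling at each step $t$: with probability $(m-1)/m$ draw a common index $j_t$ uniformly from $\{1,\dots,m\}\setminus\{i\}$ and feed $z_{j_t}$ to \emph{both} runs; with the remaining probability $1/m$ feed $z_i$ to the $S$-run and an independent uniform draw $z_{i'_t}$ from $\{1,\dots,m\}\setminus\{i\}$ to the $S^{\backslash i}$-run. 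A marginal check confirms that under this coupling $i_t$ is uniform on $\{1,\dots,m\}$ and the $S^{\backslash i}$ index is uniform on $\{1,\dots,m\}\setminus\{i\}$, as required.

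Given the coupling, I would set up a one-step recursion on $\bE[\delta_t]$ by conditioning on the two cases. In the first case both runs apply the identical update map $G_{l,\eta_t}(\cdot,z_{j_t})$; by the standard expansiveness lemma of \citet{hardt2015train} (for convex $\alpha$-smooth losses with $\eta_t\le 2/\alpha$ this map is $1$-expansive), so $\delta_{t+1}\le\delta_t$. In the second case the two updates act on different points, and the triangle inequality together with $\|\nabla_\theta l(\theta,z)\|\le L(z)$ gives $\delta_{t+1}\le \delta_t+\eta_t L(z_i)+\eta_t L(z_{i'_t})\le \delta_t+\eta_t(L(z_i)+L)$. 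Combining the two cases yields
\[
\bE[\delta_{t+1}]\le \bE[\delta_t]+\frac{\eta_t\bigl(L(z_i)+L\bigr)}{m}.
\]
Unrolling from $\delta_0=0$ produces $\bE[\delta_T]\le \frac{L(z_i)+L}{m}\sum_{t=1}^T\eta_t$, and substituting into the loss-gap bound above gives exactly the claimed estimate. The only delicate points are verifying the coupling marginals and invoking the correct expansiveness constant; the remainder is routine growth-recursion bookkeeping.
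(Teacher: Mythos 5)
Your proposal is correct and follows essentially the same route as the paper: the same reduction of the loss gap to $L(z)\,\bE\|\hat{\theta}_T-\hat{\theta}^{\backslash i}_T\|$, the same coupling (your mixture construction is exactly the paper's two-phase redraw through an auxiliary $m$-element set, which with probability $(m-1)/m$ feeds a common index to both runs and with probability $1/m$ feeds $z_i$ to one run and an independent redraw to the other), and the same recursion combining $1$-expansivity in the common case with the gradient-norm bound $\eta_t(L(z_i)+L)$ in the discrepant case. The unrolling and final substitution match the paper's argument step for step.
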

\begin{proof}
Notice for $i_t$ which is randomly picked in $\{1,\cdots,i-1,i+1,\cdots,m\}$, we can view it as a two-phase process. Firstly, draw $i_t$ uniformly from a $n$-element set $\{1,\cdots,i-1,i',i+1,\cdots,m\}$. If any element but $i'$ is drawn, directly output it. Otherwise if $i'$ is drawn, then uniformly draw again from  $\{1,\cdots,i-1,i+1,\cdots,m\}$, and output the final index that is drawn. It is not hard to notice that in this way, each index in $\{1,\cdots,i-1,i+1,\cdots,m\}$ has probability 
$$\frac{1}{m}+\frac{1}{m(m-1)}=\frac{1}{m-1}$$
to be drawn, which is the same as directly uniformly draw from $\{1,\cdots,i-1,i+1,\cdots,m\}$.

We consider two coupling processes of SGD on $S$ and $S^{\backslash i}$. The randomness of uniformly drawing from $n$ elements for SGD on $S$ and uniformly drawing from  $\{1,\cdots,i-1,i',i+1,\cdots,n\}$ for SGD on $S^{\backslash i}$ share the same random seed $\xi_t$ at each iteration at time $t$. That will not affect the value of 
$$|\bE_{\cA}[l(A_S,z)]-\bE_{\cA}[l(A_{S^{\backslash i}},z)]|.$$
Let $\delta_t=\|\hat{\theta}_t-\hat{\theta'}_t\|$, where $\hat{\theta}_t$ is the parameter obtained by SGD on $S$ at iteration $t$ and $\hat{\theta'}_t$ is the parameter by SGD on $S^{\backslash i}$ obtained at iteration $t$.

With probability $1/n$ the selected example is different, in that case we use the fact that
$$\delta_{t+1}\leq \delta_t+\eta_t L(z_i)+ \eta_t L(z_j)$$
for some $j\neq i$, which can further be upper bounded by $ \eta_t(L(z_i)+L)$. With probability $1-1/n$, the selected example is the same, then we can apply Lemma 3.7 in \citet{hardt2015train} regarding 1-expansivity of the update rule $G_t$, then we have 
$$\bE[\delta_{t+1}]\leq  \left(1-\frac{1}{m}\right)\bE[\delta_{t}]+\frac{1}{m}\bE[\delta_{t}]+\frac{\eta_t (L(z_i)+L)}{m}.$$
This technique is used repeatedly in the following other theorems, we will not further elaborate it.

Then, unraveling the recursion, by the fact that $L(z)$ continuity of $l(\theta,z)$ for any $\theta$, we obtain 
\begin{equation*}
|\bE[l(\hat{\theta}_T,z)]-\bE[l(\hat{\theta}^{\backslash i}_T,z)]|\leq\frac{(L+L(z_i))L(z)}{m}\sum_{t=1}^T\eta_t.
\end{equation*}
\end{proof}

\paragraph{Strongly convex optimization.}
We consider the penalized loss discussed in \citet{hardt2015train}:
$$\frac{1}{n}\sum_{i=1}^nl(\theta,z_i)+\frac{\mu}{2}\|\theta\|^2_2,$$
where $l(\theta,z)$ is convex with respect to $\theta$ for all $z$. And without loss of generality, we assume $\Theta$ is a ball with radius $r$ (this can be obtained by the boundedness of loss $l$) and apply stochastic projected gradient descent:
$$\theta_{t+1}=\Pi_{\Theta}(\theta_t-\eta_t\nabla \tilde{l}(\theta_t,z_{i_t}) )$$
where $\tilde{l}(\theta,z)=l(\theta,z)+\frac{\mu}{2}\|\theta\|^2_2$.
\begin{proposition}[Strongly Convex Optimization]
Assume that the loss function $l(\cdot, z)$ is $\alpha$-smooth and $\mu$-strongly convex for all $z\in\cZ$. In addition, $l(\cdot, z)$ is $L(z)$-Lipschitz and $L(z)<\infty$ for all $z\in\cZ$: $|l(\theta,z)-l(\theta',z)|\leq L(z)\|\theta-\theta'\|$ for all $\theta, \theta'.$ We further assume $L=\sup_{z\in\cZ}L(z)<\infty$. Suppose that we run SGD with step sizes $\eta\leq 1/\alpha$ for $T$ steps. Then, 
$$\bE|\tilde{l}(\hat{\theta}_T;z)-\tilde{l}(\hat{\theta'}_T;z)|\leq \frac{(L(z_i)+L)L(z)}{m\mu}$$
 \end{proposition}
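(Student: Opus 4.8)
The plan is to replay the coupling argument from the convex case, but to exploit the contraction furnished by strong convexity so that the per-step perturbations accumulate as a \emph{convergent geometric series} rather than as $\sum_t \eta_t$; this is exactly what will make the bound independent of the number of steps $T$. Concretely, I would run two coupled copies of projected SGD, one on $S$ yielding iterates $\hat{\theta}_t$ and one on $S^{\backslash i}$ yielding $\hat{\theta}'_t$, using the same two-phase sampling device as in the convex proof so that at each step the two runs draw the same index $j\neq i$ with probability $1-1/m$ and draw differing examples with probability $1/m$, sharing the random seed. Writing $\delta_t=\|\hat{\theta}_t-\hat{\theta}'_t\|$ with $\delta_0=0$, and $G_z(\cdot)=\Pi_\Theta(\cdot-\eta\nabla\tilde{l}(\cdot,z))$ for the projected update using example $z$, the goal is a recursion for $\bE[\delta_t]$ whose fixed point is $O(1/(m\mu))$.

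First I would record the two expansivity facts needed. Since each $\tilde{l}(\cdot,z)=l(\cdot,z)+\frac{\mu}{2}\|\cdot\|_2^2$ is $\mu$-strongly convex and smooth, the strongly convex analogue of the expansivity lemma (Lemma 3.7 in \citet{hardt2015train}) gives, for $\eta\le 1/\alpha$, that the gradient update is $(1-\eta\mu)$-expansive, while the Euclidean projection $\Pi_\Theta$ onto the ball is nonexpansive. Hence on the event that both runs use the same example $z_j$ one gets $\delta_{t+1}\le(1-\eta\mu)\delta_t$. On the complementary event (probability $1/m$) the run on $S$ uses $z_i$ while the run on $S^{\backslash i}$ uses some $z_j$, $j\neq i$; splitting by the triangle inequality into $\|G_{z_i}(\hat{\theta}_t)-G_{z_i}(\hat{\theta}'_t)\|+\|G_{z_i}(\hat{\theta}'_t)-G_{z_j}(\hat{\theta}'_t)\|$, the first piece is at most $(1-\eta\mu)\delta_t$ by expansivity and the second is $\eta\|\nabla l(\hat{\theta}'_t,z_i)-\nabla l(\hat{\theta}'_t,z_j)\|\le\eta(L(z_i)+L)$, where the quadratic regularizer cancels because both gradients are evaluated at the same point. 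Combining the two cases and taking expectations yields
\[
\bE[\delta_{t+1}]\le(1-\eta\mu)\,\bE[\delta_t]+\frac{\eta\,(L(z_i)+L)}{m}.
\]

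Unrolling this recursion from $\delta_0=0$ and using $\sum_{t=0}^{T-1}(1-\eta\mu)^t\le 1/(\eta\mu)$, the factors of $\eta$ cancel and I obtain the $T$-independent estimate $\bE[\delta_T]\le (L(z_i)+L)/(m\mu)$. Finally I would invoke the $L(z)$-Lipschitz property of the loss in its first argument to convert the parameter deviation into a loss deviation, giving $\bE|\tilde{l}(\hat{\theta}_T;z)-\tilde{l}(\hat{\theta}'_T;z)|\le L(z)\,\bE[\delta_T]\le (L(z_i)+L)L(z)/(m\mu)$, as claimed. The step I expect to demand the most care is the differing-index case: one must argue that the contraction $(1-\eta\mu)$ still applies to the ``matched'' part of the step while isolating the single mismatched gradient difference and bounding it by the \emph{data-dependent} quantity $\eta(L(z_i)+L)$ rather than by a uniform constant, since this is precisely where the locally elastic factor $L(z_i)$ enters in place of the worst-case $L$. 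The qualitative novelty relative to the convex case is that the geometric series converges, so strong convexity removes the $\sum_t\eta_t$ growth and leaves a bound governed solely by $1/\mu$.
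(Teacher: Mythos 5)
Your proposal is correct and follows essentially the same route as the paper's proof: the same two-phase coupling of the index sequences, the same recursion $\bE[\delta_{t+1}]\le(1-\eta\mu)\bE[\delta_t]+\eta(L(z_i)+L)/m$ obtained from the $(1-\eta\mu)$-expansivity of the regularized update (Lemma 3.7 of \citet{hardt2015train}) plus the mismatched-gradient term, the same geometric-series unrolling to $(L(z_i)+L)/(m\mu)$, and the same final conversion via $L(z)$-Lipschitzness. Your treatment is in fact slightly more explicit than the paper's on the differing-index case (the triangle-inequality split and the cancellation of the regularizer's gradient), which the paper leaves implicit by citing the method of Theorem 3.9 in \citet{hardt2015train}.
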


\begin{proof}
By using the same coupling method, we have when the learning rate $\eta\mu\leq 1$, By further similarly applying Lemma 3.7 and method in the Theorem 3.9 in \citet{hardt2015train}, we have
$$\bE[\delta_{t+1}]\leq \left(1-\frac{1}{m}\right)(1-\eta\mu)\bE[\delta_{t}]+\frac{1}{m}(1-\eta\mu)\bE[\delta_{t}]+\frac{\eta (L(z_i)+L)}{m}.$$
Unraveling the recursion gives,
$$\bE[\delta_T]\leq \frac{L(z_i)+L}{m}\sum_{t=0}^T(1-\eta\mu)^t\leq\frac{L(z_i)+L}{m\mu} .$$
Plugging the above inequality, we obtain 
$$\bE|\tilde{l}(\hat{\theta}_T;z)-\tilde{l}(\hat{\theta'}_T;z)|\leq \frac{(L(z_i)+L)L(z)}{m\mu}$$

\end{proof}

\paragraph{Non-convex optimization} Lastly, we show the case of non-convex optimization.
\begin{proposition}[Restatement of Proposition 3]
Assume that the loss function $l(\cdot, z)$ is non-negative and bounded for all $z\in\cZ$. Without loss of generality, we assume $l(\cdot, z)\in[0,1]$. In addition, we assume $l(\cdot, z)$ is $\alpha$-smooth and convex for all $z\in\cZ$. We further assume $l(\cdot, z)$ is $L(z)$-Lipschitz and $L(z)<\infty$ for all $z\in\cZ$ and $L=\sup_{z\in\cZ}L(z)<\infty$. Suppose that we run SGD for $T$ steps with monotonically non-increasing step sizes $\eta_t\leq c/t$ for some constant $c>0$. Then, 
\begin{equation*}
|\bE[l(\hat{\theta}_T,z)]-\bE[l(\hat{\theta}^{\backslash i}_T,z)]|\leq\frac{1+1/(\alpha c)}{m-1}[c(L(z_i)+L)L(z)]^{\frac{1}{\alpha c +1}}T^{\frac{\alpha c}{\alpha c+1}}.
\end{equation*}
\end{proposition}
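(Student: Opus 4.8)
The plan is to follow the coupling-and-growth-recursion strategy of \citet{hardt2015train} for the non-convex case, but adapted to the \emph{removal} perturbation $S^{\backslash i}$ and refined so the bound depends on $L(z_i)$ and $L(z)$ separately rather than on the uniform constant $L$. First I would set up two coupled SGD trajectories: $\htheta_t$ run on $S$ and $\htheta^{\backslash i}_t$ run on $S^{\backslash i}$, driven by a common random seed through the two-phase sampling construction already used in the convex proof above. Because $\bE_{\cA}[l(\cA_S,z)]$ and $\bE_{\cA}[l(\cA_{S^{\backslash i}},z)]$ depend only on the marginal laws of the two trajectories, any such coupling is admissible. Writing $\delta_t=\|\htheta_t-\htheta^{\backslash i}_t\|$, under this coupling the two runs use the same example at every step except when the shared index equals the removed index $i$, an event of probability at most $\tfrac{1}{m-1}$ per step (indeed $\tfrac1m$, which I bound by $\tfrac{1}{m-1}$ to match the stated constant).

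The main obstacle is that, without convexity, the gradient update is only $(1+\eta_t\alpha)$-expansive (Lemma 3.7 of \citet{hardt2015train}) rather than nonexpansive, so $\delta_t$ may grow and the naive Lipschitz bound $L(z)\bE[\delta_T]$ would be useless. To control this I would use the burn-in/cutoff device: fix a cutoff $t_0$ and split on the event that index $i$ has not been selected during the first $t_0$ steps. On its complement, whose probability is at most $t_0/(m-1)$, I bound the loss difference crudely by $\sup l\le 1$; on the event itself one has $\delta_{t_0}=0$, and I propagate the recursion from $t_0$ onward. This gives
\[
\bE\big|l(\htheta_T,z)-l(\htheta^{\backslash i}_T,z)\big|\le \frac{t_0}{m-1}+L(z)\,\bE\big[\delta_T\mid \delta_{t_0}=0\big].
\]

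Next I would solve the growth recursion for $\Delta_t:=\bE[\delta_t\mid\delta_{t_0}=0]$ with initial condition $\Delta_{t_0}=0$. Combining the same-example case (expansion by $1+\eta_t\alpha$) with the differing-example case (triangle inequality gives additive growth $\eta_t(L(z_i)+L)$ from the two gradient norms, weighted by the selection probability) and using $\eta_t\le c/t$ together with $(1-p)(1+\eta_t\alpha)+p\le 1+\eta_t\alpha$ yields
\[
\Delta_{t+1}\le\Big(1+\frac{c\alpha}{t}\Big)\Delta_t+\frac{c\,(L(z_i)+L)}{(m-1)\,t}.
\]
Unrolling with $\prod_{s=t+1}^{T}(1+c\alpha/s)\le (T/t)^{c\alpha}$ and $\sum_{t>t_0}t^{-c\alpha-1}\le t_0^{-c\alpha}/(c\alpha)$ produces $\Delta_T\le \tfrac{(L(z_i)+L)}{(m-1)\alpha}\,T^{c\alpha}\,t_0^{-c\alpha}$.

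Finally I would substitute this into the split bound and minimize
\[
\frac{t_0}{m-1}+\frac{L(z)(L(z_i)+L)}{(m-1)\alpha}\,T^{c\alpha}\,t_0^{-c\alpha}
\]
over $t_0>0$. The optimizer is $t_0^\star=\big(c\,L(z)(L(z_i)+L)T^{c\alpha}\big)^{1/(c\alpha+1)}$, and plugging it back collapses the two terms (using $c^{-c\alpha/(c\alpha+1)}=c^{1/(c\alpha+1)}/c$ and $(c\alpha+1)/(c\alpha)=1+1/(\alpha c)$) into exactly $\tfrac{1+1/(\alpha c)}{m-1}\,[c(L(z_i)+L)L(z)]^{1/(\alpha c+1)}T^{\alpha c/(\alpha c+1)}$, the claimed bound. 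I expect the delicate points to be the coupling bookkeeping for the removal (rather than replacement) perturbation — which forces the two-phase construction to compare trajectories driven by index sets of different sizes while tracking $L(z_i)$ for the specific removed point — and the verification that on the cutoff event the expansivity recursion propagates unconditionally from $t_0$; the final optimization over $t_0$ is then a routine calculus step.
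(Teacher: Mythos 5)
Your proof is correct and follows essentially the same route as the paper's: the same two-phase coupling for the removal perturbation, the same burn-in split at $t_0$ (the paper invokes Lemma 3.11 of Hardt et al.\ with $L$ replaced by $L(z)$), the same $(1+\eta_t\alpha)$-expansivity recursion with additive term $\eta_t(L(z_i)+L)$ weighted by the per-step collision probability, and the same unrolling followed by optimization over $t_0$. The only cosmetic difference is that you bound the collision probability $1/m$ by $1/(m-1)$ from the start, whereas the paper carries $1/m$ through its recursion and simply states the slightly weaker $1/(m-1)$ bound at the end; both are valid and collapse to the claimed constant.
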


\begin{proof} 
By Lemma 3.11 in \citet{hardt2015train}, for every $t_0\in\{1,\cdots,T\}$ and switch $L$ to $L(z)$, we have
$$|\bE[l(\hat{\theta}_T,z)]-\bE[l(\hat{\theta}^{\backslash i}_T,z)]|\leq \frac{t_0}{m}+L(z)\bE[\delta_T|\delta_{t_0}=0].$$
Let $\Delta_t=\bE[\delta_t|\delta_{t_0}=0]$. By applying Lemma 3.7 and method in the Theorem 3.12  in \citet{hardt2015train}, combining the fact regarding boundedness of the gradient -- $\|\nabla_\theta l(\hat{\theta}_t,z_i)\|\leq L(z_i) $, $\sup_{j\neq i}\|\nabla_\theta l(\hat{\theta}_t,z_j)\|\leq L$, we have 

\begin{align*}
\Delta_{t+1}&\leq\left(1-\frac{1}{m}\right)(1+\eta_t\alpha)\Delta_t+\frac{1}{m}\Delta_t+\frac{\eta_t(L+L(z_i))}{m}\\
&\leq \left(\frac{1}{m}+(1-1/m)(1+c\alpha/t)\right)\Delta_t+\frac{c(L+L(z_i))}{tm}\\
&=\left(1+(1-1/m)\frac{c\alpha}{t}\right)\Delta_t+\frac{c(L+L(z_i))}{tm}\\
&\leq \exp((1-1/m)\frac{c\alpha}{t})\Delta_t+\frac{c(L+L(z_i))}{tm}
\end{align*}

By the fact $\Delta_0=0$, we can unwind this recurrence relation from $T$ down to $t_0+1$, it is easy to obtain
$$|\bE[l(\hat{\theta}_T,z)]-\bE[l(\hat{\theta}^{\backslash i}_T,z)]|\leq \frac{1+1/(\alpha c)}{m-1}[c(L(z_i)+L)L(z)]^{\frac{1}{\alpha c +1}}T^{\frac{\alpha c}{\alpha c+1}}.$$
\end{proof}

\section{More about Experiments}
\label{sec:experiments}

\begin{figure}[t]
\vskip -0.1in
\begin{center}
\centerline{\includegraphics[width=4in]{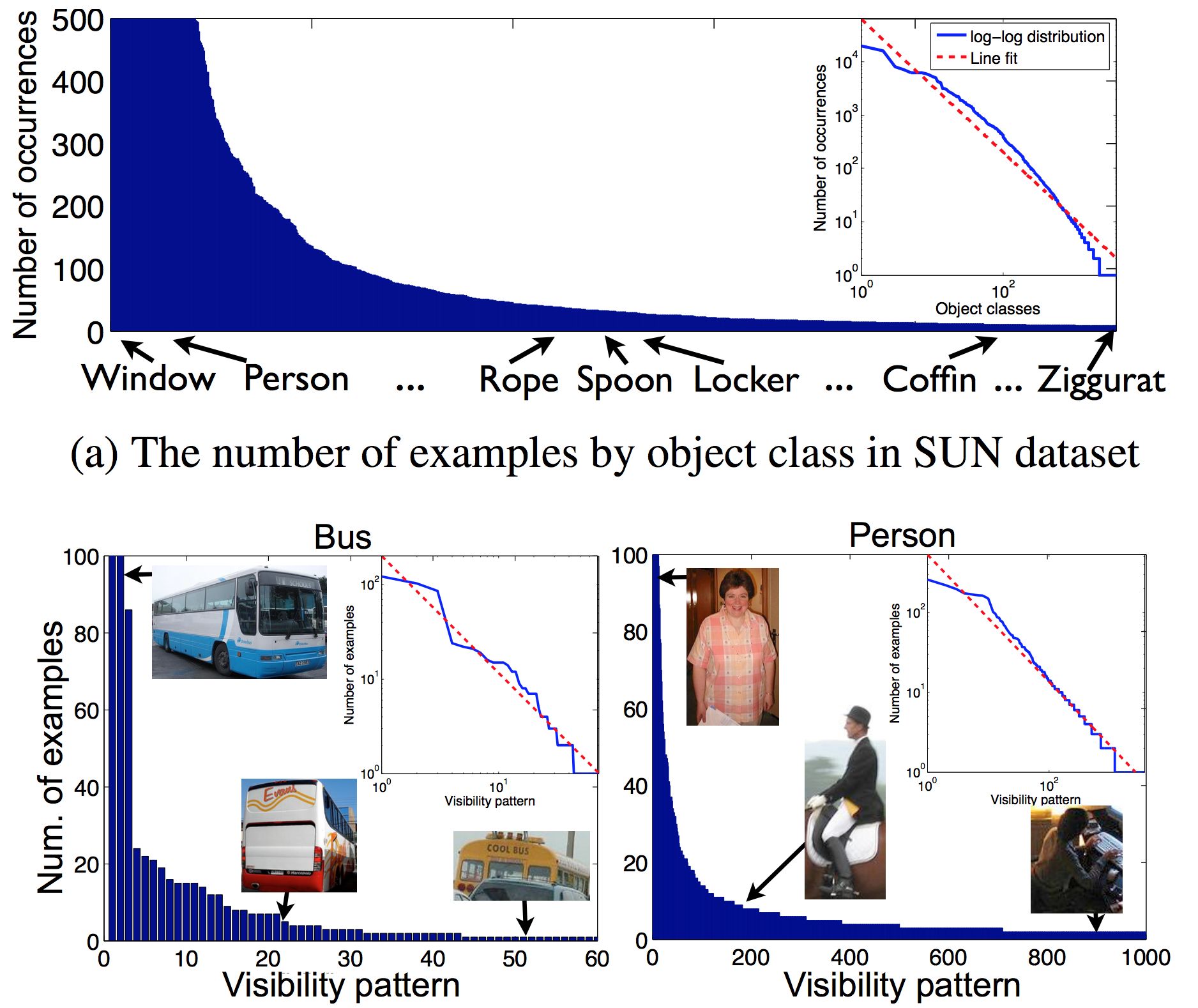}}
\caption{Long tail empirical distribution of classes and subpopulations within classes, taken from \cite{zhu2014capturing} with the authors' permission. }\label{fig:longtail}
\end{center}
\vskip -0.2in
\end{figure}

\begin{figure*}[t]
\centering
		 \subfigure[Sensitivity of neural networks.]{			\centering			\includegraphics[scale=0.33]{Figs/test_train_absolute_influence_100_100.png}
		\label{fig:NNs-IF-appx}}
		\subfigure[Sensitivity of a random feature model.]{
			\centering
			\includegraphics[scale=0.33]{Figs/test_train_absolute_influence_random_feature_100_100.png}
			\label{fig:RF-IF-appx}}
      \subfigure[Sensitivity of a linear model.]{
			\centering
		\includegraphics[scale=0.33]{Figs/test_train_absolute_influence_linear_100_100_60_1e-07.png}
		\label{fig:Linear-IF-appx}}
		\caption{Class-level sensitivity approximated by influence functions for 
		neural networks (based on a pre-trained $18$-layer ResNet), a random feature model (based on a randomly initialized $18$-layer ResNet),
		and a linear model on CIFAR-10. The vertical axis denotes the classes in the test data and the horizontal  axis denotes the classes in the training data. The class-level sensitivity from class $a$ in the training data to class $b$ in the test data is defined as $C(c_a, c_b) = \frac{1}{|S_a| \times |\tilde{S}_b|}\sum_{z_i \in S_a} \sum_{z \in \tilde{S}_b}|l(\hat{\theta},z)-l(\hat{\theta}^{\backslash i},z)|$, where $S_a$ denotes the set of examples from class $a$ in the training data and $\tilde{S}_b$ denotes set of examples from class $b$ in the test data. 
		}
		\label{fig:IF-results-appx}
\end{figure*}

\begin{figure*}[t]
		\centering
		 \subfigure[Influence for neural networks.]{
			\centering
			\includegraphics[scale=0.33]{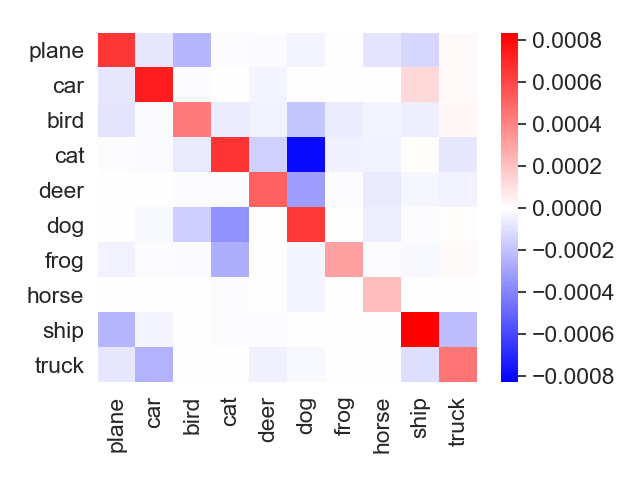}
			\label{fig:NNs-IF-sign-appx}}
		 \subfigure[Influence for the random feature model.]{
			\centering
			\includegraphics[scale=0.33]{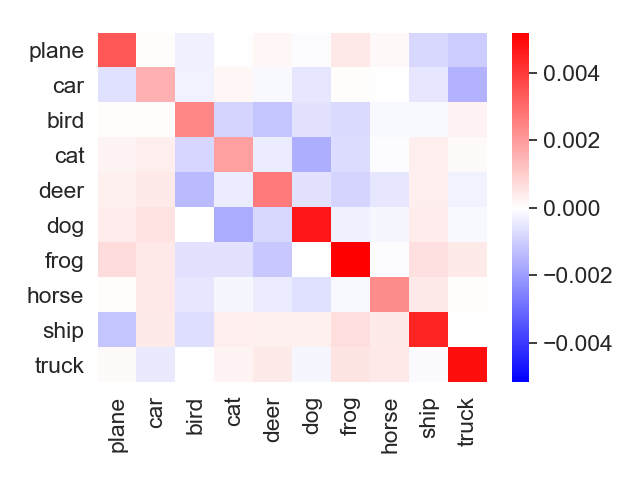}
			\label{fig:RF-IF-sign-appx}}
        \subfigure[Influence for the linear model.]{
			\centering
			\includegraphics[scale=0.33]{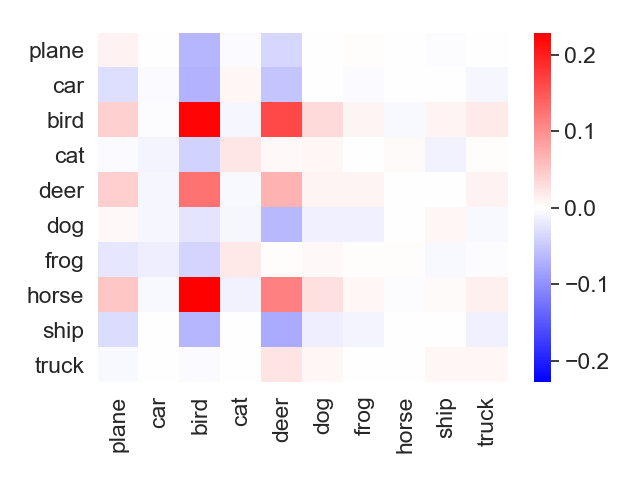}
			\label{fig:Linear-IF-sign-appx}}
		\caption{Class-level sensitivity approximated by influence function for 
		neural networks (based on a pre-trained $18$-layer ResNet), a random feature model (based on a randomly initialized $18$-layer ResNet),
		and a linear model on CIFAR-10. Note that the sensitivity here is based on sign values ($l(\hat{\theta},z)-l(\hat{\theta}^{\backslash i},z)$) instead of absolute values ($|l(\hat{\theta},z)-l(\hat{\theta}^{\backslash i},z)|$) as in Eq. (\ref{eq:IF}). } 
		\label{fig:IF-sign-results-appx}
\end{figure*}

\begin{figure*}[t]
		\centering
		 \subfigure[Neural networks (epoch $0$).]{
			\centering
			\includegraphics[scale=0.3]{Figs/test_train_absolute_influence_100_100_0.png}
			\label{fig:NNs-SGD-0-appx}}
        \subfigure[Neural networks (epoch $10$).]{
			\centering
			\includegraphics[scale=0.3]{Figs/test_train_absolute_influence_100_100_10.png}
			\label{fig:NNs-SGD-10-appx}}
		\subfigure[Neural networks (epoch $50$).]{
			\centering
			\includegraphics[scale=0.3]{Figs/test_train_absolute_influence_100_100_50.png}
			\label{fig:NNs-SGD-50-appx}} \\
		\subfigure[Random feature model (epoch $0$).]{
			\centering
			\includegraphics[scale=0.3]{Figs/test_train_absolute_influence_random_feature_100_100_0.png}
			\label{fig:RF-SGD-0-appx}}
        \subfigure[Random feature model (epoch $50$).]{
			\centering
			\includegraphics[scale=0.3]{Figs/test_train_absolute_influence_random_feature_100_100_50.png}
			\label{fig:RF-SGD-50-appx}}
		\subfigure[Random feature model (epoch $250$).]{
			\centering
			\includegraphics[scale=0.3]{Figs/test_train_absolute_influence_random_feature_100_100_250.png}
			\label{fig:RF-SGD-250-appx}} \\
		\subfigure[Linear model (epoch $0$).]{
			\centering
			\includegraphics[scale=0.3]{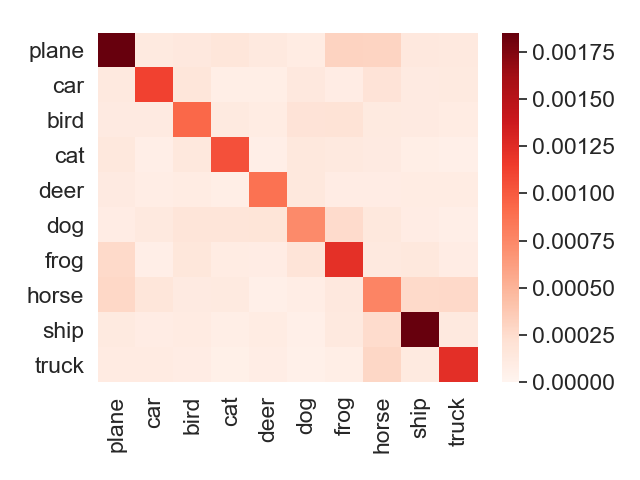}
			\label{fig:Linear-SGD-0-appx}}
        \subfigure[Linear model (epoch $10$).]{
			\centering
			\includegraphics[scale=0.3]{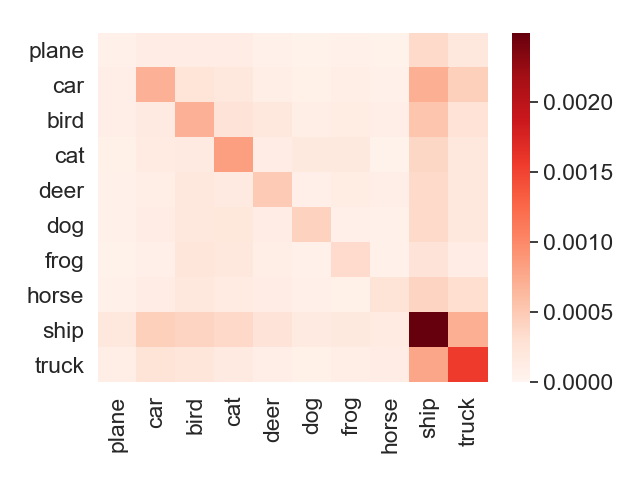}
			\label{fig:Linear-SGD-10-appx}}
		\subfigure[Linear model (epoch $50$).]{
			\centering
			\includegraphics[scale=0.3]{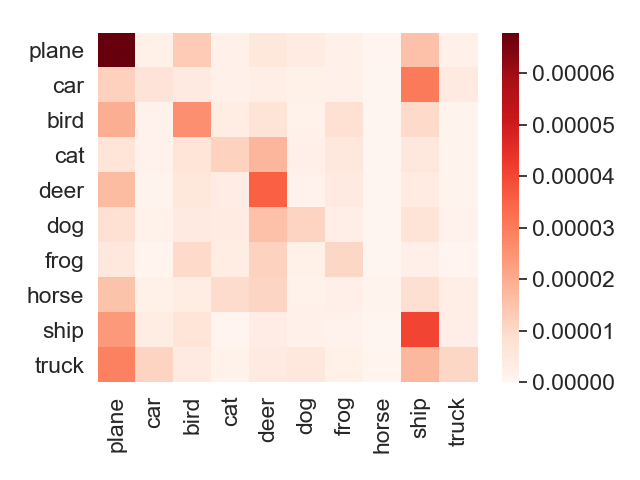}
			\label{fig:Linear-SGD-50-appx}}
		\caption{Exact stepwise characterization of class-level sensitivity for neural networks, random feature models, and linear models trained with different numbers of epochs by SGD on CIFAR-10. 
		The class-level sensitivity for a stepwise update of SGD is $C'(c_a, c_b) = \frac{1}{|S_a|\cdot |\tilde{S}_b|}\sum_{z_i \in S_a} \sum_{z \in \tilde{S}_b}|l(\htheta_t-\eta\nabla_\theta l(\htheta_t,z_i),z)-l(\htheta_t,z)|$, where $S_a$ denotes the set of examples with class $a$ in the training data and $\tilde{S}_b$ denotes the set of examples with class $b$ in the test data. 
		}
		\label{fig:SGD-results-appx}
\end{figure*}

\begin{figure*}[t]
		\centering
		 \subfigure[Neural networks (epoch $0$).]{
			\centering
			\includegraphics[scale=0.3]{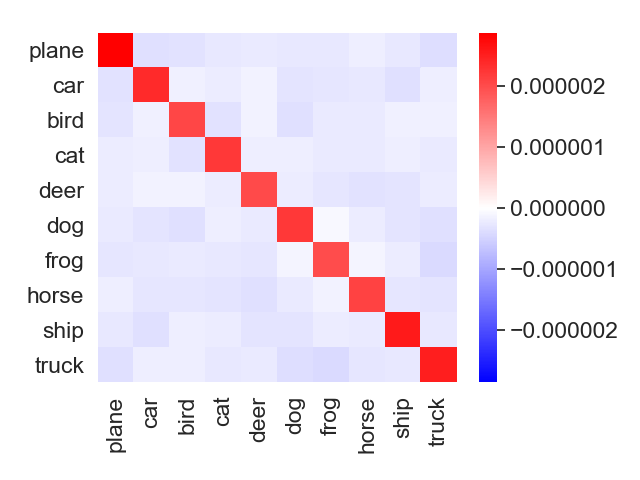}
			\label{fig:NNs-SGD-0-sign-appx}}
        \subfigure[Neural networks (epoch $10$).]{
			\centering
			\includegraphics[scale=0.3]{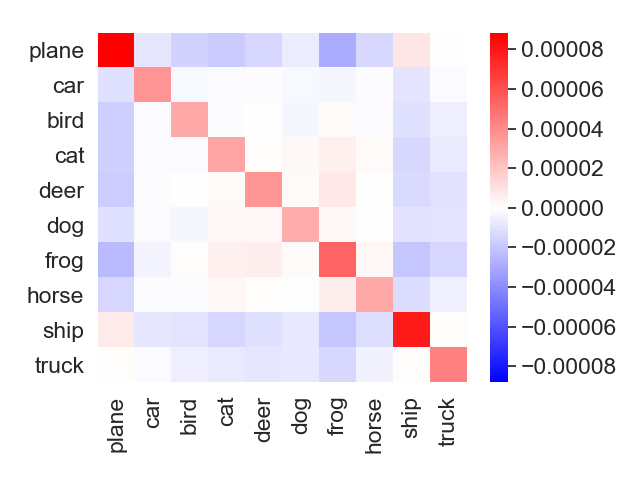}
			\label{fig:NNs-SGD-10-sign-appx}}
		\subfigure[Neural networks (epoch $50$).]{
			\centering
			\includegraphics[scale=0.3]{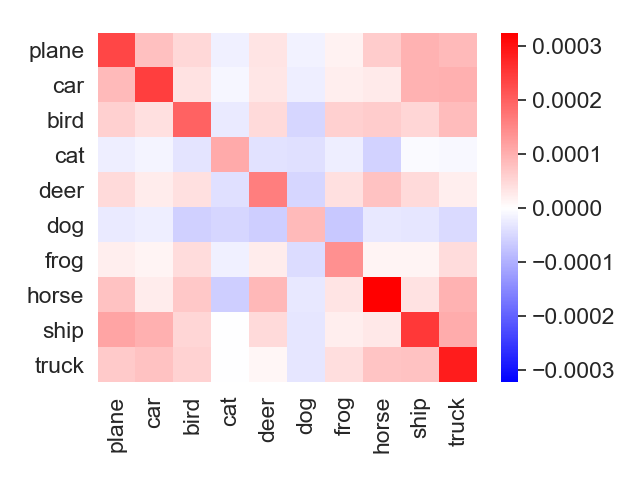}
			\label{fig:NNs-SGD-50-sign-appx}} \\
		\subfigure[Random feature model (epoch $0$).]{
			\centering
			\includegraphics[scale=0.3]{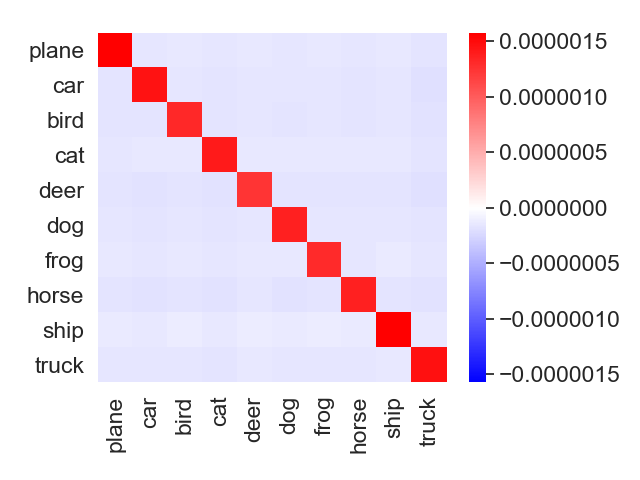}
			\label{fig:RF-SGD-0-sign-appx}}
        \subfigure[Random feature model (epoch $50$).]{
			\centering
			\includegraphics[scale=0.3]{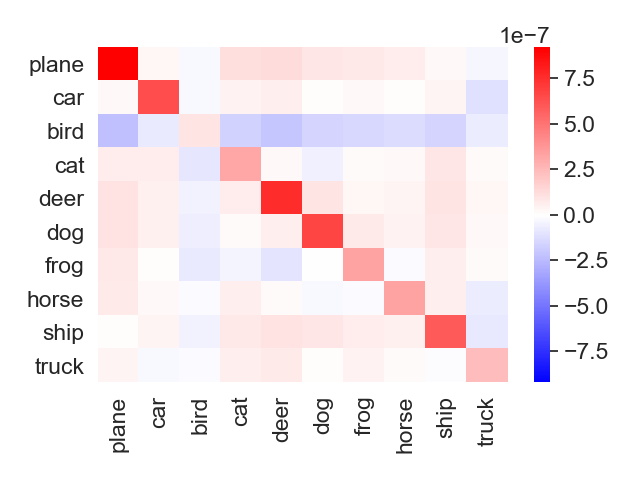}
			\label{fig:RF-SGD-50-sign-appx}}
		\subfigure[Random feature model (epoch $250$).]{
			\centering
			\includegraphics[scale=0.3]{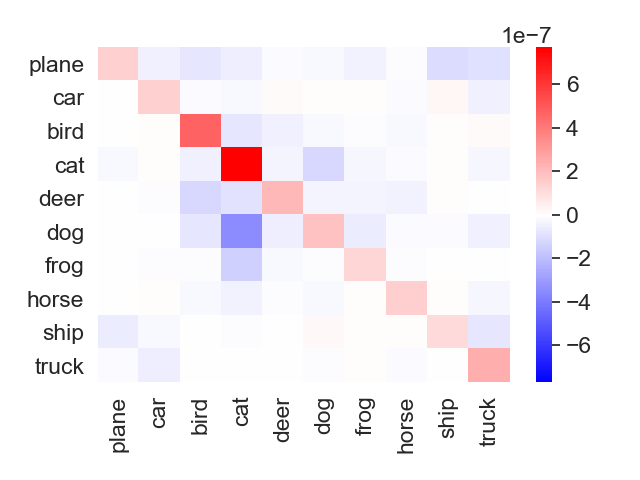}
			\label{fig:RF-SGD-250-sign-appx}} \\
		\subfigure[Linear model (epoch $0$).]{
			\centering
			\includegraphics[scale=0.3]{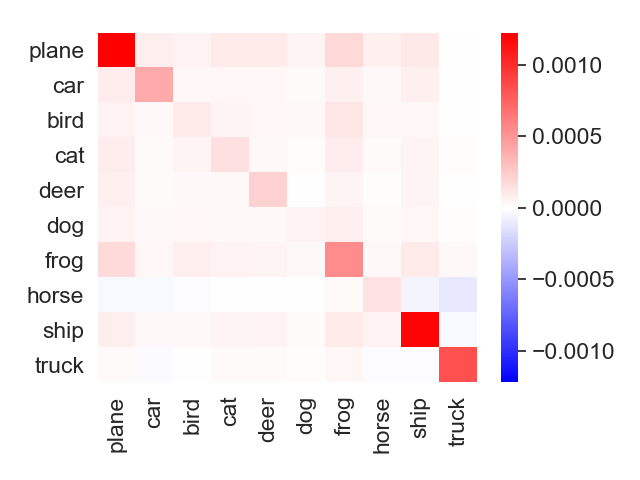}
			\label{fig:Linear-SGD-0-sign-appx}}
        \subfigure[Linear model (epoch $10$).]{
			\centering
			\includegraphics[scale=0.3]{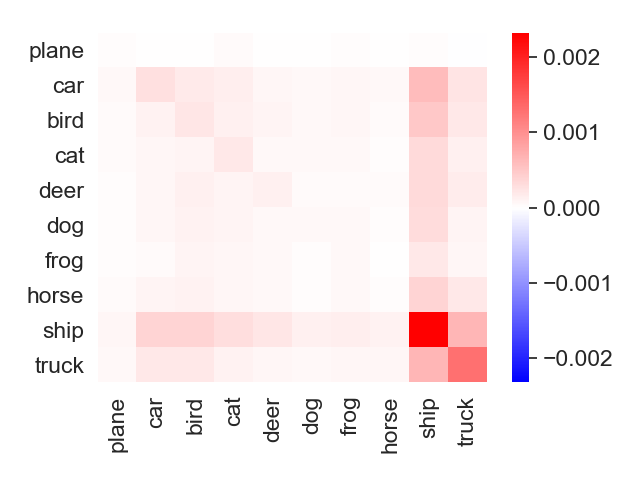}
			\label{fig:Linear-SGD-10-sign-appx}}
		\subfigure[Linear model (epoch $50$).]{
			\centering
			\includegraphics[scale=0.3]{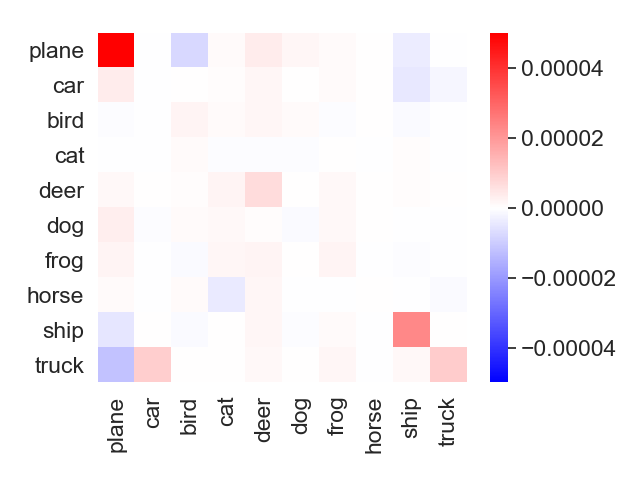}
			\label{fig:Linear-SGD-50-sign-appx}}
		\caption{Exact step-wise characterization of class-level sensitivity for neural networks, random feature models, and linear models trained with different numbers of epochs by SGD on CIFAR-10.  Note that the sensitivity here is based on sign values ($l(\hat{\theta},z)-l(\hat{\theta}^{\backslash i},z)$) instead of absolute values ($|l(\hat{\theta},z)-l(\hat{\theta}^{\backslash i},z)|$) as in Eq. (\ref{eq:IF}).}
		\label{fig:SGD-sign-results-appx}
\end{figure*}

To verify the effectiveness of our proposed locally elastic stability, we conduct experiments on the real-world CIFAR-10 dataset. In our experiments, we randomly choose $100$ examples per image class ($1000$ examples in total) for both training and test data. For the neural network model, we consider an $18$-layer ResNet and use its pytorch implementation\footnote{More details are in \url{https://pytorch.org/docs/stable/torchvision/models.html}.} for our experiments. For the random feature model, we use the same $18$-layer ResNet (with randomly initialized weights) to extract random features and only train the last layer. As for the loss function, we use the cross-entropy loss for linear models, random feature models, and neural networks. Furthermore, we analyze locally elastic stability in two settings: locally elastic stability for the whole algorithm and locally elastic stability for a step-wise update of SGD.  

{\bf Locally elastic stability via influence functions.} As shown in Sec. \ref{subsec:IF}, we use influence functions to estimate the quantity $|l(\hat{\theta},z)-l(\hat{\theta}^{\backslash i},z)|$ for all $i$'s in Eq. (\ref{eq:IF}). Similar to \citet{koh2017understanding}, we compared the ResNet-18 with all but the top layer frozen\footnote{We pre-train the model on the whole CIFAR-10 dataset first and keep the pre-trained weights.}, and a random feature model based on a randomly initialized ResNet-18 to a linear model in our experiments. In the experiments for locally elastic stability via influence functions, we add the $\ell_2$ regularization ($\frac{\lambda \|\theta\|^2}{2}$) with $\lambda=1e^{-7}$. We train the last layer (randomly initialized) of the ResNet-18, the random feature model, and the linear model using Adam \citep{kingma2015adam} with learning rate $3e^{-4}$ for $50$ epochs, learning rate $1.0$ for $500$ epochs\footnote{It is worthwhile to note that it is hard for the random feature model, especially based on large neural networks, to converge. For the random feature model, we also use a widely-used learning rate decay, where the initial learning rate is annealed by a factor of $10$ at $1/3$ and $2/3$ during training.}, and learning rate $3e^{-4}$ for $60$ epochs each, and the mini-batch sizes are $50$, $20$, $50$ respectively. The training accuracy for the ResNet-18, the random feature model, and the linear model is $99.3\%$, $94.7\%$, and $94.7\%$, and the test accuracy for them is $93.1\%$, $29.8\%$, and $27.3\%$. The class-level sensitivity approximated by influence function for the neural networks, the random feature model, and the linear model on CIFAR-10 is shown in Fig. \ref{fig:IF-results-appx}. Furthermore, we also consider the influence based on sign values, $l(\hat{\theta},z)-l(\hat{\theta}^{\backslash i},z)$ instead of absolute values $|l(\hat{\theta},z)-l(\hat{\theta}^{\backslash i},z)|$ in Eq. (\ref{eq:IF}), and the corresponding class-level sensitivity is shown in Fig. \ref{fig:IF-sign-results-appx}. 

{\bf Stepwise characterization of locally elastic stability.} To provide the stepwise characterization of locally elastic stability, we consider the trained parameters of SGD with different number of training epochs.  Note that we didn't make any approximation for the experiments in this part. In the training stage, we train the ResNet-18\footnote{Note that we remove the batch normalization for the experiments in step-wise characterization of locally elastic stability for SGD (only in this part).}, the random feature model, and the linear model using SGD with learning rate $0.05$, $1.0$, and $0.3$ separately, and the mini-batch sizes are $50$, $20$, $50$ respectively. The training accuracy (test accuracy) for the ResNet-18 at epoch $0$, $10$, $50$, $100$ are $10.3\%$ ($10.6\%$), $22\%$ ($20.2\%$), $37.6\%$ ($24.3\%$), and $99.9\%$ ($38.9\%$). Similarly, the training accuracy (test accuracy) for the random feature model\footnote{Because it is hard for the random feature model to converge, we use the Adam optimizer and a widely-used learning rate decay for the random feature model, where the initial learning rate is annealed by a factor of $10$ at $1/3$ and $2/3$ during training.} at epoch $0$, $50$, $250$ are $10.3\%$ ($10.6\%$), $63.1\%$ ($28.0\%$), and $90.2\%$ ($30.1\%$).
Similarly, the training accuracy (test accuracy) for the linear model at epoch $0$, $10$, $50$, $100$ are $9.6\%$ ($7.6\%$), $53.4\%$ ($20.5\%$), $98.2\%$ ($22.7\%$), and $100\%$ ($23\%$). To compute the class-level sensitivity $C(c_a, c_b)$, we use the small probing learning rate $1e^{-6}$. The corresponding class-level sensitivity based on absolute values($|l(\hat{\theta},z)-l(\hat{\theta}^{\backslash i},z)|$) and sign values ($l(\hat{\theta},z)-l(\hat{\theta}^{\backslash i},z)$) are shown in Fig. \ref{fig:SGD-results-appx} and Fig.  \ref{fig:SGD-sign-results-appx}.  



{\bf Comparison among $M_\beta$, $\sup_{z'\in \cZ}\bE_{z} \beta(z', z)$ and $M_l$ on a $2$-layer NNs.} We consider a $2$-layer NNs in the following format: 
$$f(W, a, x) = \frac{1}{k}\sum_{r=1}^k a_r \sigma(W_r^Tx)$$
where $d$ is the input dimension, $k$ is the dimension of the hidden layer, and $\sigma$ is the ReLU activation function. As for the loss function, we use the square loss with $\ell_2$ regularization as follows:
$$L(W, a, x) = (f(W, a, x) - y)^2 + \frac{\lambda}{2}(\|W\|_2^2 + \|a\|_2^2)$$
In our experiments, the value of each dimension of $W$, $a$, $x$ is in $[-1, 1]$, and the value of $y$ is in $\{-1, 1\}$. As for the data distribution, each dimension of $x$ is sampled from a uniform distribution on $[-0.5, 1]$ for positive samples with label $y = 1$. Similarly, each dimension of $x$ is sampled from a uniform distribution on $[-1.0, 0.5]$ for negative samples with label $y=-1$. We randomly sample a total of $m = 10000$ examples equally from positive and negative data distribution for both training and test data. As for the hyper parameters, we use $d=10$, $k=100$, and $\lambda = 1e^{-6}$ in our experiments. We trained the $2$-layer NNs $50$ epochs with SGD on batches. The corresponding learning rate and batch size are $1.0$ and $100$.

In this setting, the upper bound of the loss function $M_l$ is $121.00055$ and $M_\beta = m\beta^{\textnormal{U}}_m = \sup_{z'\in S,z\in \cZ}\beta(z',z)$ estimated by the influence function as shown in Eq. (\ref{eq:IF}) is $3464.97$. We can see that  $M_\beta$ is about $29$ times of $M_l$. Similarly, $\sup_{z'\in \cZ}\bE_{z}\beta(z',z)$ estimated by the influence function is $22.91$. It indicates that $\sup_{z'\in \cZ}\bE_{z}\beta(z',z)$ in the locally elastic stability is smaller than $M_l$ and much smaller than $M_\beta = m\beta^{\textnormal{U}}_m = \sup_{z_j\in S,z\in \cZ}\beta(z,z_j)$.

{\bf Class-level locally elastic stability. } In this part,  we consider the case where the sensitivity from one class to another class is the maximum sensitivity instead of the mean sensitivity (in Fig. \ref{fig:IF-results-appx}) among the $100 \times 100$ pairs. In this setting, we have $\sup_{z_j\in S,z\in \cZ}\beta_m(z,z_j)=3.05$ and  $\sup_{z\in\cZ}\bE_{z_j}\beta_m(z,z_j)=0.73$ for the neural networks, and $\sup_{z_j\in S,z\in \cZ}\beta_m(z,z_j)=314$, $\sup_{z\in\cZ}\bE_{z_j}\beta_m(z,z_j)=210$ for the linear model. Furthermore, the maximum and the mean of the 
diagonal (off-diagonal) elements are $3.05$ and $1.02$ ($1.65$ and $0.21$) for the neural networks in this setting. Similarly, the maximum and the mean of the diagonal (off-diagonal) elements are $168$ and $77$ ($314$ and $82$) for the linear model.

\begin{table}[t]
\centering
\scalebox{0.95}{
\begin{tabular}{|c|c|c|c|c|c|c|}
\hline
Positive Examples & {\small Cat, Dog} & {\small Deer, Horse} & {\small Deer, Frog} & {\small Car, Cat} & {\small Plane, Cat } \\ \hline
Negative Examples & {\small Car, Truck} & {\small Car, Truck} & {\small Ship, Truck} & {\small Plane, Bird} & {\small Car, Truck} \\ \hline \hline
 
Between Finer Classes (sign) &0.03 &0.03 &0.02 & 0.06 & 0.08 \\ \hline 
Within Finer Classes (sign) & 0.05 &0.09 & 0.08 & 0.24 & 0.15 \\ \hline \hline
Between Finer Classes (absolute) & 1.52 & 1.90 & 2.05 & 4.50 & 3.19 \\ \hline
Within Finer Classes (absolute) & 1.53 & 1.92 & 2.09 & 4.66 & 3.23 \\ \hline
\end{tabular}}
\caption{A fine-grained analysis of the sensitivity within superclasses (within fine-grained classes or between fine-grained classes) for binary classification.}
\label{table:fine-grained}
\end{table}

{\bf Fine-grained analysis.} To better understand the locally elastic stability, we also provide some fine-grained analysis on the CIFAR-10 dataset. In this part, we consider binary classification on two superclasses, and each superclass is composed of two fine-grained classes. In the training data, we randomly sample $500$ examples for each fine-grained class ($2000$ examples in total). In the test data, we have $1000$ examples for each fine-grained class, so there are $4000$ examples in total. We repeat our experiments five times on different compositions of positive and negative examples as shown in Table \ref{table:fine-grained}. In this part, we still use ResNet-18 as our model, and the sensitivity is approximated by the influence function. We first train the ResNet-18 from scratch and then froze the weights except for the top layer. After that, we train the last layer (randomly initialized) of the ResNet-18, and the class-level sensitivity is shown in Table \ref{table:fine-grained}. \textit{The results show that examples within fine-grained classes (within the same superclass) have stronger sensitivity than examples between fine-grained classes (within the same superclass) for neural networks.}




\end{document}